\documentclass[11pt, letterpaper]{article}
 \usepackage[margin=1.15in]{geometry}
\usepackage{enumitem}
\usepackage{needspace}
\usepackage{authblk}
\usepackage{tikz}
\usepackage{xcolor}
\usepackage[most]{tcolorbox}
\usepackage[dvipsnames]{xcolor}
\usetikzlibrary{positioning,decorations.pathreplacing,calc}
\usepackage{amsmath, amssymb, amsthm, amsfonts}
\usepackage{graphicx}
\usepackage{tikz}
\usepackage{pdfpages}
\usepackage{caption}
\usepackage{float}
\usepackage{dsfont}
\usepackage[english]{babel}
\usepackage{soul}
\usepackage{mathtools}
\usepackage{thmtools}
\usepackage{thm-restate}

\newtheorem{theorem}{Theorem}[section]
\newtheorem{lemma}[theorem]{Lemma}

\newtheorem{corollary}[theorem]{Corollary}
\newtheorem{proposition}[theorem]{Proposition}
\newtheorem{definition}[theorem]{Definition}
\newtheorem{remark}[theorem]{Remark}

\numberwithin{equation}{section}
\usepackage[hidelinks]{hyperref} 
\usepackage{pdfcomment}
\usepackage{cleveref}

\title{Sharp Low-Degree Thresholds for Planted-vs-Planted Testing}

\date{}

\newcommand{\Adv}{\mathsf{Adv}}
\newcommand{\Corr}{\mathsf{Corr}}

\newcommand{\cJ}{\mathcal{J}}
\newcommand{\cI}{\mathcal{I}}

\newcommand{\E}{\mathbb{E}}
\newcommand{\PP}{\mathbb{P}}
\newcommand{\QQ}{\mathbb{Q}}
\newcommand{\R}{\mathbb{R}}
\newcommand{\U}{\mathcal{U}}
\newcommand{\goodI}{\widehat{\mathcal{I}}}
\newcommand{\goodJ}{\widehat{\mathcal{J}}}
\newcommand{\cU}{\mathcal{U}}

\begin{document}

\author[1]{\mbox{Anda Skeja}}
\author[2]{Daniel Gutiérrez Espinoza}
\author[3]{Fiona Skerman}
\author[4]{Alexander S.\ Wein}
\affil[1,2,3]{%
Department of Mathematics, Uppsala University\\
}
\affil[4]{%
Department of Mathematics, University of California, Davis
}

\maketitle

\renewcommand{\thefootnote}{}

\footnotetext{Emails: \(\{\)\textit{anda.skeja, daniel.gutierrez\_espinoza, fiona.skerman\(\}\)@math.uu.se;
aswein@ucdavis.edu.}}

\renewcommand{\thefootnote}{\arabic{footnote}}
\begin{abstract}%
We establish the first sharp thresholds for low-degree polynomial tests in planted-vs-planted settings, where the goal is to determine with vanishing error which of two structured planted mechanisms generated the observed data. We prove matching low-degree upper and lower bounds for counting communities in the planted submatrix and planted dense subgraph models. The resulting testing threshold coincides, down to the sharp constant, with the known low-degree recovery threshold. In contrast, the task of weak testing, where the goal is to outperform random guessing, does not have a sharp threshold but rather a smooth transition, which we identify.
To prove our results, we develop a framework for planted-vs-planted testing that builds on a latent-variable expansion originating in low-degree recovery and employs new methods to identify and prune non-signal contributions.
\end{abstract}

\paragraph{\emph{Keywords.}}
Planted-vs-planted testing, complex testing, sharp thresholds, low-degree method, strong testing, weak testing, planted submatrix, planted dense subgraph.

\section{Introduction}
A widespread phenomenon in high-dimensional inference is the presence of sharp computational phase transitions, in which the inference task shifts from computationally ``easy'' to ``hard'' as the signal strength crosses a critical threshold. A notable example is the \emph{Kesten--Stigum (KS) threshold} in the stochastic block model (SBM)~\cite{decelle-2,AS-general}. When the \emph{signal-to-noise ratio (SNR)} --- a certain function of model parameters --- lies above the so-called KS threshold, the task of inferring (with nontrivial accuracy) the community structure from the graph is ``easy'' in the sense that a polynomial-time algorithm is known. On the other hand, when the SNR lies below the KS threshold, there is no known polynomial-time algorithm, and various heuristics indicate that inference is fundamentally ``hard'', at least for certain classes of algorithms~\cite{decelle-2,HS-bayesian,spectral-planting,sohn2025sharp,DHSS-recovery}. Similar computational thresholds occur in many other statistical models and, in general, may differ from the \emph{statistical threshold} at which \emph{some} algorithm (with unconstrained runtime) succeeds.

There has been sustained interest in pinning down the precise location of these sharp computational thresholds across various models. By \emph{sharp}, we mean that the required runtime changes abruptly when a natural SNR parameter crosses a threshold, and we aim for results that identify the threshold \emph{exactly} (not just up to constant factors). Establishing a sharp threshold result requires two parts: a positive algorithmic result above the threshold and a hardness result below it. While our focus will be on sharp thresholds, we note that not all computational thresholds are sharp: some problems, such as planted clique~\cite{alon-clique} and tensor PCA (see~\cite{smooth-tensor}), exhibit a smooth trade-off between SNR and runtime.  Except in the special case where the statistical and computational thresholds coincide, we do not currently have tools to definitively prove hardness for statistical problems in a complexity-theoretic sense (due to the \emph{average-case} nature of these problems). Instead, a common approach is to show that some restricted class of known algorithms must fail. Two common frameworks are \emph{(i)} methods associated with statistical physics that revolve around analyzing the \emph{belief propagation (BP)} or \emph{approximate message passing (AMP)} algorithms (see~\cite{phys-survey}), and \emph{(ii)} algorithms based on low-degree polynomials (see~\cite{LD-notes,wein2025computational}). There are also other frameworks for average-case complexity, but we focus here on those with a track record of establishing \emph{sharp} thresholds. In the case of the KS threshold for the SBM, the sharp phase transition was first predicted using physics methods~\cite{decelle-1,decelle-2} and later corroborated in the low-degree polynomial model~\cite{HS-bayesian,spectral-planting,sohn2025sharp,DHSS-recovery}.

To complicate things further, there are a few different objectives one might study. \emph{Detection} is the task of hypothesis testing between two different distributions, usually a ``planted'' distribution that contains some hidden structure and a ``null'' distribution that does not. For the SBM, the planted model is a random graph with communities, and the null model is typically an Erd\H{o}s--R\'enyi graph with the same average edge density. \emph{Recovery} is the task of finding the hidden structure, or estimating it to some desired accuracy. The computational thresholds for detection and recovery need not be the same in general. In the SBM, for instance, these two thresholds\footnote{More precisely, ``strong'' (high-probability) detection and ``weak'' (nontrivial) recovery coincide.} coincide for the standard variant where the number of communities is held constant, but detection becomes easier than recovery when the number of communities grows with~$n$ (see~\cite{sbm-many,sbm-many-2}). 

Our focus in this work is on planted-vs-planted testing:  distinguishing between two complex distributions, each with a different type of planted structure. 
Existing results for such problems~\cite{rush2023easier,coloring-clique,fourier-geo,carpentier2025low} are too coarse to establish the type of sharp thresholds we are interested in here. Two questions follow: can planted-vs-planted testing problems exhibit sharp thresholds, pinned down to the leading constant, and is there a systematic framework for proving them? Statistical-physics methods do not seem directly suited to this setting, as they are primarily tailored to recovery. This leaves the low-degree framework---which can handle both testing and recovery---as the natural candidate. We answer both questions affirmatively. We develop a low-degree certificate framework for planted-vs-planted testing and use it to establish the first sharp low-degree thresholds in this setting. For planted models whose observations are conditionally independent given the latent structure, the framework reduces the low-degree lower bound to a linear certificate problem supported on informative subgraphs.

\subsection{Main contributions}
We establish sharp low-degree thresholds for ``counting communities'': the problem of testing whether the data contain $\ell$ or $\ell'$ planted structures, in both the planted submatrix model (PSM) and the planted dense subgraph model (PDS). Here, $\ell\neq\ell'$ are arbitrary fixed positive integers. This testing problem was introduced in~\cite{rush2023easier}, where thresholds were determined up to polylogarithmic factors. 

Our results sharpen this picture to the level of sharp leading constants by developing a general low-degree certificate framework for planted-vs-planted testing. For this problem, our framework yields the exact leading constant for the low-degree strong-testing threshold in the stated sparse regimes, and identifies the weak-testing scale. Here, \emph{strong} testing means testing with vanishing error probability, while \emph{weak} testing means achieving testing power bounded away from that of random guessing. A connection between planted-vs-planted testing and recovery was previously established in \cite{rush2023easier}, where approximate recovery in the one-community planted submatrix model was shown to imply strong testing between the $1$- and $2$-planted models (up to a factor of 2, which we expect can be improved to 1). This left open whether testing might be strictly easier than recovery, and whether the testing threshold depends on the pair $(\ell,\ell')$. For example, distinguishing $1$ from $100$ planted communities might seem easier than distinguishing $99$ from $100$. Our results show that this is not the case at the level of sharp low-degree thresholds: in both models, for every fixed distinct pair $\ell,\ell'$, the strong-testing threshold is the same and matches the corresponding sharp low-degree recovery threshold~\cite{sohn2025sharp}.

\medskip
Briefly, the $\ell$-planted submatrix model is generated as follows. The parameter $\rho$ controls the sparsity of the planted vertices, while $\lambda$ controls the signal strength. For each index~$i\in[n]$, independently assign label $c\in[\ell]$ with probability $\rho/\ell$ and label~$0$ otherwise. Conditioned on these labels, for all $1\le i\le j\le n$, sample $Y_{ij}\sim N(\ell\lambda,1)$ if $i$ and $j$ lie in the same nonzero community, and $Y_{ij}\sim N(0,1)$ otherwise, with $Y_{ji}=Y_{ij}$.

Our low-degree lower bounds are proved by controlling the degree-$D$ advantage
\begin{equation}\label{eq.adv_def}\Adv_{\le D}(\PP,\QQ)
:=\sup_{\deg(f)\le D}
\frac{\E_{\PP}[f(Y)]}{\sqrt{\E_{\QQ}[f(Y)^2]}} .
\end{equation}
Bounds of the form $\Adv_{\le D}(\PP,\QQ)=O(1)$ rule out degree-$D$ strong separation, while bounds $\Adv_{\le D}(\PP,\QQ)=1+o(1)$ rule out degree-$D$ weak separation; see \Cref{lem:separation-weak-strong}. The matching upper bounds are given by explicit low-degree separating polynomials.
\subsubsection{Results}We give separate strong- and weak-testing theorems for both PSM and PDS. For readability, we state the PSM results here; the corresponding PDS statements appear in Theorems~\ref{thm:strong_PDS} and~\ref{thm:Weak_PDS}. See also Figure~\ref{fig:phases_PSM}, which depicts the phase diagram for PSM. Throughout, $n\to\infty$, with $\ell,\ell'$ fixed and all other parameters allowed to depend on~$n$. 
\paragraph*{Strong and Weak Testing Results for the Planted Submatrix Model}\begin{restatable}[Strong testing: PSM]{theorem}{thmPSMstrong}
\label{thm:strong_PSM}
Given parameters $n,\ell,\ell',\rho,\lambda$, with 
$\ell,\ell'$ fixed distinct positive integers, $\rho\in(0,1)$, and
$\lambda>0$, define
$\QQ:=\mathbb{P}_{\mathrm{PSM}}(n,\ell,\rho,\lambda)$ and
$\PP:=\mathbb{P}_{\mathrm{PSM}}(n,\ell',\rho,\lambda)$. For any constant $\varepsilon>0$, there exists a constant $C_0\equiv C_0(\ell,\ell',\varepsilon)>0$ such that the following hold. 
\begin{enumerate}
  \item[(i)] \emph{(Lower bound)}. If 
  $$\lambda \leq (1-\varepsilon)\Big(\rho\sqrt{en}\Big)^{-1}, \qquad D\leq \lambda^{-2}/C_0, \qquad \rho=o(1)$$
  then
  $\Adv_{\leq D}(\PP,\QQ)=O(1).$

  \item[(ii)] \emph{(Upper bound)}. If 
  $$\lambda \geq (1+\varepsilon)\Big(\rho\sqrt{en}\Big)^{-1}, \quad \!\!\! n\rho=\omega(\log^7n), \qquad\!\!\!
\rho=o(\log^{-7}n)
  $$ then there exists a polynomial $f$ of degree at most $C_0\log n$ that strongly separates $\PP$ and $\QQ$.
\end{enumerate}
\end{restatable}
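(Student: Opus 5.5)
We prove the two parts separately: a low-degree lower bound through a latent-variable (cumulant) expansion, and an explicit separating polynomial built from self-avoiding paths.

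\textbf{Lower bound (i).} The null $\QQ$ is itself planted, so the Hermite basis is not orthonormal under $\QQ$. We therefore bound the advantage by the low-degree recovery ("correlation") approach. Let $(h_\alpha)$ be Hermite polynomials indexed by edge multisets $\alpha$ of size at most $D$. The first step is to write $\E_\PP[f]-\E_\QQ[f]$ and $\E_\QQ[f^2]$ in terms of coefficients $\hat f_\alpha$. Since $\E_\PP[h_\alpha]=\E_\PP\prod_{(ij)\in\alpha}(\ell'\lambda\,\mathbf 1[\sigma_i=\sigma_j\neq0])^{\alpha_{ij}}/\sqrt{\alpha!}$, each moment is a joint-label moment, and similarly for $\QQ$. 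For $f$ to have advantage $\omega(1)$ it must carry signal, namely $\E_\PP f-\E_\QQ f$ large relative to $\sqrt{\mathrm{Var}_\QQ f}$. This reduces to bounding
$$\sup_f \frac{(\E_\PP f-\E_\QQ f)^2}{\E_\QQ[f^2]}.$$
Following the Schramm--Wein style argument, we lower bound $\E_\QQ[f^2]\ge \E_\QQ[\text{(projection)}^2]$ using the conditional-independence structure. Concretely, we write $\E_\QQ[f^2]=\E_{\sigma}\E[f^2\mid\sigma]\ge \E_\sigma\sum_\alpha(\text{Hermite coefficient of } f \text{ in the shifted basis})^2$. We then substitute $\hat f_\alpha=\sum_{\beta\le\alpha}(\dots)c_\beta$ via the triangular change of variables used in low-degree recovery. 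The result is a linear problem: $\E_\PP f-\E_\QQ f=\sum_\alpha c_\alpha\kappa_\alpha$, where $\kappa_\alpha$ is a "difference cumulant" of $\ell$ versus $\ell'$ communities. This gives
$$\Adv^2\lesssim 1+\sum_{|\alpha|\le D}\kappa_\alpha^2.$$
The second step is pruning. For $\alpha$ not connected, or with isolated components, $\kappa_\alpha$ vanishes or factors. Also, single components whose moments agree between $\ell$ and $\ell'$ give nothing: with rescaling by $\ell\lambda$ versus $\ell'\lambda$, a single edge has mean $\rho\lambda$ in both models. The informative structures are therefore connected graphs whose joint-label moment depends on the number of communities, principally cycles and trees glued to them. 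The third step bounds $|\kappa_\alpha|\le C^{|\alpha|}\lambda^{|E|}\rho^{|V|}\,|E|^{O(\#\text{excess})}$ for a connected $\alpha$ on $V$ vertices. We count such graphs in $K_n$: there are at most $n^{|V|}|V|^{2(|E|-|V|+1)}/\mathrm{aut}$. Summing over trees, using Cayley's count $v^{v-2}$, gives the series
$$\sum_v \frac{n^v v^{v-2}}{v!}\,\rho^{2v}\lambda^{2(v-1)}\approx\sum_v (e\,n\rho^2\lambda^2)^{v}\,\mathrm{poly}(v),$$
which is $O(1)$ exactly when $\lambda\le(1-\varepsilon)(\rho\sqrt{en})^{-1}$. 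This is the source of the constant $e$. Graphs with excess edges gain factors $(D\lambda)^{2}$ or $v^2/n$, which are summable under $D\le\lambda^{-2}/C_0$ and $\rho=o(1)$. The main obstacle is the second step: showing that the non-signal contributions, coming from disconnected pieces and from the mismatch of the $\QQ$-basis, cancel or are dominated. We first try to define $\kappa_\alpha$ as a genuine joint cumulant in the label variables, so that disconnected $\alpha$ vanish automatically.

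\textbf{Upper bound (ii).} We adapt the sharp recovery estimator of \cite{sohn2025sharp}, a tree or path count of length $\approx C_0\log n$. Take $f(Y)=\sum_T\prod_{e\in T}Y_e$ over rooted trees, or self-avoiding paths, of $v=C_0\log n$ vertices, centered appropriately. Its mean under $\ell$ communities scales like $n^v\rho^{v}(\rho/\ell)^{v-1}\ldots$, which depends on $\ell$ through the factor $(\ell\lambda)^{|E|}\cdot(\rho/\ell)^{v}\cdot\ell$. Hence the means differ by a constant factor between $\ell$ and $\ell'$. Above threshold, $e n\rho^2\lambda^2\ge(1+\varepsilon)^2$, so the variance under each model is $o(\text{mean}^2)$ once $v\gtrsim\log n/\log(1+\varepsilon)$. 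The second-moment calculation over pairs of overlapping trees is the laborious part. The polylog conditions $n\rho=\omega(\log^7n)$ and $\rho=o(\log^{-7}n)$ are used to control overlaps in which shared planted vertices contribute extra $\rho^{-1}$ factors. Finally, Chebyshev thresholding at the midpoint of the two means gives strong separation.
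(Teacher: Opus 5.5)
\textbf{Upper bound: the proposed statistic cannot work.} A tree or self-avoiding-path statistic has the same mean under $\PP$ and $\QQ$, so there is no mean gap to threshold. For a connected simple graph $\alpha$, the $\ell$-model gives $\E[Y^\alpha]=\ell^{1+|\alpha|-|V(\alpha)|}\lambda^{|\alpha|}\rho^{|V(\alpha)|}$. For a tree, $|\alpha|=|V(\alpha)|-1$, so the exponent of $\ell$ is zero. Your own factor $(\ell\lambda)^{v-1}(\rho/\ell)^{v}\ell$ simplifies to $\lambda^{v-1}\rho^{v}$, with no $\ell$ left. Centering does not help either: a centered tree product expands into forest moments, which are also $\ell$-free.

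The missing idea is that informative statistics must contain a cycle. The paper uses balanced unicyclic graphs: a triangle with two $k$-edge rooted trees hung at two of its vertices, so $|\alpha|=|V(\alpha)|=D=2k+3$. Their mean gap is $|\ell'-\ell|\,|\cU_k|\lambda^D\rho^D$ with $|\cU_k|\asymp(en)^D(k+1)^{-3}$. The variance bound then needs three further ingredients:
\begin{itemize}
\item an overlap classification (edge-disjoint with or without a shared root, identical, cyclic core, forest core);
\item the structural fact that two overlapping BUGs have either at least two branch points or at least $k+3$ shared edges;
\item a Gaussian noise-injection step reducing general $\lambda$ to $\lambda_0=(1+\varepsilon)(\rho\sqrt{en})^{-1}$, so that $\rho\bigl((\ell\lambda)^2+1\bigr)=o(D^{-7})$ holds.
\end{itemize}

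\textbf{Lower bound: the same misidentification, and the crux is left open.}

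\emph{Wrong class of graphs.} You sum over trees with weight $\rho^{2v}\lambda^{2(v-1)}$. But trees are exactly the indices with equal moments under $\PP$ and $\QQ$, so they contribute nothing. Moreover, that series is really $\lambda^{-2}\sum_v(en\rho^2\lambda^2)^v\,\mathrm{poly}(v)$, not $O(1)$. The constant $e$ comes instead from components containing a cycle: a spanning tree ($v^{v-2}$ choices) plus a compulsory extra edge ($\le v^2$ choices), with weight $(\lambda\rho)^{2v}$.

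\emph{Lossy bounds.} A bound $|\kappa_\alpha|\le C^{|\alpha|}\cdots$ with $C>1$ moves the threshold to $(C\rho\sqrt{en})^{-1}$. A factor $(D\lambda)^2$ per excess edge only covers $D\ll\lambda^{-1}$, not $D\le\lambda^{-2}/C_0$. The paper confines the $\ell$-dependence to $L^{2(|\mathcal C(\alpha)|+|\alpha|-|V(\alpha)|)}$. It also counts the $k$ excess edges as an unordered multiset, giving $\sum_k(L\lambda)^{2k}\binom{v^2+k}{k}\le 2+v^2e^{2L^2v/C_0}$.

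\emph{The unresolved step.} The step you call the main obstacle is the heart of the proof. If you project only onto $H_\beta(Z)$ and average over labels, the system $u^\top M=c^\top$ has nonzero entries for every $\varnothing\neq\beta\subsetneq\alpha$. Its unique solution is $\sqrt{\alpha!}\,u_\alpha=\sum_\pi\prod_{B\in\pi}(\kappa^{\PP}_B-\kappa^{\QQ}_B)$ over partitions $\pi$ of the edge multiset of $\alpha$. That is not a single difference cumulant, and you give no way to control these sums sharply.

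\emph{The paper's device.} It enlarges the orthonormal family to $\psi_{\beta\gamma}=H_\beta(Z)\prod_i h_i(\Theta)^{\gamma_i}$, with $h_i=(\mathbf 1[\Theta_i\neq0]-\rho)/\sqrt{\rho(1-\rho)}$. It takes $u_{\alpha\gamma}=\bigl(-\sqrt{\rho/(1-\rho)}\bigr)^{|\gamma|}\prod_i(c_{\alpha_i}-d_{\alpha_i})$ on graphs all of whose components contain a cycle, and zero otherwise. The certificate equation is then checked case by case:
\begin{itemize}
\item For connected $\alpha$ and $\varnothing\neq\beta\subsetneq\alpha$, the $\gamma$-sum equals $M_{\beta\varnothing,\alpha}\sum_{\gamma\subseteq S}(-1)^{|\gamma|}=0$. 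Here $S=V(\beta)\cap V(\alpha\setminus\beta)$ is nonempty by connectivity, so every intermediate term cancels exactly.
\item Connected tree indices hold automatically via $M_{\varnothing\varnothing,\alpha}=d_\alpha=c_\alpha$.
\item Component consistency handles disconnected $\alpha$.
\end{itemize}
The quantity that actually gets counted is $\|u\|^2=1+\sum_\alpha\prod_i(c_{\alpha_i}-d_{\alpha_i})^2(1-\rho)^{-|V(\alpha)|}$.
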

The lower bound also applies to all $D \le \lambda^{-2}/C_0$. Under the standard low-degree heuristic, where degree $D$ is interpreted as a proxy for runtime $n^{\widetilde O(D)}$~\cite{HS-bayesian}, this points to the scale $D\asymp \lambda^{-2}$ as the relevant subcritical degree scale. The polynomial in the upper bound above is based on counting occurrences of balanced unicyclic graphs (BUGs); see Equation~\eqref{def:BUG_f} for the definition and Figure~\ref{fig:BUGs3} for an illustration.
\begin{restatable}
[Weak testing: PSM]{theorem}{thmPSMweak}\label{thm:weak_PSM}
Given parameters $n,\ell,\ell',\rho,\lambda$, with 
$\ell,\ell'$ fixed distinct positive integers, $\rho\in(0,1)$, and
$\lambda>0$, define
$\QQ:=\mathbb{P}_{\mathrm{PSM}}(n,\ell,\rho,\lambda)$ and
$\PP:=\mathbb{P}_{\mathrm{PSM}}(n,\ell',\rho,\lambda)$. There exists a constant $C_0 \equiv C_0(\ell,\ell')>0$ such that the following hold.
\begin{enumerate}
  \item[(i)] \emph{(Lower bound)}. If
  $$\lambda = o \left(({\rho\sqrt{n}})^{-1}\right), \qquad D\leq \lambda^{-2}/C_0, \qquad \rho=o(1),$$
  then
  $\Adv_{\leq D}(\PP,\QQ)=1+o(1).
  $

  \item[(ii)] \emph{(Upper bound)}. If  
  $$\lambda =\Omega\left(({\rho\sqrt{n}})^{-1}\right), \qquad   n\rho =\omega(1),$$
  then the degree-$1$ polynomial $f(Y)=\sum_i Y_{ii}$ weakly separates $\mathbb{P}$ and $\mathbb{Q}$.
\end{enumerate}
\end{restatable}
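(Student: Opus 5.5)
We prove the two parts of Theorem~\ref{thm:weak_PSM} separately. The upper bound (ii) is a direct mean--variance computation for the degree-1 statistic. The lower bound (i) reuses the machinery behind the strong lower bound, Theorem~\ref{thm:strong_PSM}(i), but tracks constants: the goal is to show that every nontrivial contribution to the advantage is $o(1)$, rather than merely $O(1)$.

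\emph{Upper bound.} Let $f(Y)=\sum_i Y_{ii}$, and let $R$ be the number of planted indices, so $R\sim\mathrm{Bin}(n,\rho)$ under both $\PP$ and $\QQ$.
\begin{itemize}
\item \emph{Mean.} Each planted diagonal entry has mean $\ell\lambda$ under $\QQ$ and $\ell'\lambda$ under $\PP$. Hence $\E_\PP f-\E_\QQ f=(\ell'-\ell)\lambda n\rho$.
\item \emph{Variance.} Under either model, $\mathrm{Var}(f)=n+\ell^2\lambda^2 n\rho(1-\rho)$ (with $\ell'$ in place of $\ell$ under $\PP$).
\item \emph{Comparison.} The ratio of the mean gap to the standard deviation is therefore of order $\min\{\lambda\rho\sqrt n,\ \sqrt{n\rho}\}$. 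When $\lambda\rho\sqrt n=\Omega(1)$ and $n\rho=\omega(1)$, this ratio is bounded away from $0$. A mean gap of order one standard deviation gives weak separation; we use the definition via the correlation/advantage criterion of \Cref{lem:separation-weak-strong}, or equivalently a thresholding test on the approximately Gaussian statistic $f$.
\end{itemize}

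\emph{Lower bound.} We need $\Adv_{\le D}(\PP,\QQ)^2=1+o(1)$. Write $\Adv^2$ via the orthogonal-polynomial expansion relative to $\QQ$. This is the latent-variable/certificate expansion used for Theorem~\ref{thm:strong_PSM}(i): $\Adv^2-1$ is bounded by a sum over nonempty multigraphs $\alpha$ with at most $D$ edges, weighted by squared cumulant-type quantities $\kappa_\alpha^2$ measuring the discrepancy between $\PP$ and $\QQ$ on $\alpha$. We then organise the sum by shape.
\begin{itemize}
\item \emph{Single self-loop.} The leading term comes from $\alpha$ a single self-loop $\{i\}$. Its contribution is roughly $n\cdot((\ell'-\ell)\lambda\rho)^2$ divided by the $\QQ$-variance, i.e.\ $O(\lambda^2\rho^2 n)=o(1)$ under the hypothesis.
\item \emph{Other connected pieces.} For a connected informative $\alpha$ with $v$ vertices and $e$ edges, the pruning argument from the strong case gives a bound of the form $n^{v}\rho^{v}(C\lambda)^{2e}$, up to combinatorial factors. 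The informative shapes are those with $e\ge v$ (cycles with trees, etc.), since tree-like pieces cancel because both models share the same marginal label law. Writing $\tau:=\lambda\rho\sqrt n$, which is $o(1)$, such a term is at most $(n\rho\lambda^2)^{v}\lambda^{2(e-v)}\le\tau^{2v}\rho^{-v}\lambda^{2(e-v)}$.
\end{itemize}
I would redo the strong-case estimate and check that the geometric series in the number of edges, made summable by $D\le\lambda^{-2}/C_0$, is dominated by its first term. That first term carries a factor $\tau^2$ or $(e\tau^2)^{k}$ in place of the $((1-\varepsilon)^2)^k$ ratio that only gave $O(1)$. Disconnected $\alpha$ should factor into products of connected contributions, so that $\Adv^2\le\exp(\sum_{\text{conn}})=1+o(1)$.

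\emph{Main obstacle.} The hardest step is the cycle-type (BUG) shapes. Each such shape contributes a term like $(e\,n\rho^2\lambda^2)^{k}/k$ for cycle length $k$. In the strong case these sum to a bounded $-\log(1-(1-\varepsilon)^2)$. Here we must verify that their total is $O(\tau^2)$. This holds because each term is at most $(e\tau^2)^k$, and $\sum_k (e\tau^2)^k=O(\tau^2)$. We must also ensure that no additional non-signal pieces (the pruned contributions) leave an $O(1)$ remainder independent of $\tau$. I would check that every pruned bound in the strong argument carries at least one factor of $\lambda^2\rho^2 n$ or of $\rho$; the latter is $o(1)$ since $\rho=o(1)$. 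This step is the likeliest place where extra care is needed.
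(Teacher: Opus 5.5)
Your overall plan matches the paper's. Part (ii) is the same trace-statistic computation: gap $|\ell'-\ell|\lambda n\rho$ and variance $n+\ell^2\lambda^2 n\rho(1-\rho)$. This checks the definition of weak separation directly, so you need neither \Cref{lem:separation-weak-strong} (it only turns advantage bounds into impossibility statements) nor approximate Gaussianity.

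For part (i), the paper reuses the certificate of Proposition~\ref{prop:u-PSM}. This gives the inequality $\Adv_{\le D}^2\le\|u\|^2$ from Bessel on the extended space; it is not an orthogonal expansion, since $\QQ$ is non-product. The paper then pushes the strong-case composition/Stirling count through to $\Adv_{\le D}^2\le 1+\sum_{v\ge1}(Kn\tilde\rho^2\lambda^2)^v e^{\kappa\sqrt v}$ and concludes by dominated convergence, since $n\tilde\rho^2\lambda^2=o(1)$.

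Your handling of disconnected indices is a valid, slightly cleaner variant. Set $w_\tau=(c_\tau-d_\tau)^2(1-\rho)^{-|V(\tau)|}$ for connected good $\tau$ with $|\tau|\le D$; then $\|u\|^2\le\prod_\tau(1+w_\tau)\le\exp\bigl(\sum_\tau w_\tau\bigr)$. This uses only component-consistency of $u$, the factorization of the $\gamma$-sum over components, nonnegativity of the weights, and injectivity of $\alpha\mapsto\mathcal C(\alpha)$. Lemma~\ref{lemma:nr-multigraphs} with $C=1$ then gives $\sum_\tau w_\tau\le L^2\sum_{v\ge1}(en\tilde\rho^2\lambda^2)^v\bigl(2+v^2e^{2L^2v/C_0}\bigr)=O(n\rho^2\lambda^2)$.

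One estimate must be corrected. A connected good $\alpha$ with $v$ vertices and $e$ edges has weight $(c_\alpha-d_\alpha)^2\propto\rho^{2v}$, not $\rho^{v}$. So its contribution scales like $(n\tilde\rho^2\lambda^2)^v(L\lambda)^{2(e-v)}$, times the combinatorial factors $v^v/v!\le e^v$ and $\binom{v^2+e-v}{e-v}$. Your bound $\tau^{2v}\rho^{-v}\lambda^{2(e-v)}$ need not be small as $\rho\to0$. For a single loop it equals $n\rho\lambda^2$, which can diverge while $\tau=\lambda\rho\sqrt n\to0$ (e.g.\ $\lambda=\rho=n^{-0.3}$). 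Your single-loop and cycle bullets use the correct $\rho^2$ scaling, so this looks like a slip, but as written that step does not close.

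The worries in your last paragraph are moot. BUGs play no role in the lower bound, where the sum runs over all good multigraphs. There is also no pruned remainder: $u$ vanishes identically on bad indices, and every nonempty term carries a factor $(n\tilde\rho^2\lambda^2)^{|V(\alpha)|}$ with $|V(\alpha)|\ge 1$.

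Finally, trees are uninformative not because of a shared label marginal alone. The amplitude $\ell\lambda$ per edge exactly offsets the factor $\ell^{-(v-1)}$ in the probability $\rho^v\ell^{-(v-1)}$ of a monochromatic tree. This gives $c_\alpha/d_\alpha=(\ell'/\ell)^{1+|\alpha|-|V(\alpha)|}$, as in Lemma~\ref{lem:tree-uninformative}.
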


Our results show that weak planted-vs-planted testing is strictly easier than strong planted-vs-planted testing in terms of the required signal strength $\lambda$: strong testing has the sharp threshold $(\rho\sqrt{en})^{-1}$ whereas weak testing is possible for any small constant times $(\rho\sqrt n)^{-1}$. The strong testing threshold coincides with the sharp low-degree recovery threshold $\lambda_{\mathrm{rec},1}$ for the corresponding one-community models~\cite{submatrix-message-passing,sohn2025sharp}. Informally,
\[
    \lambda_{\ell\,{\rm vs}\,\ell'}^{\rm weak}
    <
    \lambda_{\ell\,{\rm vs}\,\ell'}^{\rm strong}
    =
    \lambda_{{\rm rec},1}^{\rm weak}
    =
    \lambda_{{\rm rec},1}^{\rm strong}.
\]
See Figure~\ref{fig:phases_PSM} for the resulting phase diagram in the PSM setting.
\begin{figure}[t]
    \centering
    \includegraphics[width=0.6\linewidth]{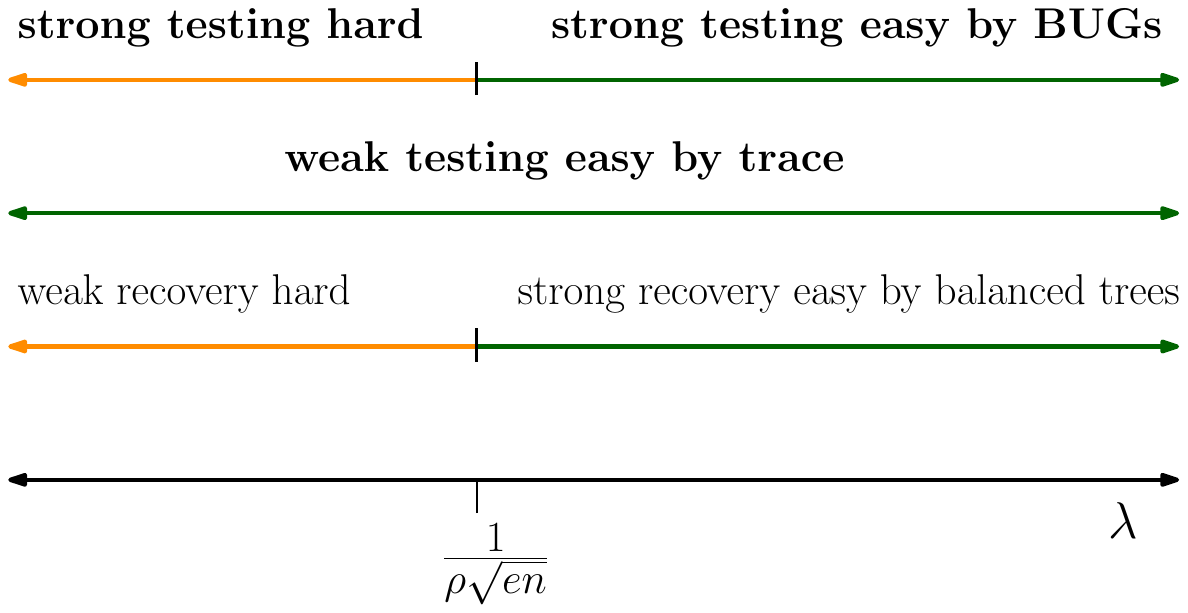}
\caption{Schematic phase diagram for testing between $\ell$ and $\ell'$ communities in the planted submatrix model. Weak testing is low-degree hard for $\lambda=o((\rho\sqrt n)^{-1})$, while strong testing has sharp threshold $(\rho\sqrt{en})^{-1}$, matching the sharp low-degree recovery threshold of~\cite{sohn2025sharp}. Bold statements are proved here.}
    \label{fig:phases_PSM} 
\end{figure}
\begin{figure}[!htb]
    \centering
\includegraphics[width=0.85\linewidth]{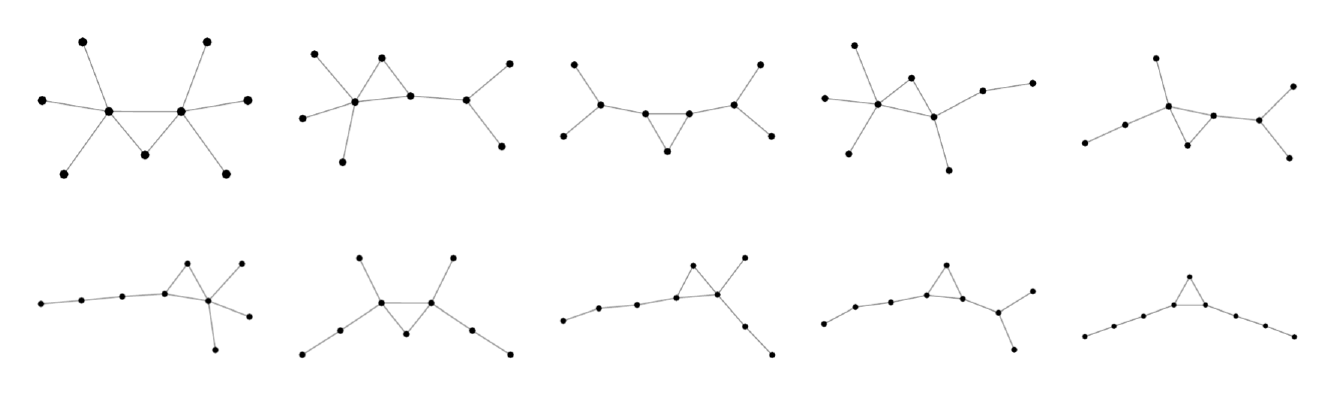}
    \caption{BUGs (Balanced Unicyclic Graphs). A depiction of the non-isomorphic graphs in~$\mathcal{U}_3$.}
    \label{fig:BUGs3}
\end{figure}
\subsubsection{Framework and proof ideas}\label{sec:contributions-framework}

In the usual planted-vs-null setting, the null distribution is a product measure, so the denominator in the low-degree advantage can be controlled by an orthogonal expansion directly in the observed variables. This fails for planted-vs-planted testing: both hypotheses contain latent structure, and the reference law is non-product in the observations. We address this non-product structure in three steps.

First, we pass to an extended space containing the latent variables and the underlying independent noise. This is motivated by the latent-variable orthogonalization method used for sharp low-degree recovery in~\cite{sohn2025sharp}.  At a high level, Bessel's inequality reduces the lower-bound problem to constructing a certificate $u$ that satisfies certain linear equations $u^\top M=c^\top$, and this yields the bound
\[ \Adv_{\le D}^2(\PP,\QQ)\le \|u\|_2^2 .\]

Second, for conditionally independent planted models, as is the case here, we observe that moments factor over connected components. Component consistency reduces verification of the certificate equations to connected graphs, while the norm bound involves the full graph-indexed sum.

Third, the certificate can be pruned to the graph structures that genuinely distinguish the two planted laws. In the applications below, the polynomial-basis indices can be viewed as graphs: an index $\alpha$ records the observed entries in the corresponding basis polynomial, and we identify it with the graph whose edge multiset contains those entries. The pruning principle itself is problem-independent: an indexing graph is \emph{good} if one of its nonempty subgraphs indexes a basis polynomial whose expectation differs under~$\PP$ and~$\QQ$, and \emph{bad} otherwise; see \Cref{def:good_graphs}. We show that the certificate can be supported only on good indexing graphs; see \Cref{lem:reduction_good}. The model-specific work is to identify the good graph structures and bound the corresponding certificate norm.

The steps outlined above are used to prove the low-degree lower bounds; the same moment comparison also guides the construction of the upper-bound
statistics. For PSM and PDS, tree components have identical moments under the $\ell$- and $\ell'$-planted laws, so the first nonzero moment differences arise at the unicyclic level. Accordingly, the strong-testing upper bounds use balanced unicyclic graph statistics; for weak testing, simpler statistics suffice.

\subsection{Related Work}
\paragraph*{Planted-vs-planted testing}
Testing between two models that both contain planted structure, also termed \emph{complex testing}, has been studied in~\cite{rush2023easier,coloring-clique,fourier-geo,carpentier2025low}.
The techniques used in~\cite{carpentier2025low} are based on constructing an almost orthogonal basis for the probability space~$Y$. In contrast, we work with an orthonormal set on an extended space. As the authors of~\cite{carpentier2025low} state, one of the advantages of their method of an almost orthogonal basis is that, while possibly losing a little precision, it is more tractable than finding and calculating with an exact orthonormal set of vectors. Part of our contribution is to show that it is feasible to work with an orthonormal set of vectors in the extended space for a nontrivial planted-vs-planted testing problem.

\paragraph*{Planted submatrix and planted dense subgraph}
The planted submatrix and planted dense subgraph models have both been extensively studied as canonical examples of high-dimensional planted-structure problems. For planted submatrix, information-theoretic limits are well understood in several regimes; see, e.g., \cite{butucea2015sharp,ButuceaIngster2013SparseSubmatrixDetection,rotenberg2024planted}. Computational aspects have often been studied in the planted-vs-null testing formulation, with a pure-noise null and an alternative containing a single planted submatrix \cite{ma2015computational,brennan2018reducibility}. For recovery, the best-known polynomial-time algorithm is a message-passing algorithm, which works up to the sharp threshold~\cite{submatrix-message-passing}. For the planted dense subgraph problem, prior work includes statistical characterizations \cite{arias2013community,verzelen2015community,chen2016statistical}, computational lower bounds for testing \cite{hajek2015computational}, and algorithmic achievability results \cite{bhaskara2010detecting,ames2015guaranteed,chen2016statistical,montanari2015finding}. Low-degree lower bounds for estimation in both models were initiated in \cite{schramm2022computational}, which obtained thresholds up to logarithmic factors, and were sharpened in \cite{sohn2025sharp}.

\paragraph*{Relation to low-degree recovery}
The low-degree framework has been widely used to study detection in planted problems and its statistical--computational tradeoffs; see, e.g., the survey~\cite{wein2025computational}. More recently, it has been extended to recovery, with thresholds obtained up to logarithmic factors in~\cite{schramm2022computational} and sharp thresholds obtained via refined orthogonalization and pruning arguments in~\cite{sohn2025sharp}. Another sharp low-degree lower bound was recently obtained by~\cite{optimal-bot} for root reconstruction in broadcasting on trees,
where the model and techniques differ from those for the planted matrix and graph
recovery problems considered in~\cite{sohn2025sharp}.
By comparison, \emph{testing between two planted models} (planted-vs-planted) has received much less
attention, despite being a natural way to interpolate between detection and recovery.
 A formal connection has been proved between recovery or estimation of the planted structure and appropriately chosen planted-vs-planted testing tasks; see Section~2.3 and Proposition~2.6 of~\cite{rush2023easier}. As noted there for the community-counting problem, however, this proposition does not clearly produce an equivalent recovery problem: the two hypotheses correspond to different latent signal spaces, namely $\ell$- and $\ell'$-community assignments, and the corresponding signal distributions have disjoint supports. Thus, to our knowledge, it would not have been possible to map the testing problems considered in this paper to a recovery problem and then apply the methods in~\cite{sohn2025sharp}. The sharp constant instead requires a direct analysis of the planted-vs-planted advantage. We also point out a recent line of work~\cite{alg-contig,DHSS-recovery,alg-contig-2} that builds some new connections between detection and recovery, though not directly related to the planted-vs-planted tasks considered here.  

 \subsection{Organization}
The rest of the paper is organized as follows. Section~\ref{sec:framework} develops the certificate framework used to prove low-degree lower bounds for planted-vs-planted testing. In Section~\ref{sec:psm-testing}, we prove the strong and weak testing results for the planted submatrix model: the lower bounds use the general certificate framework, while the strong upper bound is given by the BUG polynomial and the weak upper bound by the trace statistic. In Section~\ref{sec:pds-testing}, we prove the corresponding results for the planted dense subgraph model: the lower bounds use the Bernoulli-edge implementation of the certificate framework, while the strong upper bound is given by the BUG statistic together with a thinning reduction and the weak upper bound by a signed triangle statistic.

\section{Low-Degree Framework for Planted-vs-Planted Testing}\label{sec:framework}
 The main obstacle in planted-vs-planted testing is that both hypotheses are planted, so the reference law in the second moment is non-product on the observation space, and the usual orthogonal expansion in the observed variables $Y$ is unavailable. To address this difficulty, we expand against orthonormal functions of the underlying independent variables that generate $Y$, rather than against functions of $Y$ alone. This extends the latent-variable approach of~\cite{sohn2025sharp} to the planted-vs-planted setting.
\subsection{Low-Degree Testing Criteria}
We first recall the low-degree criteria used throughout the paper. Strong and weak separation formalize the success of a specific low-degree statistic, whereas the degree-$D$ advantage quantifies the best possible performance over all degree-$D$ polynomials and is used to prove low-degree hardness.
\subsubsection{Low-Degree Separation}
\begin{definition}
Let $\PP_n$ and $\QQ_n$ be distributions on $\mathbb R^N$, for some $N=N_n$. A degree-$D$ test is a multivariate polynomial $f_n: \mathbb R^N\to\mathbb R$ of degree at most $D$. Such a test $f_n$ strongly separates $\PP_n$ and
$\QQ_n$ if
\[
\sqrt{\max\{\mathrm{Var}_{\PP_n}(f_n),\mathrm{Var}_{\QQ_n}(f_n)\}}
=o\left(\left|\E_{\PP_n}[f_n]-\E_{\QQ_n}[f_n]\right|\right),
\]
and weakly separates $\PP_n$ and $\QQ_n$ if
\[
\sqrt{\max\{\mathrm{Var}_{\PP_n}(f_n),\mathrm{Var}_{\QQ_n}(f_n)\}}
=O\left(\left|\E_{\PP_n}[f_n]-\E_{\QQ_n}[f_n]\right|\right).
\]
\end{definition}
Specifically, strong separation implies vanishing type-I and type-II errors after thresholding $f$ (by Chebyshev's inequality), whereas weak separation implies nontrivial testing power; see
\cite{bandeira2022franz,coja2022statistical} 
and references therein.

\subsubsection{Low-Degree Hardness}
For lower bounds, we control the degree-$D$ advantage defined in~\eqref{eq.adv_def}. Such a bound rules out strong or weak separation by degree-$D$ polynomials, according to the following lemma from \cite{bandeira2022franz,coja2022statistical}.
\begin{lemma}\label{lem:separation-weak-strong}
Fix a sequence $D=D_n$. Let $\Adv_{\leq D}(\PP,\QQ)$ be as defined in \eqref{eq.adv_def}.
    \begin{itemize}
        \item If $\Adv_{\leq D}(\PP,\QQ)=O(1)$ then no degree-$D$ test strongly separates $\PP$ and $\QQ$.
        \item If $\Adv_{\leq D}(\PP,\QQ)=1+o(1)$ then no degree-$D$ test weakly separates $\PP$ and $\QQ$.
    \end{itemize}
\end{lemma}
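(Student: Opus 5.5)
The plan is to prove both parts by contradiction: assume a degree-$D$ polynomial $f$ separates $\PP$ and $\QQ$, and build from it a polynomial whose advantage contradicts the hypothesis. The basic tool is the identity, valid for any $f$ of degree at most $D$ and any constant shift $c$,
\[
\E_{\PP}[f-c]\le \Adv_{\le D}(\PP,\QQ)\,\sqrt{\E_{\QQ}[(f-c)^2]},
\]
which holds because $f-c$ is again a polynomial of degree at most $D$. We are free to choose the shift and sign so that $\E_{\PP}[f]-\E_{\QQ}[f]=:\Delta>0$. Write $\sigma^2:=\max\{\mathrm{Var}_{\PP}(f),\mathrm{Var}_{\QQ}(f)\}$. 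The idea is to take $c=\E_{\QQ}[f]-t$ for a parameter $t\ge 0$ to be optimized. This gives $\E_{\PP}[f-c]=\Delta+t$ and $\E_{\QQ}[(f-c)^2]=\mathrm{Var}_{\QQ}(f)+t^2\le \sigma^2+t^2$, and therefore
\[
\Adv_{\le D}(\PP,\QQ)\ \ge\ \frac{\Delta+t}{\sqrt{\sigma^2+t^2}}.
\]

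\emph{Strong case.} Suppose $f$ strongly separates, i.e.\ $\sigma=o(\Delta)$. Take $t=0$, which is just the test $f-\E_{\QQ}[f]$. Then $\Adv_{\le D}\ge \Delta/\sigma\to\infty$, which contradicts $\Adv_{\le D}=O(1)$. Since strong separation is a property of the whole sequence, it suffices to pass to a subsequence.

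\emph{Weak case.} Suppose $f$ weakly separates, i.e.\ $\sigma\le K\Delta$ for some constant $K$ along a subsequence; normalise $\Delta=1$. Now optimise over $t$. The function $(1+t)/\sqrt{\sigma^2+t^2}$ is maximised at $t=\sigma^2$, where its value is $\sqrt{1+\sigma^{-2}}$. This gives $\Adv_{\le D}\ge\sqrt{1+1/K^2}$, a constant strictly bigger than $1$, which contradicts $\Adv_{\le D}=1+o(1)$.

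Checking the maximisation is the only computational step. It is Cauchy--Schwarz applied to the vectors $(1,\sigma)$ and $(t,\sigma)$ after writing $1+t=\sigma\cdot\sigma^{-1}+t\cdot 1$:
\[
(1+t)^2\le (\sigma^2+t^2)(\sigma^{-2}+1),
\]
with equality exactly at $t=\sigma^2$.

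The main care point is only about sign and subsequence bookkeeping. The $o(\cdot)$ and $O(\cdot)$ in the separation definition refer to the sequence, so the contradiction must be taken along a subsequence on which the separation bound holds. Beyond that, no obstacle is expected; the lemma is the standard one from the cited references.
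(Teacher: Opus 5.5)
Your argument is correct, and it is the standard proof of this fact from the works the paper cites (\cite{bandeira2022franz,coja2022statistical}); the paper itself states the lemma without proof. The centered test $f-\E_\QQ[f]$ rules out strong separation, and the shifted test $f-\E_\QQ[f]+\mathrm{Var}_\QQ(f)/\Delta$ gives $\Adv_{\le D}^2\ge 1+\Delta^2/\mathrm{Var}_\QQ(f)$, which rules out weak separation. One cosmetic slip: the Cauchy--Schwarz vectors should be $(\sigma,t)$ and $(\sigma^{-1},1)$, but you only need a lower bound on $\Adv_{\le D}$, so evaluating at $t=\sigma^2$ already suffices.
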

A standard heuristic is to take the polynomial degree as a proxy for runtime, where failure of all logarithmic-degree polynomials suggests hardness for all polynomial-time algorithms. To provide low-degree evidence for hardness of strong, respectively, weak testing, we upper-bound $\Adv_{\le D}$ for degrees $D=\omega(\log n)$ in the
parameter regimes under consideration. This rules out strong, respectively weak,
separation by any polynomial of degree at most $D$. 

\subsection{Notation}
We will use elements $\alpha\in\{0,1\}^S$ or
$\alpha\in\mathbb N^S$ to index polynomials, where $S$ is finite and
$\mathbb N=\{0,1,2,\ldots\}$. For $\alpha,\beta\in\mathbb N^S$, define $|\alpha|=\sum_{k\in S}\alpha_k$,
$\alpha!=\prod_{k\in S}\alpha_k!$,
$\binom{\alpha}{\beta}=\prod_{k\in S}\binom{\alpha_k}{\beta_k}$,
and write $\beta\le \alpha$, which means $\beta \subseteq \alpha$, for coordinatewise inequality. For $X=(X_k)_{k\in S}$, let $X^\alpha=\prod_{k\in S}X_k^{\alpha_k}$. Typical choices of $S$ are $[n]=\{1,\dots,n\}$ or a set of pairs $(i,j)$
with $1\le i<j\le n$ (or $1\le i\le j\le n$, allowing loops), which we will abbreviate as $\binom{[n]}{2}$ or $\left(\!\!\binom{[n]}{2}\!\!\right)$, i.e., the \emph{multichoose} notation. Under this
identification, $\alpha\in\{0,1\}^{\binom{[n]}{2}}$ can be viewed as a simple graph on vertex set $[n]$, while
$\alpha\in\mathbb N^{\left(\!\!\binom{[n]}{2}\!\!\right)}$ can be viewed as a multigraph on vertex set $[n]$. This allows us to treat $\alpha$ as a graph and refer to its graph-theoretic properties.
For a graph or multigraph $\alpha$, we write $V(\alpha)\subseteq[n]$ for
its set of non-isolated vertices, $E(\alpha)$ for its edge multiset,
$\mathcal{C}(\alpha)$ for its set of connected components, and $C:=|\mathcal{C}(\alpha)|$ for the number of connected components.
We call $\alpha$ \emph{connected} if $V(\alpha)$ is connected by paths, and
declare the empty graph to be connected. We identify the empty graph with
both $0$ and $\varnothing$, and set $|\mathcal C(\varnothing)|=0$. We write
$\alpha\triangle\beta$ for the symmetric difference between $\alpha$ and $\beta$. Finally,
$\alpha\backslash\beta$ denotes the graph obtained from $\alpha$ by deleting
the edges in $\beta$ and then removing any isolated vertices.

\subsection{Connection to Low-Degree Recovery}\label{sec:setup-recovery-testing}
For a scalar recovery task with observation $Y \in \mathbb{R}^N$ or $Y\in \{0,1\}^N$ and latent functional $g(X)$ to be estimated, a proxy for recoverability is the degree-$D$ correlation \cite{schramm2022computational}:
\begin{equation*}\label{eq:corr_def}
\Corr_{\le D}(g(X);Y) := \sup_{\deg(f)\le D} \frac{\mathbb{E}[f(Y)\,g(X)]}{\sqrt{\mathbb{E}[f(Y)^2]\mathbb{E}[g(X)^2]}}.
\end{equation*}

Using the connection between recovery and planted-vs-planted testing,
\cite{rush2023easier} transferred this perspective to the degree-$D$ advantage
$\Adv_{\leq D}$ and obtained coarse thresholds for planted-vs-planted testing. Our work sharpens this picture: by adapting and extending the pruning and orthogonalization techniques of \cite{sohn2025sharp} to the planted-vs-planted
setting, we obtain sharp low-degree thresholds for distinguishing between planted distributions. We now establish the linear-algebraic framework underlying our upper bounds on the degree-$D$ testing advantage.
\subsection{Linear-algebraic certificate for the degree-$D$ advantage}\label{subsec:u-certificate-general}
In this section, we reduce the task of bounding $\Adv_{\le D}$ (see \eqref{eq.adv_def}) to checking a linear certificate. We expand degree-$D$ polynomials in a (multi)graph-indexed basis $\phi$, lower bound the denominator using an orthonormal collection $\psi$ on the extended latent-variable space,  and derive a sufficient linear condition whose solution controls $\Adv_{\le D}$.

\paragraph*{Setup} Fix a basis $\{\phi_\alpha(Y)\}_{\alpha\in\mathcal{I}}$ for the space of polynomials in $Y$ of degree at most $D$, where $\mathcal I$ is an index set that we identify with the relevant class of graphs (or multigraphs, depending on the model). We represent any candidate statistic $f$ as:
\begin{equation}\label{eq:f_expansion}
f(Y)=\sum_{\alpha\in\cI}\hat f_\alpha\,\phi_\alpha(Y),
\qquad \hat f=(\hat f_\alpha)_{\alpha\in\cI}\in\R^{\cI}.
\end{equation}
For the numerator, define the mean vector $c\in\R^{\cI}$ by
\begin{equation}\label{eq:c_def}
c_\alpha := \E_\PP[\phi_\alpha(Y)].
\end{equation}
Then, $\E_\PP[f(Y)]=c^\top \hat f$. The remaining task, which is the most challenging, is to lower-bound the denominator $\E_\QQ[f(Y)^2]$. In the models considered here, $Y$ is generated from an underlying collection of independent variables $W$ under the reference law. We therefore expand on this underlying space rather than directly on the observation space. Adapting the orthogonalization strategy from the low-degree recovery analysis of~\cite{sohn2025sharp}, we choose an orthonormal collection $\{\psi_\beta(W)\}_{\beta\in\mathcal J}$ indexed by a set $\mathcal J$ suited to the variables $W$. We will use Bessel's inequality in this extended space: if $H$ is a Hilbert space and $(e_k)$ is an orthonormal collection in $H$, then $\sum_{k=1}^{\infty} |\langle x, e_{k} \rangle|^{2} \le \|x\|^{2}$ for all $x \in H$. Applying Bessel's inequality to this orthonormal collection in $L^2(\QQ)$ gives
\begin{equation}\label{eq:bessel}
\E_\QQ[f(Y)^2] \ge \sum_{\beta\in\cJ}\E_\QQ[f(Y)\psi_\beta(W)]^2.
\end{equation}
Next, define the matrix $M\in\R^{\cJ\times\cI}$ by
\begin{equation}\label{eq:M_def}
M_{\beta\alpha}:=\E_\QQ[\phi_\alpha(Y)\psi_\beta(W)].
\end{equation}
Using \eqref{eq:f_expansion}, the lower bound \eqref{eq:bessel} can be written as
$\E_\QQ[f(Y)^2]\ge \|M\hat f\|_2^2$. The following proposition provides a sufficient condition for bounding $\Adv_{\le D}(\PP,\QQ)$ via a linear constraint on an auxiliary vector $u$.
\begin{proposition}\label{prop:adv_bound_general}Let $\PP$ and $\QQ$ be distributions on $\R^N$. Let $\{\phi_\alpha(Y)\}_{\alpha\in\mathcal{I}}$ be a basis for $\mathbb R[Y]_{\le D}$.  Suppose that, under $\QQ$, the observation $Y$ is coupled with auxiliary variables $W$, and let the set $\{\psi_\beta(W)\}_{\beta\in\mathcal J}$ be orthonormal in $L^2(\QQ)$. Suppose that, for $M$ and $c$ defined in~\eqref{eq:M_def} and~\eqref{eq:c_def}, there exists $u=(u_\beta)_{\beta\in\mathcal J}$ satisfying  
\begin{equation}\label{eq:u_constraint_gen}
\sum_{\beta \in \cJ} u_\beta M_{\beta \alpha} = c_\alpha
\end{equation}
for all $\alpha \in \cI$.
Then, $$\Adv_{\leq D}^2(\PP,\QQ) \leq \sum_{\beta \in \cJ} u_\beta^2.$$
\end{proposition}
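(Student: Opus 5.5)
The plan is to bound the numerator and the denominator of \eqref{eq.adv_def} separately for an arbitrary polynomial $f$ of degree at most $D$, and then combine them with Cauchy--Schwarz. Fix such an $f$ and expand it as in \eqref{eq:f_expansion}, $f=\sum_{\alpha\in\cI}\hat f_\alpha\phi_\alpha$. This expansion exists (and is unique) because $\{\phi_\alpha\}$ is a basis of $\R[Y]_{\le D}$.

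For the numerator, linearity and \eqref{eq:c_def} give $\E_\PP[f(Y)]=c^\top\hat f$. We then substitute the certificate equation \eqref{eq:u_constraint_gen}, $c^\top=u^\top M$, to obtain
\[
\E_\PP[f(Y)] = u^\top M\hat f=\sum_{\beta\in\cJ}u_\beta (M\hat f)_\beta .
\]
Next we identify $(M\hat f)_\beta$. By linearity of expectation and \eqref{eq:M_def},
\[
(M\hat f)_\beta=\sum_\alpha \hat f_\alpha\E_\QQ[\phi_\alpha(Y)\psi_\beta(W)]=\E_\QQ[f(Y)\psi_\beta(W)].
\]
The sum over $\alpha$ is finite, so there is no issue exchanging sum and expectation, provided each $\E_\QQ[\phi_\alpha\psi_\beta]$ is finite. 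This holds if $\phi_\alpha(Y)\in L^2(\QQ)$; I assume this, as is implicit in $f\in L^2(\QQ)$ being meaningful. If $\E_\QQ[f^2]=\infty$ the ratio is $0$ and the bound is trivial.

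For the denominator, Bessel's inequality in $L^2(\QQ)$ on the coupled space $(Y,W)$, applied to $x=f(Y)$ and the orthonormal family $\psi_\beta(W)$, gives \eqref{eq:bessel}:
\[
\E_\QQ[f^2]\ge\sum_\beta \E_\QQ[f\psi_\beta]^2=\|M\hat f\|_2^2 .
\]
In particular $M\hat f\in\ell^2(\cJ)$, even if $\cJ$ is infinite.

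To conclude, we apply Cauchy--Schwarz in $\ell^2(\cJ)$:
\[
|\E_\PP f|=|\langle u,M\hat f\rangle|\le\|u\|_2\|M\hat f\|_2\le \|u\|_2\sqrt{\E_\QQ[f^2]}.
\]
If $\|u\|_2=\infty$, the claim is vacuous. Otherwise the pairing $\sum_\beta u_\beta(M\hat f)_\beta$ converges absolutely, which justifies the rearrangement $c^\top\hat f=u^\top M\hat f$ when $\cJ$ is infinite. This rearrangement should be checked by writing
\[
c^\top\hat f=\sum_\alpha\hat f_\alpha\sum_\beta u_\beta M_{\beta\alpha}
\]
and swapping the finite $\alpha$-sum with the $\beta$-sum.

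Dividing by $\sqrt{\E_\QQ f^2}$ when it is positive, and taking the supremum over $f$, gives $\Adv_{\le D}\le\|u\|_2$; squaring yields the claim. When $\E_\QQ f^2=0$, the bound above forces $\E_\PP f=0$, so such $f$ do not affect the supremum. The only delicate point is this summability and interchange bookkeeping, which is handled exactly as above; the rest is routine.
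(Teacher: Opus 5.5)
Your proof is correct and is essentially the paper's argument: expand $f$ in the $\phi$-basis, rewrite the numerator as $c^\top\hat f=u^\top M\hat f$, identify $(M\hat f)_\beta=\E_\QQ[f(Y)\psi_\beta(W)]$ so that Bessel gives $\E_\QQ[f^2]\ge\|M\hat f\|_2^2$, and finish with Cauchy--Schwarz. Your extra bookkeeping is a harmless tightening of the same route: finiteness of moments, summability when $\cJ$ is infinite, and bounding $|\E_\PP f|\le\|u\|_2\sqrt{\E_\QQ f^2}$ directly instead of through the ratio $c^\top\hat f/\|M\hat f\|$, which can be $0/0$.
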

For discrete observation spaces such as $\{0,1\}^N$, a degree-$D$ test is a function on $\{0,1\}^N$ that can be represented by a polynomial on $\mathbb R^N$ of degree at most $D$. Equivalently, since $x_i^2=x_i$ on $\{0,1\}^N$, every function on $\{0,1\}^N$ has a unique multilinear representative, and the degree of the test is defined as the degree of this multilinear polynomial.
\begin{proof}[Proof of Proposition \ref{prop:adv_bound_general}]
 Given the basis $\{\phi_{\alpha}\}$ for $\mathbb R[Y]_{\leq D}$, expand $f(Y)$ as in \eqref{eq:f_expansion}.
Since \(c_\alpha = \E_\PP[\phi_\alpha(Y)]\) by \eqref{eq:c_def}, the numerator of $\Adv_{\leq D}$ becomes
\[
    \E_{\PP}[f(Y)]
    = \sum_{\alpha} \hat{f}_{\alpha} \,  \E_{\PP}[\phi_{\alpha}(Y)]
    = c^{\mathsf{T}} \hat{f}.
\]

By construction, $\{\psi_\beta\}$, is orthonormal in $L^2(\QQ)$, so $\langle \psi_\beta,\psi_{\beta'}\rangle_{L^2(\QQ)}
:=\E_\QQ[\psi_\beta\psi_{\beta'}]
=\mathds 1\{\beta=\beta'\}.$
Bessel's inequality then gives the lower bound on $\E_\QQ[f(Y)^2]$ stated in
\eqref{eq:bessel}. Substituting \eqref{eq:f_expansion} into \eqref{eq:bessel} gives
\[
    \E_{\QQ}[f(Y)^{2}] 
     \geq 
    \sum_{\beta} \left( \sum_{\alpha} \hat{f}_{\alpha} \, \E_{\QQ}\big[ \phi_{\alpha}(Y) \psi_{\beta}(W) \big] \right)^{2}
    = \| M \hat{f} \|^{2},
\]
where the matrix $M = (M_{\beta \alpha})_{\beta \in \mathcal{J}, \, \alpha \in \mathcal{I}}$ is as defined in \eqref{eq:M_def}.
Combining the expression for the numerator with the Bessel lower bound on the denominator yields
\[
    \Adv_{\leq D}(\PP,\QQ)
     \leq
    \sup_{\hat{f}} \frac{c^{\mathsf{T}} \hat{f}}{\|M \hat{f}\|}
    =\sup_{\hat{f}} \frac{u^{\mathsf{T}} M \hat{f}}{\|M \hat{f}\|}
     \leq 
    \sup_{\hat{f}} \frac{\|u\|  \|M \hat{f}\|}{\|M \hat{f}\|}
    = \|u\|.\] \qedhere
\end{proof}
In models where observations are independent conditional on latent variables, as is the case here, the relevant quantities $c$ and $M$ satisfy a component consistency
property. We choose $u$ to satisfy the same property. The linear certificate condition does not force this, but it is a convenient choice: it reduces the verification of
\eqref{eq:u_constraint_gen} to the case of connected $\alpha$; see \Cref{prop:adv_bound_connected}.

\paragraph*{Component consistency} Given $\alpha$ with connected components $\alpha_1,\ldots,\alpha_C$, we say $u$ is \emph{component-consistent} if $u_\varnothing=1$ and
$
u_\alpha=\prod_{i=1}^C u_{\alpha_i}$. We say $M$ is \emph{inclusive} and \emph{component-consistent} if $M_{\beta,\alpha}=0$ whenever $\beta \not\subseteq \alpha$ and for $\beta \subset \alpha$, 
$ M_{\beta,\alpha}=\prod_{i=1}^C M_{\beta_i,\alpha_i}$ where $\beta_i := \beta \cap \alpha_i$, i.e.,\ induced subgraphs of $\beta$ on connected components of $\alpha$. Note that $\beta_i$ might be disconnected. Lastly, we say $c$ is \emph{component-consistent} if for each $\alpha$, $c_\alpha = \prod_{i=1}^Cc_{\alpha_i}$. Note that requiring component consistency reduces the number of constraints to check, but not the expression for the upper bound on $\Adv_{\leq D}$.

\begin{corollary}\label{prop:adv_bound_connected}
Let $\PP,\QQ,\{\phi_\alpha\}_{\alpha\in\mathcal I}, \{\psi_\beta\}_{\beta\in\mathcal J}$ be as in Proposition~\ref{prop:adv_bound_general}. Assume, in addition, that $M$ and $c$ are component-consistent and there is a component-consistent vector $u = (u_\beta)_{\beta \in \cJ}$, satisfying  %
\begin{equation}\label{eq:u_constraint}
\sum_{\beta \in \cJ} u_\beta M_{\beta \alpha} = c_\alpha
\end{equation}
for all connected $\alpha \in \cI$. Then the same identity holds for
every $\alpha\in\mathcal I$, and consequently $\Adv_{\leq D}^2(\PP,\QQ) \leq \sum_{\beta \in \cJ} u_\beta^2$.
\end{corollary}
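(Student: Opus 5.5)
The plan is to verify \eqref{eq:u_constraint_gen} for an arbitrary $\alpha\in\cI$ by factorizing the left-hand side over the connected components of $\alpha$, and then to invoke Proposition~\ref{prop:adv_bound_general} directly.

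\textbf{Trivial cases.} If $\alpha=\varnothing$, it is connected by convention, so the identity holds by hypothesis. Now fix a nonempty $\alpha$ with components $\alpha_1,\dots,\alpha_C$.

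\textbf{Step 1: restrict and split the sum.} Since $M$ is inclusive, only $\beta\subseteq\alpha$ contribute:
\[
\sum_{\beta\in\cJ}u_\beta M_{\beta\alpha}=\sum_{\beta\subseteq\alpha}u_\beta M_{\beta\alpha}.
\]
Each $\beta\subseteq\alpha$ corresponds bijectively to the tuple $(\beta_1,\dots,\beta_C)$ with $\beta_i:=\beta\cap\alpha_i\subseteq\alpha_i$. This holds because the components of $\alpha$ are vertex-disjoint, so $\beta$ is the disjoint union of the $\beta_i$.

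\textbf{Step 2: factor each term.} Component consistency of $M$ gives $M_{\beta\alpha}=\prod_i M_{\beta_i\alpha_i}$. For $u$, every connected component of $\beta$ lies inside a single $\alpha_i$, so the components of $\beta$ are exactly the union over $i$ of the components of the $\beta_i$. Hence
\[
u_\beta=\prod_i u_{\beta_i},
\]
where empty $\beta_i$ contribute $u_\varnothing=1$; this is precisely why the normalization $u_\varnothing=1$ is needed. The sum therefore factorizes:
\[
\sum_{\beta\subseteq\alpha}u_\beta M_{\beta\alpha}=\prod_{i=1}^C\Big(\sum_{\beta_i\subseteq\alpha_i}u_{\beta_i}M_{\beta_i\alpha_i}\Big).
\]

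\textbf{Step 3: apply the connected case.} Each inner sum equals $\sum_{\beta'\in\cJ}u_{\beta'}M_{\beta'\alpha_i}$, again by inclusivity. Since $\alpha_i$ is connected, the hypothesis gives that this equals $c_{\alpha_i}$. Then component consistency of $c$ yields $\prod_i c_{\alpha_i}=c_\alpha$, so \eqref{eq:u_constraint_gen} holds for every $\alpha$. Proposition~\ref{prop:adv_bound_general} then gives $\Adv_{\le D}^2(\PP,\QQ)\le\sum_\beta u_\beta^2$.

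\textbf{Main obstacle.} The only delicate point is the bookkeeping in Step 2, where some care is needed.
\begin{itemize}
\item One must confirm that the $\beta_i$, viewed as elements of $\cJ$, are legitimate indices: restricting an index to a subset of edges should stay within $\cJ$.
\item Component consistency of $u$ must be applied to the possibly disconnected $\beta_i$. This is consistent because $u_{\beta_i}$ is itself the product over the components of $\beta_i$.
\item One must check that no $\beta\subseteq\alpha$ is double counted, which follows from the bijection in Step 1.
\end{itemize}
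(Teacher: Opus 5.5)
Your proposal is correct and follows the paper's proof essentially step for step. Like the paper, you restrict to $\beta\subseteq\alpha$ by inclusivity, decompose $\beta$ into $\beta_i=\beta\cap\alpha_i$, factor the sum through component consistency of $u$ and $M$, apply the connected hypothesis to each factor, recombine using component consistency of $c$, and conclude via Proposition~\ref{prop:adv_bound_general}; your additional bookkeeping (the empty-$\alpha$ case, $u_\varnothing=1$ for empty $\beta_i$, and why $u_\beta=\prod_i u_{\beta_i}$ remains valid when the $\beta_i$ are disconnected) makes explicit what the paper leaves implicit.
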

\begin{proof}
Let $\alpha \in \cI$ be the set of (multi)graphs with $|\alpha|\leq D$ and let $\mathcal{C}(\alpha) = \{\alpha_1, \dots, \alpha_C\}$ be the connected components of $\alpha$. Note that $M_{\beta, \alpha} = 0$ unless $\beta \subseteq \alpha$. Any $\beta \subseteq \alpha$ decomposes uniquely into disjoint induced subgraphs $\beta_i = \beta \cap \alpha_i$. By the component consistency of $u$ and $M$, we can now factor the matrix-vector product $(u^\top M)_\alpha$ as follows:
\begin{align*}
\sum_{\beta \in \cJ} u_\beta M_{\beta \alpha} 
&= \sum_{\beta_1 \subseteq \alpha_1} \dots \sum_{\beta_C \subseteq \alpha_C} \left( \prod_{i=1}^C u_{\beta_i} M_{\beta_i, \alpha_i} \right) = \prod_{i=1}^C \left( \sum_{\beta_i \subseteq \alpha_i} u_{\beta_i} M_{\beta_i, \alpha_i} \right).
\end{align*}
Since each $\alpha_i$ is connected, \eqref{eq:u_constraint} implies that each inner sum equals $c_{\alpha_i}$. Therefore,
\[
\sum_{\beta \in \cJ} u_\beta M_{\beta \alpha}
= \prod_{i=1}^C c_{\alpha_i}
= c_\alpha,
\]
where the last equality uses the component consistency of $c$. Hence, $c^\top = u^\top M$. The bound on $\Adv_{\le D}(\PP,\QQ)$ follows as in Proposition~\ref{prop:adv_bound_general}: using $c^\top=u^\top M$ and Cauchy--Schwarz gives
\[
\Adv_{\le D}(\PP,\QQ)\le \|u\|.
\]
Squaring yields the claim.
\end{proof}
\subsection{Excluding Bad Terms}\label{sec:bad_terms}

\noindent We next identify the informative graph indices for planted-vs-planted testing and show that the certificate can be supported only on these indices. This yields a testing-specific good--bad decomposition; the corresponding pruning step for low-degree recovery was developed in~\cite{sohn2025sharp}. Here, the criterion is model-independent: a graph index is good precisely when it contains a nonempty subgraph whose associated basis polynomial has different expectations under $\PP$ and $\QQ$. The formal definition is as follows.
\begin{definition}[Bad/good graphs for $\PP$ vs.\ $\QQ$]\label{def:good_graphs}
A nonempty connected graph $\alpha$ is \emph{bad} if
\[
\E_\PP[\phi_{\alpha'}(Y)]=\E_\QQ[\phi_{\alpha'}(Y)]
\qquad\text{for \emph{every} nonempty subgraph } \alpha'\subseteq \alpha,
\]
and \emph{good} otherwise (equivalently: there exists \emph{some} nonempty
$\alpha'\subseteq \alpha$ with $\E_\PP[\phi_{\alpha'}(Y)]\neq \E_\QQ[\phi_{\alpha'}(Y)]$).
The empty graph $\varnothing$ is good by convention. A (not necessarily connected)
graph $\alpha\in\mathcal I$ lies in $\goodI$ iff every connected
 component $\alpha_i$ is good; otherwise, $\alpha$ is bad.
\end{definition}
Similarly, for graphs in $\mathcal J$, we write $\beta\in\goodJ$ when $\beta\in\goodI$. In the model-specific applications, elements of $\mathcal J$ may carry additional labels; see Remark~\ref{rem:indexing-hatJ}.
\begin{remark}[Connected bad graphs have only bad subgraphs]\label{rem:downward_new}
If $\alpha$ is connected and bad, and $\tilde\alpha\subseteq\alpha$ is a nonempty subgraph, then $\tilde\alpha$ is also bad. Indeed, let $\tilde\alpha_1,\dots,\tilde\alpha_C$ be the connected components of $\tilde\alpha$. For each $i\in [C]$, every nonempty subgraph $\alpha'\subseteq\tilde\alpha_i$ is also a subgraph of $\alpha$. Since $\alpha$ is bad, this implies $\E_\PP[\phi_{\alpha'}(Y)]=\E_\QQ[\phi_{\alpha'}(Y)]$ for every such $\alpha'$. Hence, each connected component $\tilde\alpha_i$ is bad, and therefore $\tilde\alpha$ is bad.
\end{remark}

The following lemma, combined with Corollary~\ref{prop:adv_bound_connected}, reduces the verification of the constraint
\eqref{eq:u_constraint} from all $\alpha\in\mathcal I$ to connected
$\alpha\in\goodI$, provided $u$ is supported on $\goodJ$. This lemma provides the pruning step for our planted-vs-planted analysis, corresponding to the role played by Lemma~1.4 in the low-degree recovery framework of~\cite{sohn2025sharp}.

\begin{lemma}[Reduction to connected good $\alpha$]
\label{lem:reduction_good}
Assume the setup of Proposition~\ref{prop:adv_bound_general}, with basis
$\{\phi_\alpha\}_{\alpha\in\mathcal I}$ and orthonormal collection
$\{\psi_\beta\}_{\beta\in\mathcal J}$. Let $c$ and $M$ be defined
by~\eqref{eq:c_def} and~\eqref{eq:M_def}, respectively, and let the good sets be
as in Definition~\ref{def:good_graphs}. Assume that $c$ and $M$ are component consistent, that $M_{\beta,\alpha}=0$ whenever $\beta\not\subseteq\alpha$, and that $\psi_\varnothing\equiv1$. Let $u=(u_\beta)_{\beta\in\mathcal J}$ be component-consistent with $u_\varnothing=1$, and suppose
\begin{equation}\label{eq:u_zero_on_bad}
u_\beta = 0 \qquad \text{for every } \beta\notin \goodJ.
\end{equation}
If 
\begin{equation}\label{eq:u_connected_good}
\sum_{\beta\in\hat{\mathcal J}} u_\beta\, M_{\beta,\alpha} = c_\alpha
\qquad \text{for every connected } \alpha\in\goodI,
\end{equation}
then $\sum_{\beta\in\mathcal J} u_\beta\, M_{\beta,\alpha} = c_\alpha$ for every
$\alpha\in\mathcal I$. Therefore, $$\Adv^2_{\leq D}(\PP,\QQ) \leq \sum_{\beta\in\goodJ} u_\beta^2.$$
\end{lemma}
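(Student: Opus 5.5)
By Corollary~\ref{prop:adv_bound_connected}, it suffices to verify $\sum_{\beta} u_\beta M_{\beta,\alpha}=c_\alpha$ for every \emph{connected} $\alpha\in\mathcal I$. The norm bound then follows because $u$ is supported on $\goodJ$. The hypothesis~\eqref{eq:u_connected_good} already covers connected good $\alpha$. Since $u_\beta=0$ off $\goodJ$, the sum over $\mathcal J$ there equals the sum over $\goodJ$. The empty graph is good by convention, so the only remaining case is a nonempty connected bad $\alpha$.

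Fix such an $\alpha$. Because $M_{\beta,\alpha}=0$ unless $\beta\subseteq\alpha$, the sum $\sum_\beta u_\beta M_{\beta,\alpha}$ runs only over $\beta\subseteq\alpha$. By Remark~\ref{rem:downward_new}, every nonempty $\beta\subseteq\alpha$ is bad. Such a $\beta$ therefore has at least one bad component, and since the conventions place $\beta$ outside $\goodJ$, we get $u_\beta=0$. (The labelled-index version of this step is Remark~\ref{rem:indexing-hatJ}; I will simply check that the labels do not change which $\beta$ are good.) Only $\beta=\varnothing$ survives, giving
\[
\sum_\beta u_\beta M_{\beta,\alpha}=u_\varnothing M_{\varnothing,\alpha}=\E_\QQ[\phi_\alpha(Y)\psi_\varnothing(W)]=\E_\QQ[\phi_\alpha(Y)],
\]
using $u_\varnothing=1$ and $\psi_\varnothing\equiv1$. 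Since $\alpha$ is bad, taking $\alpha'=\alpha$ in Definition~\ref{def:good_graphs} gives $\E_\QQ[\phi_\alpha(Y)]=\E_\PP[\phi_\alpha(Y)]=c_\alpha$. The constraint therefore holds for all connected $\alpha$.

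Corollary~\ref{prop:adv_bound_connected} now extends the identity $u^\top M=c^\top$ to every $\alpha\in\mathcal I$, using component consistency of $u$, $M$ and $c$. We also need $M_{\beta,\alpha}$ to factor over components for $\beta\subseteq\alpha$, which is the inclusive/component-consistent hypothesis. Proposition~\ref{prop:adv_bound_general} then gives $\Adv_{\le D}^2\le\sum_{\beta\in\mathcal J}u_\beta^2=\sum_{\beta\in\goodJ}u_\beta^2$, where the last equality is~\eqref{eq:u_zero_on_bad}.

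I expect no real obstacle; the argument is bookkeeping. The only delicate point is the vanishing of $u_\beta$ on nonempty subgraphs of a bad connected $\alpha$. This needs a subgraph $\beta$ with a bad component to lie outside $\goodJ$, even though other components of $\beta$ are not involved. That holds here, since every component of $\beta$ is a subgraph of $\alpha$ and hence bad by Remark~\ref{rem:downward_new}. Component consistency of $u$ gives the same conclusion independently, as a product with a vanishing factor.
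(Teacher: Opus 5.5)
Your proposal is correct and follows essentially the same route as the paper's proof. You reduce to connected $\alpha$ via Corollary~\ref{prop:adv_bound_connected} and note that the connected good case is given by hypothesis. For connected bad $\alpha$ you use $M_{\beta,\alpha}=0$ for $\beta\not\subseteq\alpha$ together with Remark~\ref{rem:downward_new} to kill every nonempty $\beta$, so that only $u_\varnothing M_{\varnothing,\alpha}=\E_\QQ[\phi_\alpha(Y)]=c_\alpha$ remains, and you obtain the norm bound from the support of $u$ on $\goodJ$.
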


\begin{proof}
By Corollary~\ref{prop:adv_bound_connected}, it suffices to verify
$\sum_{\beta\in\mathcal J} u_\beta M_{\beta,\alpha} = c_\alpha$ for every
\emph{connected} $\alpha\in\mathcal I$. If $\alpha$ is \emph{connected} and \emph{good}, then it is already guaranteed to satisfy \eqref{eq:u_connected_good}. Now consider the case where $\alpha$ is \emph{connected} and \emph{bad}, and split the sum over $\beta$ into  $\beta$ good and $\beta$ bad:
\begin{equation}\label{eq:case2_split}
\sum_{\beta \in \mathcal J} u_\beta\, M_{\beta,\alpha}
=\sum_{\beta \in \goodJ} u_\beta\, M_{\beta,\alpha}
+\sum_{\beta \in \mathcal J \setminus \goodJ} u_\beta\, M_{\beta,\alpha}.
\end{equation}

Observe that the second sum vanishes by~\eqref{eq:u_zero_on_bad}. For the first sum, let $\beta\in\widehat{\mathcal J}$ have $M_{\beta,\alpha}\neq0$. By the assumption $M_{\beta,\alpha}=0$ unless $\beta\subseteq\alpha$, we must have $\beta\subseteq\alpha$. Since $\alpha$ is connected and bad, Remark~\ref{rem:downward_new} implies that every nonempty $\beta\subseteq\alpha$ is bad. Hence, the only possible good $\beta$ is $\beta=\varnothing$. Thus, for $\alpha$ connected and bad,
\begin{equation}\label{eq:after_split}
\sum_{\beta \in \mathcal J} u_\beta\, M_{\beta,\alpha}
= u_\varnothing\, M_{\varnothing,\alpha}= M_{\varnothing,\alpha}.
\end{equation}
Let $d_\alpha:=\E_\QQ[\phi_\alpha(Y)]$. Using the assumption $\psi_\varnothing = 1$, we obtain that $M_{\varnothing,\alpha} = d_\alpha$. Recall also that $u_\varnothing = 1$. Finally, since $\alpha$ is bad and connected, by definition, the expectation of $\alpha$ itself must be equal under $\PP$ and $\QQ$, i.e.,\ $c_\alpha = d_\alpha$. Thus, by~\eqref{eq:after_split}, for $\alpha$ connected and bad,
\[
\sum_{\beta \in \mathcal J} u_\beta\, M_{\beta,\alpha}
=d_\alpha
=c_\alpha,
\]
as required. Having verified the constraint for every connected $\alpha\in\mathcal I$,
Corollary~\ref{prop:adv_bound_connected} and~\eqref{eq:u_zero_on_bad} 
\[
\Adv_{\le D}^2(\PP,\QQ)\le \sum_{\beta\in\goodJ}u_\beta^2+\sum_{\beta\in\cJ\setminus\goodJ}u_\beta^2= \sum_{\beta\in\goodJ}u_\beta^2,
\]
which proves the result.
\end{proof}
Together, Proposition~\ref{prop:adv_bound_general},
Corollary~\ref{prop:adv_bound_connected}, and
Lemma~\ref{lem:reduction_good} reduce the low-degree lower-bound problem to constructing the certificate $u$ on connected good graph indices. Proposition~\ref{prop:adv_bound_general} accepts any $u$ satisfying
$u^\top M=c^\top$, so zeroing $u$ on bad indices is a choice rather than a necessity. This choice removes bad-index contributions from $\|u\|^2$ and, by~\Cref{prop:adv_bound_connected} together with
Remark~\ref{rem:downward_new}, makes the constraint automatic on connected bad indices. Indeed, for such $\alpha$, the only surviving contribution is the $\beta=\varnothing$ term, equal to
$M_{\varnothing,\alpha}=\E_\QQ[\phi_\alpha(Y)]=\E_\PP[\phi_\alpha(Y)]$. Therefore, the remaining
model-specific task is to construct a component-consistent vector $u$, supported
on $\goodJ$, satisfying \eqref{eq:u_connected_good} for every connected
$\alpha\in\goodI$. We carry this out in Sections~\ref{sec:psm-testing} and~\ref{sec:pds-testing} by an explicit recursion on $\alpha$.

\begin{remark}\label{rem:indexing-hatJ}
In the model-specific applications of Sections~\ref{sec:psm-testing} and~\ref{sec:pds-testing}, the good graph index set $\cJ$ is of the form
\[
\cJ=\{(\beta,\gamma):\beta\in\cI,\ \gamma \text{ a certain coloring of } \beta\}.
\]
Specifically, $\gamma$ records the signal structure associated with $\beta$.
Accordingly, the abstract $\cJ$-indexed quantities are written
$u_{\beta\gamma}$, $\psi_{\beta\gamma}$, and $M_{\beta\gamma,\alpha}$.
The corresponding good subset is
$
\goodJ=\{(\beta,\gamma)\in\cJ:\beta\in\goodI, \ \gamma \text{ a coloring of } \beta\}.
$
\end{remark}

\begin{remark}[Notation]
We write $\QQ_\Theta$ and $\PP_\Theta$ for the marginal laws of the latent label vector $\Theta$ under $\QQ$ and $\PP$. In both PSM and PDS, these marginals coincide; only the conditional observation law differs.
\end{remark}

\section{Planted Submatrix Model}\label{sec:psm-testing} We begin with strong testing, where we locate the sharp threshold and prove matching low-degree lower and upper bounds. We then turn to weak testing, where the sharp-threshold behavior is replaced by a smooth transition, whose scale we identify.
\begin{definition}[$\ell$-Planted Submatrix]\label{def:PSM}
Let $n,\ell \in \mathbb{N}$, $\rho \in (0,1)$,
and $\lambda > 0$. We define the $\ell$-planted submatrix model $\mathbb{P}_{\mathrm{PSM}}(n,\ell,\rho,\lambda)$ as follows.
The latent labels $\Theta = (\Theta_1,\dots,\Theta_n)$ are drawn independently from $\{0,1,\dots,\ell\}$, with distribution
\[
\mathrm{Pr}(\Theta_i = c) = \frac{\rho}{\ell}
\quad \text{for each } c \in [\ell],
\qquad
\mathrm{Pr}(\Theta_i = 0) = 1-\rho.
\]
Given $\Theta$, one observes a symmetric matrix $Y \in \mathbb{R}^{n\times n}$ with entries
\[
Y_{ij}
=
\ell\lambda\,\mathbf{1} \{\Theta_i=\Theta_j\in[\ell]\}+Z_{ij},
\qquad 1 \le i,j \le n,
\]
where $(Z_{ij})_{1 \le i \le j \le n}$ are i.i.d.\ $\mathrm{N}(0,1)$ random variables, and $Z_{ji}=Z_{ij}$ for all $i<j$.
\end{definition}

\subsection{Strong Testing}\label{sec:psm-strong-testing}
The following theorem gives matching low-degree bounds across the critical signal strength: below the threshold, the degree-$D$ advantage remains bounded, while above it, a degree-$D$ polynomial strongly separates the two planted laws.
\thmPSMstrong*

The degree-$D$ polynomial in part (ii) of \Cref{thm:strong_PSM} is based on counting occurrences of balanced unicyclic graphs (BUGs); see Figure~\ref{fig:BUGs3} and Equation~\eqref{def:BUG_f} for the definition. 

\noindent In what follows, we give proofs of the lower and upper bounds, respectively.

\subsubsection{Lower Bound}\label{sec:psm-lower-bound-strong}

We establish the tools required to prove part (i) of Theorem~\ref{thm:strong_PSM} by proceeding in four steps, as outlined in Section~\ref{sec:framework}. First, we choose the basis $\{\phi_\alpha\}_{\alpha \in \cI}$ and the orthonormal collection $\{\psi_{\beta\gamma}\}_{\beta\gamma \in \cJ}$ adapted to the Gaussian noise and the planted vertex indicators, and compute the quantities $c_\alpha,d_\alpha$, and $M_{\beta\gamma,\alpha}$. Second, these formulas identify which graphs are good. Third, we construct an explicit feasible vector $u$ supported on the set of good graphs $\goodI$ (see Lemma~\ref{lem:reduction_good}). The feasibility check is based on a cancellation over $\gamma\subseteq V(\beta)$. Finally, we bound $\|u\|^2$. This reduces to counting good multigraphs with a prescribed number of vertices, edges, and connected components. The assumed parameter regime ensures that these sums remain bounded, completing the proof of part~(i).

\subsubsection*{Setting $\phi,\psi$ and establishing properties}
Recall the definition of the $\ell$-planted submatrix model from Definition~\ref{def:PSM}. The first step is to choose the polynomial basis for the observation space. We take the degree-$D$ basis of $\mathbb R[Y]_{\le D}$ to be
\begin{equation}\label{eq:phi_def_mat}
\phi_\alpha(Y)=H_\alpha(Y), \qquad \alpha \in \mathcal I:=\Bigl\{\alpha \in \mathbb N^{\left(\!\!\binom{[n]}{2}\!\!\right)}:\ |\alpha|\le D\Bigr\},
\end{equation}
where $\{H_\alpha\}$ denotes the family of multivariate Hermite polynomials, which are orthogonal with respect to the Gaussian measure \cite{szeg1939orthogonal}. We use the orthonormal normalization of the Hermite polynomials, so that $\E[H_\alpha(Z)H_\beta(Z)]=\mathds{1}_{\alpha=\beta}$ for $Z\sim \mathrm N(0,1)$.

Next, we choose an orthonormal collection in the underlying random variables $W=(Z,\Theta)$. For the present planted-vs-planted problem, we use an extended-space orthogonalization tailored to the testing advantage, building on the latent-variable perspective from low-degree recovery~\cite{sohn2025sharp}. The collection need not be a basis; Bessel's inequality only requires an orthonormal family. Thus, one could work with a larger orthonormal system, for instance, by including additional functions of the latent signal variables. For the present testing problem, we show that the collection used below already supports a certificate satisfying $u^\top M=c^\top$ and therefore suffices to upper-bound $\Adv_{\le D}$. Define
\begin{align}
    &\psi_{\beta \gamma}(Z,\Theta)=H_\beta(Z) \left(\frac{\mathds{1}[\Theta \neq 0]-\rho}{\sqrt{\rho(1-\rho)}}\right)^{\gamma}= H_\beta(Z)
\prod_{i=1}^n
\left(\frac{\mathds 1[\Theta_i\neq0]-\rho}{\sqrt{\rho(1-\rho)}}\right)^{\gamma_i},\label{eq:psi_def_mat} \\
    & \notag \beta \gamma \in \mathcal J:=\{(\beta,\gamma):\beta \in\mathbb N^{\left(\!\!\binom{[n]}{2}\!\!\right)}, |\beta|\leq D, \gamma \in \{0,1\}^n\}.
\end{align}
 In particular, $\psi_{\varnothing\varnothing} \equiv 1$ since $H_\varnothing(Z) = 1$ and the second factor equals $1$ when $\gamma = \varnothing$. 
\begin{lemma}\label{lem:psm:cdM} With $\{\phi_{\alpha}\}_{\alpha \in \cI}$ and $\{\psi_{\beta\gamma}\}_{\beta\gamma \in \cJ}$ as defined in \eqref{eq:phi_def_mat} and \eqref{eq:psi_def_mat}, respectively, let
\[
c_\alpha:=\E_{\PP}[\phi_\alpha(Y)],
\qquad
d_\alpha:=\E_{\QQ}[\phi_\alpha(Y)],
\qquad
M_{\beta\gamma,\alpha}:=\E_{\QQ} \left[\phi_\alpha(Y)\psi_{\beta\gamma}(Z,\Theta)\right].
\]
Then,
\begin{align}
&c_\alpha=\frac{\ell'^{|\mathcal{C}(\alpha)|+|\alpha|-|V(\alpha)|}}{\sqrt{\alpha!}}\lambda^{|\alpha|}\rho^{|V(\alpha)|}
,\qquad d_\alpha=\frac{\ell^{|\mathcal{C}(\alpha)|+|\alpha|-|V(\alpha)|}}{\sqrt{\alpha!}}\lambda^{|\alpha|}\rho^{|V(\alpha)|}
\notag,
\\
 & M_{\beta\gamma,\alpha}=\mathds 1_{\beta\le\alpha}\,\mathds 1_{\gamma\subseteq V(\alpha\backslash\beta)} 
\sqrt{\frac{\beta!}{\alpha!}}
\binom{\alpha}{\beta} 
\ell^{\,|\alpha\backslash\beta|+|\mathcal{C}(\alpha\backslash\beta)|-|V(\alpha\backslash\beta)|} \lambda^{|\alpha\backslash\beta|}\,
\rho^{|V(\alpha\backslash\beta)|}\,
\,
 \bigg(\frac{1 - \rho}{\rho}\bigg)^{\frac{|\gamma|}{2}}
 \notag.
\end{align}
\end{lemma}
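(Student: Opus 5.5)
We condition on the latent labels and use the Gaussian shift identity for orthonormal Hermite polynomials. If $Z\sim \mathrm N(0,1)$ and $\mu\in\R$, then $\E[H_k(\mu+Z)]=\mu^k/\sqrt{k!}$. More generally, $\E[H_k(\mu+Z)H_j(Z)]=\mathds 1_{j\le k}\binom{k}{j}\sqrt{j!/k!}\,\mu^{k-j}$, which follows from the translation formula $H_k(\mu+z)=\sum_{j\le k}\binom{k}{j}\sqrt{j!/k!}\,\mu^{k-j}H_j(z)$ and orthonormality.

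Given $\Theta$, write $X_{ij}=\ell\lambda\mathds 1\{\Theta_i=\Theta_j\in[\ell]\}$, so that $Y=X+Z$ with $Z$ independent of $\Theta$. The entries are independent given $\Theta$, so by the one-dimensional formula
\[
\E_\QQ[H_\alpha(Y)\mid\Theta]=\frac{X^\alpha}{\sqrt{\alpha!}} .
\]
Now $X^\alpha\neq0$ iff all vertices in each connected component of $\alpha$ share a common nonzero label. In that case $X^\alpha=(\ell\lambda)^{|\alpha|}$. The probability of this event is $\prod_{\text{comp}}\ell\,(\rho/\ell)^{|V(\alpha_i)|}=\ell^{|\mathcal C(\alpha)|-|V(\alpha)|}\rho^{|V(\alpha)|}$: each component picks one of $\ell$ colors, and each of its vertices has that color with probability $\rho/\ell$. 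Multiplying gives the stated formula for $d_\alpha$, and the same computation with $\ell'$ gives $c_\alpha$. Self-loops $(i,i)$ are harmless, since the indicator $\mathds 1\{\Theta_i\in[\ell]\}$ is consistent with the formula.

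For $M$, we condition on $\Theta$ and factor entrywise. The $\Theta$-part of $\psi_{\beta\gamma}$ is $\Theta$-measurable, and
\[
\E[H_\alpha(X+Z)H_\beta(Z)\mid\Theta]=\mathds 1_{\beta\le\alpha}\binom{\alpha}{\beta}\sqrt{\frac{\beta!}{\alpha!}}\,X^{\alpha-\beta}.
\]
Here $\alpha-\beta$ is exactly the multigraph $\alpha\backslash\beta$. We then need
\[
\E\Big[X^{\alpha\backslash\beta}\prod_{i\in\gamma}\frac{\mathds 1[\Theta_i\ne0]-\rho}{\sqrt{\rho(1-\rho)}}\Big].
\]
Write $\eta=\alpha\backslash\beta$. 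The labels of vertices outside $V(\eta)$ are independent of $X^{\eta}$ and the centered factors have mean zero, so the expectation vanishes unless $\gamma\subseteq V(\eta)$. On the event $X^\eta\neq0$, every vertex of $V(\eta)$ is planted, so each factor in $\gamma$ equals $(1-\rho)/\sqrt{\rho(1-\rho)}=\sqrt{(1-\rho)/\rho}$. This yields $((1-\rho)/\rho)^{|\gamma|/2}$ times $\E[X^\eta]=(\ell\lambda)^{|\eta|}\ell^{|\mathcal C(\eta)|-|V(\eta)|}\rho^{|V(\eta)|}$. Combining these gives the claimed expression.

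The only delicate point is bookkeeping. We must check that the $\alpha\backslash\beta$ convention (removing isolated vertices) matches the definitions of $V$ and $\mathcal C$ used in the label probability. We must also check that the case $\beta=\alpha$ (the empty $\eta$) gives $M=\mathds 1_{\gamma=\varnothing}$, consistent with the convention $|\mathcal C(\varnothing)|=0$. Everything else is a routine product computation.
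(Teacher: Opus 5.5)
Your proposal is correct and follows essentially the same route as the paper: condition on $\Theta$, use the Hermite translation formula with orthonormality to reduce $M_{\beta\gamma,\alpha}$ to $\mathds 1_{\beta\le\alpha}\binom{\alpha}{\beta}\sqrt{\beta!/\alpha!}\,\E[X^{\alpha\backslash\beta}\prod_{i\in\gamma}(\cdots)]$, and then evaluate that latent expectation through the probability that each component is monochromatic with a planted label. You also supply details the paper leaves implicit, namely why $\gamma\subseteq V(\alpha\backslash\beta)$ is forced and why each vertex factor equals $\sqrt{(1-\rho)/\rho}$ on the support of $X^{\alpha\backslash\beta}$.
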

For the above choices of $\phi_\alpha$ and $\psi_{\beta\gamma}$, we use the
standard shift identity for normalized Hermite polynomials. Namely, for
$Z\sim N(0,1)$ and deterministic $x$,
\[
h_m(x+Z)=\sum_{r=0}^m \sqrt{\frac{r!}{m!}}\binom{m}{r}x^{m-r}h_r(Z).
\]
Applying this identity coordinatewise gives
\[
H_\alpha(X+Z)
=
\sum_{0\le \beta\le \alpha}
\sqrt{\frac{\beta!}{\alpha!}}\binom{\alpha}{\beta}
X^{\alpha-\beta}H_\beta(Z),
\]
which is the Hermite expansion also used in the latent-variable analysis of~\cite{sohn2025sharp}. Specifically,
\begin{align*}
    H_\alpha(Y)=\prod_{i\leq j}h_{\alpha_{ij}}(X_{ij}+Z_{ij})&=\prod_{i \leq j}\sum_{k=0}^{\alpha_{ij}}\sqrt{\frac{k!}{\alpha_{ij}!}}{ \binom{\alpha_{ij}}{k}}X_{ij}^{\alpha_{ij}-k}h_k(Z_{ij})\\
    &=\sum_{0 \leq \beta \leq \alpha}\sqrt{\frac{\beta!}{\alpha!}}{\binom{\alpha}{\beta}}X^{\alpha - \beta}H_\beta(Z).
\end{align*}
\begin{proof}[Proof of Lemma \ref{lem:psm:cdM}]
Given the Hermite expansion above, we now compute $c_\alpha$, $d_\alpha$, and $M_{\beta\gamma,\alpha}$ as follows
\begin{align*}
    \nonumber c_\alpha&=\E_\PP[H_\alpha(Y)]=\frac{1}{\sqrt{\alpha!}}\E_\PP\left[X^{\alpha}\right]=\frac{\ell'^{|\mathcal{C}(\alpha)|+|\alpha|-|V(\alpha)|}}{\sqrt{\alpha!}}\lambda^{|\alpha|}\rho^{|V(\alpha)|},  \\   
    d_{\alpha}&=\frac{\ell^{|\mathcal{C}(\alpha)|+|\alpha|-|V(\alpha)|}}{\sqrt{\alpha!}}\lambda^{|\alpha|}\rho^{|V(\alpha)|},\\
M_{\beta\gamma,\alpha}&=\E_\QQ\left[H_\alpha(Y)\psi_{\beta\gamma}(Z,\Theta)\right]\\
\nonumber &=\mathds 1_{\beta\le\alpha}\,\mathds 1_{\gamma\subseteq V(\alpha \backslash \beta)} 
\sqrt{\frac{\beta!}{\alpha!}}
\binom{\alpha}{\beta} \E_\QQ\left[X^{\alpha \backslash \beta}\left(\frac{\mathds 1 [\Theta\neq 0]-\rho}{\sqrt{\rho(1-\rho)}}\right)^\gamma\right].
\end{align*}
To complete the proof, note that
\begin{align*}
\E_\QQ\bigg[X^{\alpha \backslash \beta} & \left(\frac{\mathds 1 [\Theta\neq 0]-\rho}{\sqrt{\rho(1-\rho)}}\right)^\gamma\bigg]\\
 & =
 \ell^{\,|\alpha \backslash \beta|+|\mathcal{C}(\alpha \backslash \beta)|-|V(\alpha \backslash \beta)|}\lambda^{|\alpha \backslash \beta|}\,
\rho^{|V(\alpha \backslash \beta)\backslash\gamma|}\, (\rho(1-\rho))^{\frac{|\gamma|}{2}}
\\
&=
\ell^{\,|\alpha \backslash \beta|+|\mathcal{C}(\alpha \backslash \beta)|-|V(\alpha \backslash \beta)|} \lambda^{|\alpha \backslash \beta|}\,
\rho^{|V(\alpha \backslash \beta)|}\,
\,
 \left(\frac{1 - \rho}{\rho}\right)^{\frac{|\gamma|}{2}}. \qquad \qquad \qquad \qquad \qedhere
\end{align*} 
\end{proof}
 \subsubsection*{Excluding bad terms}
By Lemma~\ref{lem:psm:cdM}, 
the distinction between good and bad graphs in the sense of Definition~\ref{def:good_graphs} is governed entirely by the exponent
$|\mathcal{C}(\alpha)|+|\alpha|-|V(\alpha)|$.
Indeed, $c_\alpha$ and $d_\alpha$ have the same dependence on $\lambda$ and $\rho$ and differ by replacing $\ell'$ with $\ell$ in this exponent. The following lemma identifies the connected graphs that are good in the sense of Definition~\ref{def:good_graphs}.

\begin{lemma}\label{lem:tree-uninformative}
A connected multigraph $\alpha$ has $c_\alpha\neq d_\alpha$ if and only if $|\alpha|\ge |V(\alpha)|.$
\end{lemma}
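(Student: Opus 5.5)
We plan to argue directly from the closed-form expressions in Lemma~\ref{lem:psm:cdM}. For a connected nonempty multigraph $\alpha$ we have $|\mathcal C(\alpha)|=1$, so
\[
c_\alpha-d_\alpha=\frac{\lambda^{|\alpha|}\rho^{|V(\alpha)|}}{\sqrt{\alpha!}}\Bigl(\ell'^{\,1+|\alpha|-|V(\alpha)|}-\ell^{\,1+|\alpha|-|V(\alpha)|}\Bigr).
\]
Since $\lambda>0$ and $\rho\in(0,1)$, the prefactor is strictly positive. Hence $c_\alpha\ne d_\alpha$ holds exactly when $\ell'^{k}\ne\ell^{k}$, where $k:=1+|\alpha|-|V(\alpha)|$ is the cyclomatic number (first Betti number) of $\alpha$, counting edges with multiplicity. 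Loops $(i,i)$ count as edges contributing one to $|\alpha|$ and one vertex to $V(\alpha)$.

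The first step is to show $k\ge 0$ for every connected nonempty multigraph. Any connected graph on $|V(\alpha)|$ vertices contains a spanning tree. With loops present, a single vertex carrying a loop has $|V|=1$ and $|\alpha|\ge 1$. So in all cases $|\alpha|\ge |V(\alpha)|-1$, giving $k\ge0$. We should check that the convention $V(\alpha)=$ non-isolated vertices, with a loop making its vertex non-isolated, is consistent with the exponent computed in Lemma~\ref{lem:psm:cdM}. It is: a connected component on vertex set $V$ contributes $\rho^{|V|}\ell\cdot(\ell\lambda/\ell)^{\ldots}$, i.e.\ $\ell^{1-|V|}(\ell\lambda)^{|\alpha|}$.

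The second step is the dichotomy. If $k=0$, i.e.\ $|\alpha|=|V(\alpha)|-1$ (a tree with simple edges), then both powers equal $1$ and $c_\alpha=d_\alpha$. If $k\ge1$, i.e.\ $|\alpha|\ge|V(\alpha)|$, then $t\mapsto t^k$ is strictly increasing on positive reals, and $\ell\ne\ell'$ are distinct positive integers, so $\ell'^k\ne\ell^k$. Combining the two cases gives $c_\alpha\ne d_\alpha$ iff $|\alpha|\ge|V(\alpha)|$.

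No step is a genuine obstacle. The only point requiring care is the bookkeeping for loops and multi-edges: we must confirm that a double edge or a loop already counts as $k\ge1$, so that, for example, $\alpha$ equal to a single loop is good, consistent with the trace statistic of Theorem~\ref{thm:weak_PSM}. We must also confirm that the empty graph is excluded, or handled by the convention that it is good.
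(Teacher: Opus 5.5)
Your proposal is correct and follows the paper's argument: the paper also reads the exponent $1+|\alpha|-|V(\alpha)|$ off Lemma~\ref{lem:psm:cdM}, phrased as the ratio $c_\alpha/d_\alpha=(\ell'/\ell)^{1+|\alpha|-|V(\alpha)|}$ rather than as your difference with a positive prefactor, and uses $\ell\neq\ell'$ to conclude. Your explicit checks that this exponent is nonnegative for connected multigraphs (including loops) and that the empty graph must be excluded fill in points the paper leaves implicit.
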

\noindent Equivalently, $\alpha$ contains a cycle in the multigraph sense. In particular, loops and parallel-edge cycles count as cycles.
\begin{proof}[Proof of Lemma \ref{lem:tree-uninformative}]
By Lemma~\ref{lem:psm:cdM}, for connected $\alpha$ on $|V(\alpha)|$ vertices with $|\alpha|$ edges,
$c_{\alpha}/d_{\alpha} = (\ell'/\ell)^{1+|\alpha|-|V(\alpha)|}$. Since $\ell'\neq \ell$, this ratio equals $1$ if and only if $1+|\alpha|-|V(\alpha)|=0$, i.e., if and only if $|\alpha|=|V(\alpha)|-1$. For a connected multigraph, this is exactly the tree case. Hence, $c_\alpha\neq d_\alpha$ if and only if $|\alpha|\ge |V(\alpha)|$, equivalently if and only if $\alpha$ contains a cycle in the multigraph sense.
\end{proof}

\noindent
Thus, connected bad indices are precisely trees. Since every nonempty subgraph of a tree is again a forest, Remark~\ref{rem:downward_new} gives the following characterization: $\alpha\in\goodI$ exactly when none of its connected components is a tree, or equivalently, when every connected component of $\alpha$ contains a cycle. Accordingly, $\beta\gamma\in\goodJ$ if and only if $\beta\in\goodI$ and $\gamma\subseteq V(\beta)$.
\subsubsection*{Constructing $u$} 
By Lemma~\ref{lem:reduction_good}, it suffices to construct a
component-consistent vector $u$ supported on $\goodJ$ with
$u_{\varnothing\varnothing}=1$, such that \eqref{eq:u_constraint} holds for every
connected $\alpha\in\goodI$. We now construct this certificate and then
bound its squared norm.
\begin{proposition}\label{prop:u-PSM}
Define  $u_{\varnothing\varnothing}=1$. For every nonempty $\alpha\in\goodI$ with connected components
$\alpha_1,\ldots,\alpha_C$, and every $\gamma\subseteq V(\alpha)$, let
\[
u_{\alpha\gamma}
=
\left(-\sqrt{\frac{\rho}{1-\rho}}\right)^{|\gamma|}
\prod_{i=1}^C (c_{\alpha_i}-d_{\alpha_i}).
\]
Set $u_{\alpha\gamma}=0$ whenever $\alpha \gamma\notin\goodJ$. Then $u$ is supported on $\goodJ$, is component-consistent, and satisfies
\[
\sum_{\beta\gamma\in\goodJ}
u_{\beta\gamma}M_{\beta\gamma,\alpha}
=
c_\alpha \qquad \text{for all connected} \ \ \alpha \in \goodI.
\]
\end{proposition}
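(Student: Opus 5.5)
The plan is to check the three claims in turn: support, component consistency, and the linear identity. The identity should reduce to an alternating sum over $\gamma$ that vanishes except in two boundary cases.

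\emph{Support and component consistency.} By definition, $u_{\alpha\gamma}$ is nonzero only for $\alpha\in\goodI$ and $\gamma\subseteq V(\alpha)$, which is exactly $\goodJ$; so the support claim is immediate. For component consistency, let $\alpha$ have components $\alpha_1,\dots,\alpha_C$ and set $\gamma_i:=\gamma\cap V(\alpha_i)$. The vertex sets $V(\alpha_i)$ are disjoint, so $|\gamma|=\sum_i|\gamma_i|$. The prefactor $\bigl(-\sqrt{\rho/(1-\rho)}\bigr)^{|\gamma|}$ therefore splits as a product over components. This gives $u_{\alpha\gamma}=\prod_i u_{\alpha_i\gamma_i}$, with $u_{\varnothing\varnothing}=1$ handling the empty case.

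\emph{The identity for connected good $\alpha$.} Fix a connected $\alpha\in\goodI$. By Lemma~\ref{lem:psm:cdM}, $M_{\beta\gamma,\alpha}\neq 0$ forces $\beta\le\alpha$ and $\gamma\subseteq V(\alpha\backslash\beta)$. The support of $u$ further forces $\gamma\subseteq V(\beta)$. Hence the sum runs over good $\beta\le\alpha$ and $\gamma\subseteq V(\beta)\cap V(\alpha\backslash\beta)$.

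For such a pair, every factor of $u_{\beta\gamma}M_{\beta\gamma,\alpha}$ other than the $\gamma$-dependent ones is independent of $\gamma$. The $\gamma$-dependent part of the product is
\[
\Bigl(-\sqrt{\tfrac{\rho}{1-\rho}}\Bigr)^{|\gamma|}\Bigl(\tfrac{1-\rho}{\rho}\Bigr)^{|\gamma|/2}=(-1)^{|\gamma|}.
\]
Summing over $\gamma\subseteq S_\beta:=V(\beta)\cap V(\alpha\backslash\beta)$ gives $\sum_{\gamma\subseteq S_\beta}(-1)^{|\gamma|}=\mathds 1\{S_\beta=\varnothing\}$.

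The key combinatorial step is to show that $S_\beta=\varnothing$ only when $\beta=\varnothing$ or $\beta=\alpha$. Suppose instead that $\beta$ and $\alpha-\beta$ both have at least one edge; as multigraphs these two edge multisets partition the edges of $\alpha$. Since $\alpha$ is connected, some vertex is incident to an edge of each part, and that vertex lies in $S_\beta$. So only the two extreme terms survive.

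\emph{The two surviving terms.} For $\beta=\varnothing$ (with $\gamma=\varnothing$), the term is $u_{\varnothing\varnothing}M_{\varnothing\varnothing,\alpha}=d_\alpha$. For $\beta=\alpha$, $\alpha\backslash\alpha$ is empty, so $S_\alpha=\varnothing$ and only $\gamma=\varnothing$ contributes. Lemma~\ref{lem:psm:cdM} gives $M_{\alpha\varnothing,\alpha}=\sqrt{\alpha!/\alpha!}\binom{\alpha}{\alpha}\cdot 1=1$. Since $\alpha$ is connected, $u_{\alpha\varnothing}=c_\alpha-d_\alpha$. The total is $d_\alpha+(c_\alpha-d_\alpha)=c_\alpha$, as required.

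The main obstacle is the bookkeeping in this last part. One must make sure that all $\gamma$-independent factors are treated uniformly, so that the cancellation is exactly the alternating subset sum. One must also apply the multigraph meaning of $\alpha\backslash\beta$ (multiplicity subtraction) carefully in the connectivity step, especially for loops and parallel edges.
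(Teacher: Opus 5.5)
Your proposal is correct and follows the paper's route. The paper packages the same $\gamma$-alternating-sum cancellation into Lemma~\ref{lem:psm-cancellation} (cases $\beta=\varnothing$, $\beta=\alpha$, $\varnothing\neq\beta\subsetneq\alpha$, using $V(\alpha\backslash\beta)\cap V(\beta)\neq\varnothing$ by connectivity) and then sums $d_\alpha+(c_\alpha-d_\alpha)=c_\alpha$. The differences are cosmetic: for $\beta=\alpha$ the paper uses Hermite orthogonality and centered vertex factors, where you read $\gamma=\varnothing$ off the indicator $\mathds 1_{\gamma\subseteq V(\alpha\backslash\beta)}$ in Lemma~\ref{lem:psm:cdM}. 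You also spell out the support and component-consistency claims, which the paper leaves implicit.
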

The proof uses the following cancellation identity.
\begin{lemma}\label{lem:psm-cancellation}Let $\alpha \in \goodI$ be connected and nonempty, and let $\beta \subseteq \alpha$ with $\beta \in \goodI$. Then
\[\sum_{\gamma \subseteq V(\beta)}\left(-\sqrt{\frac{\rho}{1-\rho}}\right)^{|\gamma|} M_{\beta\gamma,\alpha}=\begin{cases}
d_\alpha, & \beta=\varnothing,\\
1, & \beta=\alpha,\\
0, & \varnothing\neq\beta\lneq\alpha.
\end{cases}\]
\end{lemma}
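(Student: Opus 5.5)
Plugging the explicit formula for $M_{\beta\gamma,\alpha}$ from Lemma~\ref{lem:psm:cdM} into the sum shows that the $\gamma$-dependence factors out completely. Write $\alpha\backslash\beta$ for the residual graph and set
\[
K_{\beta,\alpha}:=\sqrt{\tfrac{\beta!}{\alpha!}}\binom{\alpha}{\beta}\ell^{|\alpha\backslash\beta|+|\mathcal C(\alpha\backslash\beta)|-|V(\alpha\backslash\beta)|}\lambda^{|\alpha\backslash\beta|}\rho^{|V(\alpha\backslash\beta)|}.
\]
Then, for $\beta\le\alpha$,
\[
\Bigl(-\sqrt{\tfrac{\rho}{1-\rho}}\Bigr)^{|\gamma|}M_{\beta\gamma,\alpha}=K_{\beta,\alpha}\,\mathds 1_{\gamma\subseteq V(\alpha\backslash\beta)}\,(-1)^{|\gamma|}.
\]
This holds because $(\sqrt{\rho/(1-\rho)})^{|\gamma|}$ and $((1-\rho)/\rho)^{|\gamma|/2}$ cancel exactly. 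The sum over $\gamma$ therefore becomes
\[
K_{\beta,\alpha}\sum_{\gamma\subseteq V(\beta)\cap V(\alpha\backslash\beta)}(-1)^{|\gamma|}=K_{\beta,\alpha}\,\mathds 1\{V(\beta)\cap V(\alpha\backslash\beta)=\varnothing\}.
\]
This is the standard binomial cancellation. The whole lemma reduces to deciding when $\beta$ and $\alpha\backslash\beta$ share a vertex.

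Next I would do the case analysis.

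\emph{Case $\beta=\varnothing$.} The only term is $\gamma=\varnothing$, and the indicator holds trivially. Here $K_{\varnothing,\alpha}$ reduces to $\frac{1}{\sqrt{\alpha!}}\ell^{|\alpha|+|\mathcal C(\alpha)|-|V(\alpha)|}\lambda^{|\alpha|}\rho^{|V(\alpha)|}=d_\alpha$, by Lemma~\ref{lem:psm:cdM}.

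\emph{Case $\beta=\alpha$.} The residual $\alpha\backslash\beta$ is empty, so $V(\alpha\backslash\beta)=\varnothing$ and only $\gamma=\varnothing$ survives. Then $K_{\alpha,\alpha}=\sqrt{\alpha!/\alpha!}\binom{\alpha}{\alpha}\ell^0\lambda^0\rho^0=1$, using the conventions $|\mathcal C(\varnothing)|=0$ and $V(\varnothing)=\varnothing$ fixed in the notation section.

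\emph{Case $\varnothing\ne\beta\lneq\alpha$.} I need to show that $V(\beta)\cap V(\alpha\backslash\beta)\ne\varnothing$. Both $\beta$ and $\alpha\backslash\beta$ are nonempty. The residual is nonempty because $\beta\lneq\alpha$ as multisets, so some edge multiplicity remains positive. Every edge of $\alpha$ lies in one of the two graphs, so $V(\alpha)=V(\beta)\cup V(\alpha\backslash\beta)$. Suppose these two vertex sets were disjoint. Then no edge of $\alpha$ could join them, since any such edge would have both endpoints in one side. This would make $\alpha$ disconnected, contradicting the hypothesis. Hence the intersection is nonempty, the alternating sum vanishes, and the total is $0$.

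The main obstacle, though mild, is the bookkeeping about multigraph edges. An edge with multiplicity $\alpha_e$ may be split, with $\beta_e$ copies in $\beta$ and $\alpha_e-\beta_e$ copies in the residual. In that situation both endpoints lie in both vertex sets, which only helps the intersection argument. Loops also need care: a loop at $i$ contributes $i$ to the vertex set of whichever side contains it. I also need to confirm that the indicator $\gamma\subseteq V(\alpha\backslash\beta)$ in Lemma~\ref{lem:psm:cdM} really refers to non-isolated vertices of the residual. This is exactly what makes the cancellation fire. The hypothesis $\beta\in\goodI$ is not needed for the identity itself; it only restricts where the identity is applied.
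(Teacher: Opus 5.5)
Your proposal is correct and follows essentially the same route as the paper. In both, the $(\pm)$ powers of $\sqrt{\rho/(1-\rho)}$ cancel against $((1-\rho)/\rho)^{|\gamma|/2}$, leaving an alternating sum over $\gamma\subseteq V(\beta)\cap V(\alpha\backslash\beta)$, and connectivity of $\alpha$ makes this intersection nonempty when $\varnothing\neq\beta\lneq\alpha$. The only cosmetic difference is in the cases $\beta=\varnothing$ and $\beta=\alpha$: you read them off the closed form of Lemma~\ref{lem:psm:cdM}, whereas the paper argues directly from $\psi_{\varnothing\varnothing}\equiv 1$ and from Hermite orthogonality with centered vertex factors.
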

\begin{proof}
    First, suppose $\beta=\varnothing$. Since $\gamma \subseteq V(\beta)$ and $V(\beta)=\varnothing$, the only possible choice is $\gamma=\varnothing$. Since $\psi_{\varnothing \varnothing}=1$, we get 
    \[1\cdot M_{\varnothing\varnothing,\alpha}=\E_\QQ[H_\alpha(Y)]=d_\alpha.\]
    This proves the first case. Next, let $\beta=\alpha$.  Orthogonality in the Hermite basis gives
\[
M_{\alpha\gamma,\alpha}
=
\E_\QQ\left[\prod_{i=1}^n
\left(\frac{\mathds 1\{\Theta_i\neq0\}-\rho}{\sqrt{\rho(1-\rho)}}\right)^{\gamma_i}\right].
\]
This equals $1$ if $\gamma=\varnothing$ and $0$ otherwise, since the factors are independent and centered. Hence,
\[
\sum_{\gamma\subseteq V(\alpha)}
\left(-\sqrt{\frac{\rho}{1-\rho}}\right)^{|\gamma|}
M_{\alpha\gamma,\alpha}=1.
\]
Finally, suppose $\varnothing\neq\beta\subsetneq\alpha$. Since
$M_{\beta\gamma,\alpha}=0$ unless $\gamma\subseteq V(\alpha\backslash\beta)$, the sum over $\gamma\subseteq V(\beta)$ is effectively a sum over $\gamma\subseteq V(\alpha\backslash\beta)\cap V(\beta)$. For such $\gamma$, Lemma~\ref{lem:psm:cdM} gives $M_{\beta\gamma,\alpha}=M_{\beta\varnothing,\alpha}
\left(\frac{1-\rho}{\rho}\right)^{|\gamma|/2}.$
Hence,
\[
\sum_{\gamma\subseteq V(\beta)}
\left(-\sqrt{\frac{\rho}{1-\rho}}\right)^{|\gamma|}
M_{\beta\gamma,\alpha}=
M_{\beta\varnothing,\alpha}
\sum_{\gamma\subseteq V(\alpha\backslash\beta)\cap V(\beta)}
(-1)^{|\gamma|}.
\]
Since $\alpha$ is connected and $\varnothing\neq\beta\subsetneq\alpha$, the
edge sets of $\beta$ and $\alpha\setminus\beta$ cannot be vertex-disjoint;
otherwise, $\alpha$ would be disconnected. Thus, some edge of
$\alpha\setminus\beta$ is incident to a vertex of $V(\beta)$, so $S:=V(\alpha\setminus\beta)\cap V(\beta)\neq\varnothing.$
Thus,
$\sum_{\gamma\subseteq S}(-1)^{|\gamma|}=0,$
which proves the third case.
\end{proof}
\noindent We now prove Proposition~\ref{prop:u-PSM}.
\begin{proof}[Proof of Proposition~\ref{prop:u-PSM}]
Fix a connected nonempty $\alpha\in \goodI$. Since $M_{\beta\gamma,\alpha}=0$ unless $\beta\subseteq\alpha$, only such $\beta$ contribute to the sum. We split these contributions into the cases $\beta=\varnothing$, $\beta=\alpha$, and $\varnothing\neq\beta\subsetneq\alpha$. By Lemma~\ref{lem:psm-cancellation}, and by the definition of $u$ for connected $\alpha$,
\[\sum_{\beta\gamma\in\widehat{\mathcal J}} u_{\beta\gamma}M_{\beta\gamma,\alpha}=u_{\varnothing\varnothing}d_\alpha+(c_\alpha-d_\alpha)\cdot 1+0=d_\alpha+c_\alpha-d_\alpha=c_\alpha. \qedhere
\]
\end{proof}

It remains to control the norm of $u$. This reduces to enumerating graphs in
$\goodI$ with prescribed numbers of vertices, edges, and connected components; the required bound is given next.

\begin{lemma}\label{lemma:nr-multigraphs}
  Let \(v,d,C\ge 1\), and set \(k:=d-v\). The number of multigraphs $\alpha\in\widehat{\mathcal I}$ with $v$ vertices, $d$ edges, and $C$ connected components is at most
    \begin{equation*}
\frac{ n^v}{C!}  \binom{ v^2+k}{k}
\sum_{\substack{v_1+\cdots+v_C=v \\ v_i \ge 1}}
\prod_{i=1}^C
\frac{v_i^{ v_i}}{v_i!}.
\end{equation*}
\end{lemma}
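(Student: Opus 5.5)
We count by construction: choose the vertex set and its partition into components, then a spanning structure on each component, then the surplus edges. First choose the $v$ non-isolated vertices in at most $n^v/v!$ ways (up to the ordering that we absorb). Then partition the $v$ vertices into $C$ unordered blocks of sizes $v_1,\dots,v_C$. For a fixed ordered composition $(v_1,\dots,v_C)$, the number of ordered set partitions is $v!/\prod_i v_i!$. Dividing by $C!$ accounts for unordering, with possible overcounting, which is fine for an upper bound. Combined with the $n^v/v!$ factor, this gives $\frac{n^v}{C!}\prod_i \frac{1}{v_i!}$, summed over compositions.

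On each block of size $v_i$, the component is connected, so it contains a spanning tree. By Cayley's formula there are $v_i^{v_i-2}\le v_i^{v_i}$ labeled spanning trees. We fix one spanning tree per component, which may overcount, but that only helps the upper bound. The spanning forest uses $\sum_i (v_i-1)=v-C$ edges. The remaining $d-(v-C)=k+C$ edges form a multiset of pairs (loops allowed) inside $[v]$. More precisely, each remaining edge lies within one block, but we bound crudely by allowing any of the at most $\binom{v+1}{2}\le v^2$ vertex pairs among the chosen $v$ vertices. The number of multisets of size $m$ from a set of size $N$ is $\binom{N+m-1}{m}$.

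The main obstacle is the surplus exponent. The count above naturally gives $\binom{v^2+k+C-1}{k+C}$, not $\binom{v^2+k}{k}$, so the extra $C$ edges must be absorbed. Here goodness is used: every component contains a cycle, so $|\alpha_i|\ge v_i$ and each component has at least one edge beyond its spanning tree. The refined plan is therefore as follows. For each component, we choose a spanning \emph{unicyclic} subgraph instead of a tree, namely a connected spanning subgraph with exactly $v_i$ edges (loops and double edges allowed). The number of such multigraphs on $v_i$ labeled vertices is at most $v_i^{v_i}$. This can be seen via functional digraphs: each vertex picks one out-neighbour, giving $v_i^{v_i}$ maps, and every connected unicyclic multigraph, including one with a loop or a doubled edge as its cycle, arises from at least one such map. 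This uses exactly $v$ edges. The remaining $d-v=k$ edges are then a multiset of size $k$ from at most $v^2$ pair-types, giving at most $\binom{v^2+k-1}{k}\le\binom{v^2+k}{k}$ choices.

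Multiplying the three factors and summing over compositions yields the stated bound. The step needing the most care is the claim that every good connected multigraph contains a spanning connected unicyclic sub-multigraph that is the image of a functional digraph. To prove it, take a cycle of $\alpha_i$ (which exists by Lemma~\ref{lem:tree-uninformative}), extend it to a spanning connected subgraph by adding tree edges oriented toward the cycle, and orient the cycle cyclically. A loop corresponds to a fixed point, and a parallel pair corresponds to a 2-cycle $i\to j\to i$.
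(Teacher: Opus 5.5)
Your proof is correct and is essentially the paper's argument: choose the vertex set, split it into $C$ blocks via the factor $v!/(C!\prod_i v_i!)$, fix a spanning connected unicyclic piece in each component so that the one compulsory surplus edge per component is absorbed, and count the remaining $k$ edges as a multiset from at most $v^2$ pair types. The differences are minor. The paper counts the unicyclic piece as a Cayley spanning tree times one compulsory non-tree edge, $v_i^{v_i-2}\cdot\frac{v_i(v_i+1)}{2}\le v_i^{v_i}$, where you use functional digraphs; and the paper distributes the $k$ surplus edges within components and recombines them with the Chu--Vandermonde identity (Lemma~\ref{lem:combinatorial_bound}), which your single global multiset over all pairs of the $v$ vertices avoids at no cost to the final bound.
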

\begin{proof}
    See Supplementary Material S.1.1.
\end{proof}
\subsubsection*{Putting it all together}
We now combine the $u$ construction with \Cref{lemma:nr-multigraphs} to prove part~(i) of Theorem~\ref{thm:strong_PSM}.

\begin{proof}[Proof of Theorem~\ref{thm:strong_PSM} \textnormal{(i)}]
By~\Cref{lem:reduction_good} applied to the certificate from Proposition~\ref{prop:u-PSM}, noting that $u_{\varnothing\varnothing}=1$, we have
 \begin{align}
\notag \Adv_{\le D}^2(\PP,\QQ)
\notag \leq \sum_{\alpha\gamma\in\goodJ} u_{\alpha\gamma}^2&=1+
\sum_{\alpha\gamma\in\widehat{\mathcal J}:\alpha\neq\varnothing}
u_{\alpha\gamma}^2 \\
\notag &=
1+
\sum_{\alpha\in\widehat{\mathcal I}:\alpha\neq\varnothing}
\left(\prod_{i=1}^C
(c_{\alpha_i}-d_{\alpha_i})^2\right)
\sum_{\gamma\subseteq V(\alpha)}
\left(\frac{\rho}{1-\rho}\right)^{|\gamma|} \\
&=
1+
\sum_{\alpha\in\widehat{\mathcal I}:\alpha\neq\varnothing}
\left(\prod_{i=1}^C
(c_{\alpha_i}-d_{\alpha_i})^2\right)
\left(1+\frac{\rho}{1-\rho}\right)^{|V(\alpha)|}.\label{eq:adv-psm}
\end{align}

\noindent Let $L:=2\max\{\ell,\ell'\}$. For every connected good component $\alpha_i$, for $i\in [C]$,
\[
c_{\alpha_i}-d_{\alpha_i}
=
\frac{
\ell'^{1+|\alpha_i|-|V(\alpha_i)|}
-
\ell^{1+|\alpha_i|-|V(\alpha_i)|}
}{\sqrt{\alpha_i!}}
\lambda^{|\alpha_i|}\rho^{|V(\alpha_i)|},
\]
and therefore
\[
(c_{\alpha_i}-d_{\alpha_i})^2
\leq
L^{2(1+|\alpha_i|-|V(\alpha_i)|)}
\frac{\lambda^{2|\alpha_i|}\rho^{2|V(\alpha_i)|}}{\alpha_i!}.
\]
Thus, if $\alpha$ has connected components
$\mathcal{C}(\alpha)=\{\alpha_1,\dots,\alpha_C\}$ with $C=|\mathcal C (\alpha)|$, then
\begin{equation}\label{eq:bound-c-d-by-L}
\prod_{\alpha_i\in\mathcal{C}(\alpha)}(c_{\alpha_i}-d_{\alpha_i})^2
\leq
L^{2(C+|\alpha|-|V(\alpha)|)}
\frac{\lambda^{2|\alpha|}\rho^{2|V(\alpha)|}}{\alpha!}.
\end{equation}

\noindent Set $$\tilde{\rho}:=\rho\sqrt{1+\frac{\rho}{1-\rho}}=\frac{\rho}{\sqrt{1-\rho}}.$$ Substituting \eqref{eq:bound-c-d-by-L} into \eqref{eq:adv-psm} and then applying Lemma~\ref{lemma:nr-multigraphs}, we obtain 
   \begin{align*}
&\Adv_{\leq D}^2(\PP,\QQ)\leq
1+
\sum_{v=1}^{D}
({ n}\tilde{\rho}^2\lambda^2)^{v}
\sum_{C=1}^{v}
L^{2C}\frac{1}{C!}
\sum_{\substack{v_1+\cdots+v_C=v\\ v_i\ge 1}}
\prod_{i=1}^C\frac{v_i^{v_i}}{v_i!}
\sum_{k=0}^{D} a_k(v_1,\dots,v_C),
\end{align*}
where
\begin{align*}
&a_k(v_1,\dots,v_C)
:=
(L\lambda)^{2k}
\binom{v^2+k}{k}.
\end{align*}
 
\paragraph*{Step 1: Bounding $\sum_{k=0}^D a_k(v_1,\ldots,v_C)$} 
To bound this expression, we follow similar steps to those
in \cite[Proof of Theorem 2.2 (a)]{sohn2025sharp}. Using $\binom{a}{b}\leq (ea/b)^b$ and splitting the sum we get,
\begin{align*}
\sum_{k=0}^{\,D}
   (L\lambda)^{2k} \binom{ v^2+k}{k}
   &\leq1+\sum_{ k=1}^{v^2}  \bigg((L\lambda)^2e\bigg(1+\frac{v^2}{k}\bigg)\bigg)^k+\sum_{{ k=v^2}}^{D} \bigg((L\lambda)^2e\bigg(1+\frac{v^2}{k}\bigg)\bigg)^k\\       
   &\leq 1+ v^2\sup_{k \in (0,\infty)} \bigg(\frac{2(L\lambda)^2e{ v^2}}{k}\bigg)^k+\sum_{k=v^2}^{D} (2L^2e\lambda^2)^k.
\end{align*}
We now note that the $\sup$ in the first term is attained at $k=2(L\lambda)^2v^2$. This gives $\exp(2L^2\lambda^2 v^2)\le \exp\left(\frac{2L^2v}{C_0}\right)$, where we used  \(1 \leq v \leq D \leq \lambda^{-2}/C_0\), which comes from the assumption on \(D\).
Hence, by choosing $C_0$ sufficiently large, the last term is at most $1$. Thus,
\begin{align}\label{eq:sum-ak}
   \sum_{k=0}^D a_k(v_1,\ldots,v_C)\leq 2+ v^2\exp(2L^2 v /C_0).
\end{align}

 \paragraph*{Step 2: Substituting the bound on $a_k(v_1,\ldots,v_C)$ and controlling the remaining sums} Using the bound on $\sum_{k=0}^D a_k(v_1,\ldots,v_C)$ given in \eqref{eq:sum-ak}, we have
 {\small \begin{align*}     
\Adv_{\leq D}^2(\PP,\QQ)
&\leq
1+
\sum_{v=1}^{D}
(n\tilde{\rho}^2\lambda^2)^{v}
\sum_{C=1}^{v}
\frac{L^{2C}}{C!}
\sum_{\substack{v_1+\cdots+v_C=v\\ v_i\ge 1}}
\prod_{i=1}^C\frac{v_i^{v_i}}{v_i!}
\left(2+ v^2\exp(2L^2v/C_0)\right).
\end{align*}}
Now let \[a_v:=({ n}\tilde{\rho}^2\lambda^2)^{v}
\sum_{C=1}^{v}
\frac{L^{2C}}{C!}
\sum_{\substack{v_1+\cdots+v_C=v\\ v_i\ge 1}}
\prod_{i=1}^C\frac{v_i^{v_i}}{v_i!}.\] To apply Stirling to the denominator, we note that we have $v_i! \geq \sqrt{2\pi v_i} (v_i/e)^{v_i}$. The factors~$e^{v_i}$ produced here multiply over components; since $v_1+\cdots+v_C=v$, their product is $e^v$. This is the source of the constant $e$ in the threshold. In particular, for any $v_1+\ldots+v_C=v$, we have
\[ \prod_{i=1}^C \frac{v_i^{v_i}}{v_i!}\leq \frac{e^v}{(2\pi)^{C/2}} \prod_{i=1}^C \frac{v_i^{v_i}}{v_i^{v_i + 1/2}} = \frac{e^v}{(2\pi)^{C/2}} \prod_{i=1}^C \frac{1}{v_i^{1/2}}  \leq \frac{e^v}{(2\pi)^{C/2}(v-C+1)^{1/2}}
.\]

 The last inequality follows because, subject to $v_1+\cdots+v_C=v$ and
$v_i\ge 1$, the product is maximized when one part equals $v-C+1$ and the
remaining $C-1$ parts equal $1$. Also, the number of terms in the sum is at most $\binom{v-1}{C-1}$ by `stars and bars'. 
Note that $\binom{v-1}{C-1}\leq \binom{v}{C}\leq (\frac{ev}{C})^C$. Combining this with Stirling's formula for $C!$, we obtain

   \begin{align*}
   a_v 
    &\leq  (ne\tilde{\rho}^2 \lambda^2)^v \sum_{C=1}^v \Big(\frac{L^2e^2v}{\sqrt{2\pi} C^2}\Big)^C.
    \end{align*}
    
\noindent Bounding the sum over $C$ by the number of terms times its maximum summand gives
\begin{align*}
v\max_{C\ge 1}\left(\frac{L^2 e^2 v}{\sqrt{2\pi} C^2}\right)^C
 \le
v \exp \left(\frac{2L}{(2\pi)^{1/4}}\sqrt v\right)
\le \exp\left(\log v+\frac{2L}{(2\pi)^{1/4}}\sqrt v\right)\le \exp(\kappa\sqrt v),
\end{align*}
where the last inequality holds since $\log v \le \sqrt v$ for all $v\ge 1$,
with $\kappa:=1+\frac{2L}{(2\pi)^{1/4}}$. 
Since $\tilde\rho=\rho(1+o(1))$, the assumption
$\lambda\le (1-\varepsilon)(\rho\sqrt{en})^{-1}$ implies $\lambda\le (1-\varepsilon/2)(\tilde\rho\sqrt{en})^{-1}
$ for all sufficiently large $n$. We use this form of the bound in what follows.
    \begin{align}
a_v
&\le  (1-\varepsilon/2)^{2v} \exp\left({\kappa\sqrt{v}}\right) 
=  \left((1-\varepsilon/2)^{2\sqrt{v}} \exp({\kappa})\right)^{\sqrt{v}}.
\label{eq.bound_av}
\end{align}
\noindent We note the following. Let $0<\varepsilon <1$, then for every real $j\geq 4\kappa/\varepsilon$, 
 \begin{equation}\label{clm.sqr_root}\bigg(\frac{1-\varepsilon/2}{1-\varepsilon/4}\bigg)^{j}\leq \exp\left({-\kappa}\right).
 \end{equation}
To see that \eqref{clm.sqr_root} holds, observe it is equivalent to prove that the $\log$ of the left-hand side (LHS) is at most $-\kappa$. The $\log$ of the LHS is
$j\log\left(\frac{1-\varepsilon/2}{1-\varepsilon/4}\right),$
which is bounded by $j\log(1-\varepsilon/4)\le -j\varepsilon/4$, since $\frac{1-\varepsilon/2}{1-\varepsilon/4}\le 1-\varepsilon/4$ and $\log(1-x)\le -x$. Thus,~\eqref{clm.sqr_root} follows by our assumption that $j\ge 4\kappa/\varepsilon$. Let $j_0$ be the minimum integer such that $j_0 \ge \frac{4\kappa}{\varepsilon}$, and set $v_0:=j_0^2$. Then, for every $v \ge v_0$, applying \eqref{clm.sqr_root} with $j=\sqrt{v}$ in \eqref{eq.bound_av} gives $a_v\le (1-\varepsilon/4)^{v}$. 
Hence, 
\begin{align*}
\Adv_{\leq D}^2(\PP,\QQ)& \le 1+\sum_{v=1}^D a_v \left(2+v^2\exp \left(\frac{2L^2v}{C_0}\right)\right).
\end{align*}
The finitely many terms $v<v_0$ contribute $O(1)$. Indeed, by
\eqref{eq.bound_av},
$a_v\le \exp(\kappa\sqrt v)\le \exp(\kappa\sqrt{v_0}),$ while $2+v^2\exp(2L^2v/C_0)
\le 2+v_0^2\exp(2L^2v_0/C_0)$ for all $v<v_0$. Since $v_0$ is independent of $n$, the total contribution
from $v<v_0$ is $O(1)$. For the tail $v\ge v_0$, choose $C_0$ large enough and enlarge $v_0$ if necessary, so that 
\[
2+v^2\exp \left(\frac{2L^2v}{C_0}\right) \le \left(1+\varepsilon/8\right)^v \qquad \text{for all} \ v \ge v_0.
\]
Since $a_v \le (1-\varepsilon/4)^v$ for $v \ge v_0$, we obtain
\begin{align*}
    \sum_{v=v_0}^D a_v \left(2+v^2\exp \left(\frac{2L^2v}{C_0}\right)\right) & \le \sum_{v=v_0}^\infty \left((1-\varepsilon/4)(1+\varepsilon/8)\right)^v=O(1),
\end{align*}
because $(1-\varepsilon/4)(1+\varepsilon/8)=1-\varepsilon/8-\varepsilon^2/32<1$.
Combining the finite part, i.e., $v<v_0$, and the tail gives
\[
\Adv_{\leq D}^2(\PP,\QQ)=O(1),
\]
which proves the claim.
\end{proof}

\subsubsection{Upper Bound}\label{sec:upperbound_PSM}
We now introduce the tools required for the proof of part (ii) of Theorem~\ref{thm:strong_PSM}. Lemma~\ref{lem:tree-uninformative} shows that, in the present planted-vs-planted setting, tree-indexed statistics have identical expectations under the two hypotheses and therefore cannot distinguish them. Thus, the first informative connected statistics are cyclic, which explains the appearance of balanced unicyclic graphs in the upper bound. This contrasts with many recovery analyses, where tree-like structures play a central role in belief-propagation and AMP heuristics; see~\cite{wein2025computational} for a survey. Thus, any degree-$D$ statistic used here must involve graphs with cycles, and the minimal such connected graphs are unicyclic. To keep the second-moment analysis tractable, we use a balanced subclass of unicyclic graphs, which we define below.
\paragraph*{ 
BUGs (Balanced Unicyclic Graphs)}
Fix an integer $k=k_n=\Theta(\log n)$, to be specified later (see \eqref{klarge}). Let $\mathcal{U}_k \subseteq \{0,1\}^{\binom{[n]}{2}}$
denote the family of \emph{balanced unicyclic graphs (BUGs)}. An element $\alpha\in\mathcal{U}_k$ is obtained as follows:
start from a root with exactly two children, attach to each child a rooted subtree with exactly $k$ edges, and then add one extra edge connecting the two children of the root. The resulting graph is connected and has exactly one cycle. We realize the minimal cycle as an edge joining the two children of the root rather than as a loop at the root, so that a single BUG family serves both PSM and PDS, where loops are not observed. See Figure~\ref{fig:BUGs3} for an illustration of $\mathcal{U}_3$. The BUG is the cyclic analogue of the balanced rooted trees of~\cite{sohn2025sharp}. The two $k$-edge rooted subtrees provide the balanced structure, while the edge joining the two children of the root supplies the minimal cycle required in the planted-vs-planted setting, where tree statistics are uninformative by Lemma~\ref{lem:tree-uninformative}.

\noindent By construction, each $\alpha\in\mathcal{U}_k$ has $D := 2k+3$
edges. Consider the degree-$D$ polynomial
\begin{equation}\label{def:BUG_f}
f(Y) := \sum_{\alpha\in\mathcal{U}_k} Y^\alpha .
\end{equation}

\noindent The cardinality of $\mathcal U_k$ is
\begin{align}\label{eq:size-U_k-app}
|\mathcal{U}_k|
\nonumber &= n \binom{n-1}{2}  \binom{n-3}{k} \binom{n-3-k}{k} (k+1)^{2(k-1)}\\
 &= (1+o(1))\,\left(4\pi e\,(k+1)^3\right)^{-1}\,(en)^D.
\end{align}

\begin{proposition}[Moment bounds for the BUG polynomial]\label{prop:BUG-moments} Let $k=\Theta(\log n)$ be chosen large enough so that \eqref{klarge} holds, and set $D:=2k+3$. Let $N:=(k+1)^{-6}(en\lambda\rho)^{2D}$. Assume $\lambda \ge (1+\varepsilon)(\rho\sqrt{en})^{-1}$, $n\rho=\omega(D^7)$, and $\rho\left((\ell\lambda)^2+1\right)=o(D^{-7})$. Then, for $f$ defined in~\eqref{def:BUG_f},
\[
\bigl|\E_{\PP}[f(Y)]-\E_{\QQ}[f(Y)]\bigr|  =  \Omega(\sqrt{N}),
\qquad\text{and}\qquad
\max\{\mathrm{Var}_{\QQ}(f(Y)),\mathrm{Var}_{\PP}(f(Y))\} =  o(N).
\]
\end{proposition}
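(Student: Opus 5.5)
The plan is to compute the first moment exactly and then control the variance by overlap decomposition, as in the tree-counting analysis of \cite{sohn2025sharp}.

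\textbf{Mean gap.} For a fixed $\alpha\in\mathcal U_k$ (a simple graph, so $Y^\alpha=H_\alpha(Y)$ and $\alpha!=1$), Lemma~\ref{lem:psm:cdM} with $|\alpha|=D$, $|V(\alpha)|=D$ and one component gives
\[
\E_\PP[Y^\alpha]-\E_\QQ[Y^\alpha]=(\ell'-\ell)\lambda^D\rho^D .
\]
Summing over $\mathcal U_k$ and using \eqref{eq:size-U_k-app}, the gap is
\[
\bigl|\E_{\PP}[f]-\E_{\QQ}[f]\bigr|=|\ell'-\ell|\,(1+o(1))\bigl(4\pi e (k+1)^3\bigr)^{-1}(en\lambda\rho)^D ,
\]
which is $\Theta(\sqrt N)$ by the definition of $N$. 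This step is routine.

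\textbf{Variance.} By symmetry it suffices to treat $\QQ$ with a generic number of communities $m\in\{\ell,\ell'\}$. Write
\[
\mathrm{Var}(f)=\sum_{\alpha,\alpha'\in\mathcal U_k}\bigl(\E[Y^\alpha Y^{\alpha'}]-\E[Y^\alpha]\E[Y^{\alpha'}]\bigr).
\]
Conditionally on $\Theta$, the entries are independent Gaussians with means $X_{ij}$. Hence
\[
\E[Y^\alpha Y^{\alpha'}\mid\Theta]=X^{\alpha\triangle\alpha'}\prod_{e\in\alpha\cap\alpha'}(X_e^2+1).
\]
Expanding $\prod(X_e^2+1)$ over subsets $T\subseteq\alpha\cap\alpha'$ gives terms
\[
\E\bigl[X^{(\alpha\triangle\alpha')+2T}\bigr]\le (m\lambda)^{|\alpha\triangle\alpha'|+2|T|}\rho^{|V((\alpha\triangle\alpha')\cup T)|}\, m^{(\text{excess})}.
\]
Pairs with $V(\alpha)\cap V(\alpha')=\varnothing$ contribute zero covariance because $\Theta$ is a product measure. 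For overlapping pairs, I would classify by the overlap graph $H=\alpha\cap\alpha'$ (edges) and the shared vertex set $S=V(\alpha)\cap V(\alpha')$, with $s=|S|\ge1$. The number of such pairs is at most $|\mathcal U_k|^2\cdot(\text{poly}(D))^{s}\,n^{-s}$ times combinatorial factors for placing $S$ inside the two BUG shapes. Bounding the number of pairs this way gives the factor $D^{O(s)}$.

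For each pair, the main term $T=\varnothing$ has weight $\lambda^{2D-2|H|}\rho^{2D-s}$ times an excess factor $m^{O(\#\text{cycles})}$. Relative to $N\approx(en\lambda\rho)^{2D}(k+1)^{-6}$, the overlapping contribution is bounded by a sum over shapes of terms like $(D^{c}/(n\rho))^{s-|H|}\,(1/(en\lambda^2\rho^2))^{|H|}$ times vertex-count factors. Shared vertices not covered by shared edges cost $D^{O(1)}/(n\rho)$ each, and this is $o(1)$ by $n\rho=\omega(D^7)$. Each shared edge in a shared forest costs $(1+\varepsilon)^{-2}$, giving geometric decay. Terms with $|T|>0$ replace a factor $\rho^{\cdot}$ by $(m\lambda)^2$. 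This is where $\rho((\ell\lambda)^2+1)=o(D^{-7})$ enters: it makes these terms subleading, including the diagonal $\alpha=\alpha'$ term, whose size relative to $N$ is $|\mathcal U_k|^{-1}\rho^{-D}\cdot(\ldots)$, small because $(en\rho\lambda)^{D}$ is large.

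\textbf{Main obstacle.} The delicate case is large overlaps: $H$ is a large shared subtree containing, or nearly containing, the root and the cycle edge. Here the gain per shared edge is only $(1+\varepsilon)^{-2}$, and the number of ways two BUGs can share a rooted subtree must not exceed roughly $(k+1)^{O(1)}$ per unit of polynomial loss. Two facts should make this work. First, a BUG's two branches have exactly $k$ edges and a fixed root structure, so a shared connected piece containing the root determines a matching up to $(k+1)^{O(1)}$ choices. Second, the choice $k\ge C\log n$ in \eqref{klarge} makes $(1+\varepsilon)^{-2k}\le n^{-10}$, which kills the polylog losses when the overlap is nearly everything. I would first handle connected overlaps containing the cycle via a direct count, and then treat general $H$ as a forest of such pieces, bounding each component's contribution and multiplying. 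The result is $\max\mathrm{Var}=o(N)$.
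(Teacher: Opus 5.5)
Your mean-gap computation is correct, and observing that vertex-disjoint pairs have exactly zero covariance is a cleaner way to handle the leading term than the paper's asymptotic cancellation. The variance argument, however, has two genuine gaps.

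\textbf{The stated bookkeeping cannot give the sharp constant.} Above threshold the gain per shared edge can be as weak as the constant $(en\lambda^2\rho^2)^{-1}\le(1+\varepsilon)^{-2}$. A pair count of the form $|\mathcal U_k|^2(\mathrm{poly}(D))^{s}n^{-s}$ charges a $D$-dependent factor to every shared vertex, including those spanned by shared edges. The resulting bound then grows like $D^{\Theta(|H|)}$ instead of decaying.

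Your $T=\varnothing$ weight also contradicts your own general formula. It is $\lambda^{2D-2|H|}\rho^{|V(\alpha\triangle\alpha')|}$, and core vertices not incident to $\alpha\triangle\alpha'$ carry no factor $\rho$. Write $b$ for the number of branch points, i.e.\ core vertices incident to $\alpha\triangle\alpha'$. Then $\rho^{2D-s}$ underestimates the weight by $\rho^{|V(\alpha\cap\alpha')|-b}$, and it would give a per-edge factor $(n\rho\lambda^2)^{-1}$, which is $o(1)$ even below threshold.

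Your displayed form $(D^{c}/(n\rho))^{s-|H|}(en\lambda^2\rho^2)^{-|H|}$ is the right target. It only follows from a sharp enumeration. Count the core as a labeled forest, in at most $(en)^{|V(\alpha\cap\alpha')|}$ ways by Cayley, so its $e$ cancels the one in $|\mathcal U_k|\asymp(en)^D$. Then push all polynomial losses onto the branch points and the $w$ extra shared vertices, as in \Cref{lem:forest-completion-count}, at cost $(e^2D^3)^b(D^2/n)^w$.

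Likewise, the $T\neq\varnothing$ terms do not simply trade $\rho$ for $(m\lambda)^2$. With $\eta=(m\lambda)^2$, summing over $T$ gives a factor $1+\eta$ per branch point beyond one per core component, and $1+\eta\rho$ per non-branch core vertex. This is the directed-edge argument behind \Cref{lem:second-moment-equal,lem:second-moment-not-equal}, and it is where $\rho(\eta+1)=o(D^{-7})$ is used.

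\textbf{No mechanism for overlaps with a single branch point.} Even with sharp core counting, a connected core supplies only one factor $1/(n\rho)$. That single factor must absorb every polynomial loss: the positions of the branch point and of the two BUG roots, and, if pairs are counted globally as in the paper, the $(k+1)^{6}$ from the normalization of $N$. The paper's raw bound for one core component with $b=1$ is of order $D^{11}/(n\rho)$ relative to $N$. Under only $n\rho=\omega(D^7)$, your assertion that such terms are $D^{O(1)}/(n\rho)=o(1)$ is not automatic; you must show the exponent is below $7$.

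So the delicate overlaps are not only the near-total ones you single out, but also small and moderate overlaps with $b=1$. The paper treats them with the dichotomy of \Cref{lem:b-k-conditions}. Either $b\ge2$, which supplies an extra factor $O(D^3\rho(\eta+1))=o(D^{-4})$, or $|H|\ge k+3$, which supplies $n^{-1}$ via \eqref{klarge}. Your sketch has only the second half.

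Be aware that for forest cores this dichotomy does not hold automatically for unanchored BUGs. Two BUGs whose only common edge is a pendant leaf edge have $b=1$ and $|H|=1$. Such pendant overlaps therefore need a direct sharp count. Your fixed-$\alpha$ relative counting is well suited to this, since it does not lose the $(k+1)^{6}$. With it, a common pendant subtree with $j$ edges should cost only a low power of $D$ times $(n\rho)^{-1}\bigl(\tfrac{1+\eta\rho}{en\lambda^2\rho^2}\bigr)^{j}$.
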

\paragraph*{Roadmap for the second moment}
For $\alpha\in\mathcal U_k$, we have $|\alpha|=|V(\alpha)|=D$ and
$
\E_{\PP}[Y^\alpha]-\E_{\QQ}[Y^\alpha]
=(\ell'-\ell)\lambda^D\rho^D.$
Thus, by \eqref{eq:size-U_k-app}, $
|\E_{\PP}f-\E_{\QQ}f|^2
=\Theta\!\left((k+1)^{-6}(en\lambda\rho)^{2D}\right).$
Set
\begin{equation}\label{eq:N_UB}
    N:=(k+1)^{-6}(en\lambda\rho)^{2D}.
\end{equation}
The factor $e$ comes from the BUG count
$|\mathcal U_k|\asymp (k+1)^{-3}(en)^D$. It remains to prove $\operatorname{Var}(f)=o(N)$. We expand $\E[f^2]$ over pairs $\alpha,\beta\in\mathcal U_k$ and classify the terms according to $\alpha\cap\beta$. The edge-disjoint, vertex-disjoint contribution cancels with $(\E f)^2$. The diagonal terms are controlled by Lemma~\ref{lem:second-moment-equal}; all other terms are controlled by Lemma~\ref{lem:second-moment-not-equal} together with the balanced-overlap enumeration below. 

The main non-canceling contribution comes from Case~1.2, and the subsequent cases show that all other overlap configurations are negligible relative to it. Balancedness forces each nontrivial edge overlap to have either enough branch points or a long shared segment, giving the small factors required by $n\rho=\omega(D^7)$ and $\rho((\ell\lambda)^2+1)=o(D^{-7})$. We next state the pointwise moment bounds used in the variance calculation. For ease of notation, we set \begin{equation}\label{eq:eta}
    \eta:=(\ell\lambda)^2.
\end{equation}
\vspace{-2em}
\begin{lemma}\label{lem:second-moment-equal}
For any $\alpha\in\mathcal{U}_k$, with $\eta$ as defined in \eqref{eq:eta},
\[
\E_\QQ \left[Y^{2\alpha}\right] \le (\eta\rho+1)^{|\alpha|}.
\]
\end{lemma}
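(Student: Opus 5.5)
We condition on the latent labels $\Theta$ and use the conditional independence of the entries. Under $\QQ$, given $\Theta$, the entries $Y_{ij}$ for distinct edges $ij\in E(\alpha)$ are independent with $Y_{ij}\sim \mathrm N(X_{ij},1)$, where $X_{ij}=\ell\lambda\,\mathds 1\{\Theta_i=\Theta_j\in[\ell]\}$. Since $\alpha\in\mathcal U_k$ is a simple graph (no loops, no multi-edges), $Y^{2\alpha}=\prod_{ij\in E(\alpha)}Y_{ij}^2$. This gives
\[
\E_\QQ[Y^{2\alpha}]=\E_\Theta\Big[\prod_{ij\in E(\alpha)}(1+X_{ij}^2)\Big]=\E_\Theta\Big[\prod_{ij\in E(\alpha)}\big(1+\eta\,\mathds 1\{\Theta_i=\Theta_j\neq0\}\big)\Big].
\]

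Next we expand the product over subsets $S\subseteq E(\alpha)$:
\[
\E_\QQ[Y^{2\alpha}]=\sum_{S\subseteq E(\alpha)}\eta^{|S|}\,\PP_\Theta\big(\Theta_i=\Theta_j\neq 0\ \forall ij\in S\big).
\]
For the edge subgraph $S$, each connected component $T$ of $S$ forces all its $|V(T)|$ vertices into a common nonzero community. Because labels are independent, this has probability $\ell\cdot(\rho/\ell)^{|V(T)|}=\ell^{1-|V(T)|}\rho^{|V(T)|}$. Since $\ell\ge1$ and $|V(T)|\ge1$, this probability is at most $\rho^{|V(T)|}$.

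The only subtlety is comparing $|V(T)|$ with $|E(T)|$. For a tree component, $|V(T)|=|E(T)|+1$, which gives $\rho^{|V(T)|}\le\rho^{|E(T)|}$. The single cycle of $\alpha$ is the only way to get $|V(T)|=|E(T)|$, when $T$ contains the whole cycle. Even then $\rho^{|V(T)|}=\rho^{|E(T)|}$. In every case the probability is at most $\rho^{|S|}$, using $\rho\le1$. Hence
\[
\E_\QQ[Y^{2\alpha}]\le\sum_{S\subseteq E(\alpha)}(\eta\rho)^{|S|}=(1+\eta\rho)^{|\alpha|}.
\]

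The key step is the bound $|V(T)|\ge|E(T)|$ for every component. It holds because every subgraph of a unicyclic graph has at most one cycle. This is exactly where the unicyclic structure of BUGs is used.
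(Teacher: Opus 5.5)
Your proof is correct, but it takes a different route from the paper. The paper never expands the product. It orients the BUG so that the triangle becomes a directed cycle and every other edge points away from it, so each vertex is the head of exactly one edge. It then bounds $\mathds{1}\{\Theta_i=\Theta_j\in[\ell]\}\le\mathds{1}\{\Theta_j\in[\ell]\}$ at the head $j$ of each edge. After this step the $|\alpha|$ factors involve distinct, independent labels, and the expectation equals $(1+\eta\rho)^{|V(\alpha)|}=(1+\eta\rho)^{|\alpha|}$.

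You instead expand over edge subsets $S\subseteq E(\alpha)$ and compute each term exactly as $\eta^{|S|}\ell^{|\mathcal C(S)|-|V(S)|}\rho^{|V(S)|}$. You then bound it using $|E(T)|\le|V(T)|$ for every component $T$ of $S$.

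Both arguments rest on the same structural fact: $\alpha$ is a pseudoforest. That property is equivalent both to your component-wise edge count and to the existence of an orientation with all in-degrees at most one. So each proof actually gives the bound for any graph whose components have at most one cycle, not just BUGs.

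What each route buys:
\begin{itemize}
\item Your expansion is exact up to the last inequality, so it shows where slack is lost: the factors $\ell^{1-|V(T)|}$, and $\rho^{|V(T)|-|E(T)|}$ for tree components.
\item The paper's domination argument is shorter. It also sets up an orientation template that Lemma~\ref{lem:second-moment-not-equal} reuses, rooted at branch points, after conditioning on every edge of $\alpha\triangle\beta$ being monochromatic with a nonzero label. A subset expansion there would need extra bookkeeping to track how $S$ merges components of $\alpha\triangle\beta$.
\end{itemize}

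One notational fix: the label probabilities should be taken under $\QQ_\Theta$, with labels in $\{0,\dots,\ell\}$, not $\PP_\Theta$, since in the paper $\PP$ is the $\ell'$-community model. Your computation already uses $\ell$, so nothing else changes.
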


Let $m_\triangle$ (respectively $m_\cap$) denote the number of connected components of $\alpha \triangle \beta$  (respectively $\alpha \cap \beta$). We also let $b=|V(\alpha \triangle \beta) \cap V(\alpha \cap \beta)|$ (see~\eqref{eq.defn_b}).

\begin{lemma}\label{lem:second-moment-not-equal}
Let $\alpha,\beta \in \mathcal{U}_k$ with $\alpha \ne \beta$ and $\eta$ as defined in \eqref{eq:eta}.
Then,
\begin{itemize}
    \item If $\alpha\cap \beta$ contains a cycle, $$\E_\QQ \left[Y^{\alpha+\beta}\right] \le \ell^{m_\triangle}(\ell\lambda)^{|\alpha \triangle \beta|}
 \left(\frac{\rho}{\ell}\right)^{|V(\alpha \triangle \beta)|}
 ( \eta + 1)^{b-m_\cap+1}
 (\eta \rho + 1)^{|\alpha \cap \beta| - (b - m_\cap)-1}.$$
 \item If $\alpha \cap \beta$ is a forest, $$\E_\QQ \left[Y^{\alpha+\beta}\right] \le \ell^{m_\triangle}(\ell\lambda)^{|\alpha \triangle \beta|}
 \left(\frac{\rho}{\ell}\right)^{|V(\alpha \triangle \beta)|}
 ( \eta + 1)^{b-m_\cap}
 (\eta \rho + 1)^{|\alpha \cap \beta| - (b - m_\cap)}.$$
\end{itemize}

\end{lemma}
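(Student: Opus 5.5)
Condition on the labels $\Theta$ and exploit conditional independence. Write $X_{ij}=\ell\lambda\mathbf 1\{\Theta_i=\Theta_j\in[\ell]\}$, so $Y_{ij}=X_{ij}+Z_{ij}$ and
\[
\E_\QQ[Y^{\alpha+\beta}]=\E_\Theta\Big[\prod_{e\in\alpha\triangle\beta}X_e\prod_{e\in\alpha\cap\beta}\E[(X_e+Z_e)^2\mid\Theta]\Big]
=\E_\Theta\Big[\prod_{e\in\alpha\triangle\beta}X_e\prod_{e\in\alpha\cap\beta}(X_e^2+1)\Big].
\]
Here $\alpha,\beta$ are simple graphs, so each edge of $\alpha+\beta$ has multiplicity $1$ or $2$. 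Since $X_e^2=\eta\mathbf 1_e$ with $\mathbf 1_e$ the same-community indicator, each shared edge contributes $(1+\eta\mathbf 1_e)$. The symmetric-difference factor is nonzero only if every component of $\alpha\triangle\beta$ is monochromatic in a nonzero community. Choosing a community for each of its $m_\triangle$ components and paying $\rho/\ell$ per vertex gives
\[
\ell^{m_\triangle}(\ell\lambda)^{|\alpha\triangle\beta|}(\rho/\ell)^{|V(\alpha\triangle\beta)|}.
\]
Conditionally on this event, the remaining labels (vertices of $V(\alpha\cap\beta)\setminus V(\alpha\triangle\beta)$) are still i.i.d. with the prior law, and the vertices in the $b$ attachment points are fixed.

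Next, bound $\E\big[\prod_{e\in\alpha\cap\beta}(1+\eta\mathbf 1_e)\mid\cdots\big]$ component by component of $\alpha\cap\beta$, since distinct components share no free vertices. For a component that is a tree $T$ with $b_T\ge1$ pinned vertices (those in $V(\alpha\triangle\beta)$), I would peel edges from the leaves inward, or root at a pinned vertex and orient edges away from it. Edges whose child endpoint is a free vertex give
\[
\E[1+\eta\mathbf 1_e\mid\text{parent}]\le1+\eta\rho/\ell\le 1+\eta\rho
\]
by integrating out the child. Edges whose endpoints are both pinned (or whose child is pinned) are bounded crudely by $1+\eta$. A spanning-tree argument shows that at most $b_T-1$ edges of $T$ must be paid as $(1+\eta)$: this is the number of pinned vertices other than the root. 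A component with no pinned vertex is impossible, because $\alpha\cup\beta$ is connected when $\alpha\ne\beta$ overlap; if they are disjoint, $\alpha\cap\beta=\varnothing$ and the bound is trivial. Summing over components gives $\sum_T(b_T-1)=b-m_\cap$ factors of $(1+\eta)$ and $(1+\eta\rho)$ for the remaining $|\alpha\cap\beta|-(b-m_\cap)$ edges, which is the forest bound.

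When $\alpha\cap\beta$ contains a cycle, it contains exactly one, since it is a subgraph of the unicyclic $\alpha$. That component has one edge beyond a spanning tree, and closing the cycle gives no independent free child to average over. I would pay that single extra edge by $(1+\eta)$, shifting one factor from $(1+\eta\rho)$ to $(1+\eta)$ and yielding the exponents $b-m_\cap+1$ and $|\alpha\cap\beta|-(b-m_\cap)-1$. The main obstacle is this peeling step, specifically the dependence between the indicator events: $\mathbf 1\{\Theta_i=\Theta_j\ne0\}$ conditioned on the parent's label must be at most $\rho/\ell\le\rho$, and pinned labels must be handled correctly. I would set it up as a conditional-expectation induction over a BFS order from the pinned set, using $1+\eta\mathbf 1_e\le 1+\eta$ for non-tree or pinned–pinned edges.
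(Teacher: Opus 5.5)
Your proof is correct and follows essentially the same route as the paper. You factor out the probability that every component of $\alpha\triangle\beta$ is monochromatic in $[\ell]$, namely $\ell^{m_\triangle}(\rho/\ell)^{|V(\alpha\triangle\beta)|}$. You then root each core component at a branch point and charge each non-root vertex once, $(1+\eta)$ if it lies in $V(\alpha\triangle\beta)$ and $(1+\eta\rho)$ otherwise, with the single extra cycle edge costing one more $(1+\eta)$.

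The differences are cosmetic and give the same exponent count. The paper bounds $\mathbf 1\{\Theta_i=\Theta_j\in[\ell]\}$ by the head-only indicator $\mathbf 1\{\Theta_j\in[\ell]\}$, so the factors become independent at once; you integrate from the leaves inward instead, which even yields $\rho/\ell$. The paper also orients the cycle as a directed cycle, whereas you root a spanning tree and pay for the closing edge separately.
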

\noindent The proofs of these two lemmas are deferred to
Appendix~\ref{app:pf:technical-lemmas-second-moment-psm}. We now prove Proposition~\ref{prop:BUG-moments}.
\begin{proof}[Proof of Proposition~\ref{prop:BUG-moments}]
Computing $\E_{\PP}[f(Y)]$ is immediate:
\[
\E_{\PP}[f(Y)]
=\sum_{\alpha\in\mathcal{U}_k}\E_{\PP}[Y^\alpha]
=|\mathcal{U}_k| \ell'^{\,C+|\alpha|-|V(\alpha)|} \lambda^D\rho^D .
\]
For $\alpha\in\mathcal{U}_k$, the graph is connected and unicyclic, hence, $C=|\mathcal{C}(\alpha)|=1$ and $|\alpha|=|V(\alpha)|$, so the exponent simplifies to $1$ and therefore
\[
\E_{\PP}[f(Y)]  =  |\mathcal{U}_k| \ell' \lambda^D\rho^D .
\]
The second moment calculation is involved and requires bounds on $
\E_\QQ[Y^{\alpha + \beta}]$.
We organize the sum over $\alpha,\beta\in\cU_k$ by their overlap. The key point is that the moment bound depends not only on the number of shared edges but also on how these edges are arranged within the two BUGs. We therefore classify pairs using the size and component structure of $\alpha\cap\beta$, as well as the component structure of the symmetric difference  $\alpha\triangle\beta$, following the approach of \cite{sohn2025sharp}. We use the following parameters throughout the calculation.

\vspace{0.5em}

Let $s:=\bigl|\alpha \cap \beta\bigr|\geq1$, and call the set of edges 
$\alpha \cap \beta$ the \emph{core}. The core either contains its unique cycle or is a forest. We treat the cyclic case first; the forest case is handled separately following the core/branch-point enumeration in \cite{sohn2025sharp}[Lemma 7.1].

Let $V_c:=V(\alpha \cap \beta)$ denote the \emph{core vertices}. Vertices where the core intersects the remaining edges $\alpha \triangle \beta$ are called \emph{branch points}. We denote their number by
\begin{equation}\label{eq.defn_b}
    b := \bigl| V(\alpha \triangle \beta) \cap V_c \bigr| \ge m_\cap.
\end{equation}
Finally, the number of vertices shared by $\alpha$ and $\beta$ outside the core will be denoted by
\[w:=\bigl|(V(\alpha)\cap V(\beta)) \setminus V_c\bigr|.\]

\noindent We now consider the following three cases for $\alpha \cap \beta$, each with its own subcases.

\medskip

\noindent \paragraph*{Case 1: $\alpha \cap \beta=\varnothing$}
Note that in this case, $w \ge 0$ denotes the number of vertices that are shared between $\alpha$ and $\beta$. For fixed $w \ge 0$, we bound the number of pairs $(\alpha,\beta)$ arising in Case~1 by separating according to whether the root lies in the intersection. Denote the corresponding counts by $N_{\alpha,\beta,w}^{(\mathrm{root})}$ and $N_{\alpha,\beta,w}^{(\lnot \mathrm{root})}$, where $s$ indicates the shared-root subcase. These quantities are bounded above as follows.
\noindent\paragraph*{Case 1.1: Root is shared} First, we choose $w$ shared vertices. Then for each of $\alpha,\beta$ we choose 2 vertices to neighbor the shared root plus $2k-w$ additional vertices, then decide how the vertices are split among the two subtrees, and then finally choose the structure of the subtrees (using Cayley's tree formula to count spanning trees on $k+1$ vertices). 
\begin{align*}
    N_{\alpha, \beta, w}^{(\mathrm{root})} & \leq \binom{n}{w}w\left[\binom{n-1}{2(k+1)-w}\frac{1}{2}\binom{2(k+1)}{k+1}(k+1)^{2k}\right]^2\\
&\leq(1+o(1))\frac{(en)^{4(k+1)+1}}{16\pi^2e(k+1)^6}w \left(\frac{4(k+1)^2}{n}\right)^{w-1} .
\end{align*}
The second inequality holds by expanding out the binomials and applying Stirling's formula.

\noindent\paragraph*{Case 1.2: Root is not shared} We count these as above except we first choose a root for each, then choose $w$ shared vertices.
\begin{align*}
N_{\alpha, \beta, w}^{(\lnot \mathrm{root})}    &\leq n^2 \binom{n-2}{w}\left[\binom{n-2}{2(k+1)-w}\frac12\binom{2(k+1)}{k+1}(k+1)^{2k}\right]^2\\
&\leq (1+o(1)) \frac{(en)^{4(k+1)+2}}{16\pi^2 e^2(k+1)^6}\left(\frac{4(k+1)^2}{n}\right)^w 
\end{align*} 
In Case~1, the graphs have no edges in common and hence, cannot share a cycle. Hence, by Lemma~\ref{lem:second-moment-not-equal},
\begin{align*}
     \E_\QQ[Y^{\alpha+\beta}]&\leq \ell^{m_\triangle}(\ell\lambda)^{|\alpha \triangle \beta|} \left(\frac{\rho}{\ell}\right)^{|V(\alpha \triangle \beta)|} 
(\eta + 1)^{b-m_\cap} (\eta \rho + 1)^{|\alpha \cap \beta| - (b - m_\cap)}.
\end{align*}
Noting $b=m_\cap=|\alpha \cap \beta|=0$, we have
\begin{align*}
\E_\QQ[Y^{\alpha+\beta}] & \leq \ell^{m_\triangle+|\alpha\triangle\beta|-|V(\alpha\triangle \beta)|}\lambda^{|\alpha \triangle \beta|} \rho^{|V(\alpha \triangle \beta)|}=\ell^{m_\triangle+w}\lambda^{2D} \rho^{2D-w}.
\end{align*}
Observe that $m_\triangle \le 2\). Indeed, since $\alpha\cap\beta=\varnothing$, we have
$\alpha\triangle\beta=\alpha\cup\beta$. Moreover, $\alpha,\beta\in \cU_k$ are connected, so
$\alpha\cup\beta$ has one connected component if $V(\alpha)\cap V(\beta)\neq\varnothing$,
and two otherwise.
Hence, we obtain the following bounds for Cases 1.1 and 1.2, respectively:
\begin{align*}
    \sum_{\substack{\alpha,\beta \in \cU_k \\ \text{Case 1.1}}}\E_\QQ[Y^{\alpha+\beta}]\leq (1+o(1))\frac{\ell^3(en)^{4(k+1)+1}(\lambda\rho)^{2D}}{16\rho\pi^2e(k+1)^6}  \sum_{w\geq 1}w \left(\frac{4\ell(k+1)^2}{\rho n}\right)^{w-1}, 
\end{align*}
\begin{align*}
    \sum_{\substack{\alpha,\beta \in \cU_k \\ \text{Case 1.2}}}\E_\QQ[Y^{\alpha+\beta}]\leq(1+o(1))\frac{\ell^2 (en)^{4(k+1)+2}(\lambda\rho)^{2D}}{16\pi^2 e^2(k+1)^6} \sum_{w\geq 0}\left(\frac{4\ell(k+1)^2}{\rho n}\right)^w.
\end{align*}

\noindent Let $r_n:=4\ell(k+1)^2/(\rho n)$. Since $k=\Theta(D)$ and $n\rho=\omega(D^7)$, we have $r_n=o(1)$. Hence the two sums over $w$ are both $1+o(1)$. Comparing the remaining leading factors, the ratio of the Case~1.1 bound to the Case~1.2 bound is $\ell/(n\rho)=o(1).$ Thus, the Case~1.1 contribution is negligible, and the dominating contribution comes from Case~1.2. 

The key point is the cancellation of the leading term. The $w=0$ term in
Case~1.2 has the same leading asymptotic as $\E_\QQ[f(Y)]^2$, and therefore,
cancels with $\E_\QQ[f(Y)]^2$ in $\operatorname{Var}_\QQ(f(Y))$. Consequently,
to prove $\operatorname{Var}_\QQ(f(Y))=o(N)$, it remains to show that every
other term in $\E_\QQ[f(Y)^2]$ is $o(N)$, where $N$ is defined in
\eqref{eq:N_UB}.
\subsection*{Case 2: $\alpha=\beta$}
Using the calculations for the size of $\cU_k$ in Equation~\eqref{eq:size-U_k-app} and Lemma~\ref{lem:second-moment-equal}, we obtain:
\begin{align*}
    \sum_{\substack{\alpha,\beta \in \cU_k \\ \text{Case 2}}}\E_\QQ[Y^{\alpha+\beta}]&\leq |\cU_k|(\eta\rho+1)^D\leq(1+o(1)) (4\pi e (k+1)^3)^{-1} (en)^D (\eta\rho+1)^D.
\end{align*}
To show Case 2 is $o(N)$, we need to show that the following holds
\begin{equation*}
(k+1)^{3}\left(\frac{\eta\rho+1}{en\lambda^2\rho^2}\right)^D= o(1).\end{equation*}
To see this, note that
\begin{equation*}
   (k+1)^{3}\left(\frac{\eta\rho+1}{en\lambda^2\rho^2}\right)^D= (k+1)^{3}\left(\frac{(\ell\lambda)^2\rho+1}{en\lambda^2\rho^2}\right)^D=(k+1)^{3}\left({\frac{\ell^2}{en\rho}}+\frac{1}{en\lambda^2\rho^2}\right)^D.
\end{equation*}
Since $\lambda\ge (1+\varepsilon)(\rho\sqrt{en})^{-1}$,
we obtain 
$\frac{1}{en\lambda^2\rho^2}\le (1+\varepsilon)^{-2},$
while $n\rho=\omega(D^7)$ gives $\ell^2/(en\rho)=o(1)$. Thus, the
quantity in parentheses is at most $1-\xi$ for some constant
$\xi>0$. Since $D=\Theta(\log n)$, the factor
$(k+1)^3(1-\xi)^D$ is $o(1)$.
\subsection*{Case 3: $\alpha \neq \beta, \alpha \cap \beta \neq \varnothing$}We split this case according to whether the core contains the unique cycle of the BUGs.

\subsubsection*{Case 3.1: The core $\alpha \cap \beta$ contains a cycle} In this cyclic-core case, no additional root-location factor is needed. Indeed, each BUG has a unique cycle, namely the triangle formed by the root and its two
children. Hence, the cyclic component of the core already identifies the common triangle, and consequently, the roots of both BUGs up to the choices already included in the count below. Let $T$ be the connected component of $\alpha\cap\beta$ containing this shared cycle, and let $s_T:=|V(T)|.$

\paragraph*{Idea} Count $(\alpha,\beta)$ pairs whose overlap has a cycle for the given tuple $(s,s_T,m,b,w)$.

\paragraph*{Step 1}  The number of ways to choose $T-$component in the core is at most:
{\small \begin{align*}
    &\binom{n}{s_T}\binom{s_T}{3} 3! 2\bigg(\sum_{i=2}^{s_T-3} i^{i-1} (s_T-1-i)^{s_T-2-i} \binom{s_T -3}{i-1}\bigg)+\binom{n}{s_T}\binom{s_T}{3}3!(3!-2)(s_T-2)^{s_T-3}\\
    &=\binom{n}{s_T}\binom{s_T}{3}C_T,
\end{align*}}
where up to an absolute constant, $C_T:=\sum_{i=2}^{s_T-3} i^{i-1} (s_T-1-i)^{s_T-2-i} \binom{s_T -3}{i-1}
+(s_T-2)^{s_T-3}$. We claim that $C_T\leq \tilde C s_T^{s_T-5/2}$, for some $\tilde C>0$.
Indeed,
\[
\binom{s_T-3}{i-1}
=
\frac{i(s_T-1-i)}{(s_T-1)(s_T-2)}
\binom{s_T-1}{i}.
\]
By writing $\binom{s_T-1}{i}=\frac{(s_T-1)!}{i!(s_T-1-i)!}$ and using Stirling's upper bound for $(s_T-1)!$ and lower bounds for $i!$ and $(s_T-1-i)!$, we get
\[
\binom{s_T-1}{i} \le C_1\sqrt{\frac{s_T-1}{i(s_T-1-i)}}
\frac{(s_T-1)^{s_T-1}}{i^i(s_T-1-i)^{s_T-1-i}}, \qquad C_1>0.
\]
Therefore, for some constant $C_2>0$,
\[
i^{i-1}(s_T-1-i)^{s_T-2-i}\binom{s_T-3}{i-1}\le C_2 s_T^{s_T-5/2}\frac{1}{\sqrt{i(s_T-1-i)}}.
\]
Since $\sum_{i=1}^{s_T-2}\frac{1}{\sqrt{i(s_T-1-i)}}=O(1)$, we have, for some constant $C_3>0$,
\[
\sum_{i=2}^{s_T-3} i^{i-1}(s_T-1-i)^{s_T-2-i}\binom{s_T-3}{i-1}\le C_3 s_T^{s_T-5/2}. 
\]
Since also $(s_T-2)^{s_T-3}\le s_T^{s_T-3}\le s_T^{s_T-5/2}$, after increasing $\tilde C$ if necessary, we obtain
\[
C_T\le \tilde C s_T^{s_T-5/2}.
\]

\paragraph*{Step 2}
To simplify notation, set $m:=m_\cap$. Note that we can choose the triangle-containing part in at most 
\begin{equation}\label{eq:choose-triangle}
    \binom{n}{s_T}\binom{s_T}{3}\tilde C s_T^{s_T-5/2}
\end{equation}
ways. After this component is fixed, the remaining part of the core is a forest with $s-s_T$ edges and $m-1$ connected components. The remaining forest contribution is controlled by a core/branch-point enumeration adapted to the present overlap structure. We state the required bound in the next lemma and defer the proof, which follows the strategy of \cite[Proof of Theorem 2.2(b), Case 3]{sohn2025sharp}, to the appendix.
\begin{lemma}[Forest completion count]\label{lem:forest-completion-count} Suppose the triangle-containing component of $\alpha\cap\beta$ has already been fixed. Assume that the remaining part of the core is a forest with $s-s_T$ edges and $m-1$ connected components.  Let $b$ be the number of branch vertices and let $w$ be the number of additional common vertices outside the core. Then the number of ways to complete the remaining parts of $\alpha$ and $\beta$ is at most
    \begin{equation*}
        e^{4}(e^2(D-s_T+2)^3)^b\bigg(\frac{(D-s_T+2)^2}{n}\bigg)^w\left(en\right)^{2(D-s_T)-s+s_T-(m-1)}.
    \end{equation*}
\end{lemma}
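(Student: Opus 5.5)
The count is organized in the order used by \cite[Proof of Theorem 2.2(b), Case 3]{sohn2025sharp}. We first place the remaining $m-1$ forest components of the core, together with their $b$ branch points. We then attach the branches of $\alpha\setminus\beta$ and $\beta\setminus\alpha$ at those points. Finally, we fill in the remaining vertices, paying a factor $n$ for each new vertex and discounting by $1/n$ for each of the $w$ vertices shared outside the core.

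Throughout, we work inside $\alpha$. Once the triangle component $T$ is fixed, the rest of $\alpha$ is a tree structure on at most $D-s_T+2$ not-yet-placed vertices. This is why $D':=D-s_T+2$ appears as the base of every polynomial factor.

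\textbf{Step 1: branch-point skeleton.} Because $\alpha$ is a BUG, the vertices where $\alpha\cap\beta$ meets $\alpha\triangle\beta$ are determined by a small amount of data. For each of the $b$ branch points we record three things:
\begin{enumerate}
\item its position along the root-to-leaf paths of $\alpha$ (at most $D'$ choices);
\item the corresponding position in $\beta$ (at most $D'$ choices);
\item which component or branch it attaches to (at most $D'$ choices).
\end{enumerate}
This gives at most $D'^{3}$ choices per branch point, so $D'^{3b}$ in total. The extra $e^{2b}$ and the global $e^4$ absorb Stirling corrections: each branch point splits a Cayley-type count into two pieces, and each split costs at most a factor $e^2$ relative to $(en)^{\#\text{vertices}}$.

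\textbf{Step 2: vertex budget.} Next we bound the number of new labelled vertices in $\alpha\cup\beta$ outside $T$. The core forest contributes $(s-s_T)+(m-1)$ vertices, since it has $s-s_T$ edges and $m-1$ components. Each of $\alpha\setminus T$ and $\beta\setminus T$ contributes at most $D-s_T$ further vertices. Shared vertices are counted only once, so the total exponent of $n$ is
\[
2(D-s_T)-(s-s_T)-(m-1)-w .
\]
The $w$ extra common vertices each cost at most $D'^2$ choices, one position in each of $\alpha$ and $\beta$, and save a factor $n$. This gives the factor $(D'^2/n)^w$.

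\textbf{Step 3: tree structures.} The tree shapes are bounded with Cayley's formula $j^{j-2}$ together with multinomial splittings, and all factorials are converted to powers of $e$ via Stirling. The aim is that the labelled-vertex choices combined with the Cayley shapes produce at most $(en)^{\#\text{new vertices}}$. The estimate should have the form $\binom{n}{j}j^{j-2}\le (en)^j$, which is exactly the origin of the $(en)$ base.

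\textbf{Main obstacle.} The hardest step is keeping the Stirling constants multiplicative per branch point, giving $e^{2b}$, rather than growing with the number of edges. A naive per-subtree Cayley bound loses polynomial factors for every piece. I would handle this the way \cite{sohn2025sharp} does: cut $\alpha$ and $\beta$ at the branch points into at most $O(b)$ subtrees and apply the concavity bound $\prod_i j_i^{j_i}/j_i!\le e^{\sum j_i}$ to all pieces at once. This charges only a constant per cut, so the leftover polynomial factors are all absorbed into $D'^{3b}$.
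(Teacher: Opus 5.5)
\textbf{There is a genuine gap: the step that produces $(e^2D'^3)^b$ is never carried out.} Your vertex budget is correct: once $T$ is fixed, the pair uses exactly $2(D-s_T)-(s-s_T)-(m-1)-w$ new labelled vertices, and each shared non-core vertex trades a factor $n$ for a polynomial factor. The missing step is counting the forests $\alpha\setminus\beta$ and $\beta\setminus\alpha$ once their vertices are placed (with $D'=D-s_T+2$ as in your notation).

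In Step~1 you spend $D'^3$ per branch point on `positional data'. But you do not say what object is being enumerated there: no labelled edges have been chosen yet, and no shape-first count of BUG shapes is set up. Nor do you show that this data, together with per-piece Cayley counts, determines $(\alpha,\beta)$. Step~3 is stated only as an aim.

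Your last paragraph then absorbs the losses from cutting into pieces into the \emph{same} $D'^{3b}$ already used in Step~1. These losses include the piece sizes and, above all, the pieces containing several pre-placed vertices: paths of $\alpha\setminus\beta$ joining different core components, or passing through the $w$ shared vertices. A bound of the form $\binom{n}{j}j^{j-2}$, which counts a free-standing tree on new vertices, does not account for such pieces. So the budget is booked twice, and the exponent $3$ in the lemma is not established. This matters downstream, because the conditions $n\rho=\omega(D^7)$ and $\rho(\eta+1)=o(D^{-7})$ are calibrated to exactly $D^3$ per branch point.

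\textbf{The missing idea is to count each forest globally rather than piece by piece.} The paper makes the following choices:
\begin{enumerate}
\item the core forest, in at most $(en)^{s-s_T+m-1}$ ways;
\item the $w$ shared vertices, in at most $n^w$ ways;
\item the $u=D-s-m+1-w$ private vertices of each BUG, in $\binom{n}{u}^2\le (en/u)^{2u}$ ways;
\item the branch points among the core vertices, in at most $D'^b$ ways.
\end{enumerate}
Then $\alpha\setminus\beta$ is a forest on at most $v=u+b+w$ labelled vertices, so it has at most $(v+1)^{v-1}$ realizations, and likewise for $\beta\setminus\alpha$. Since $v-u=b+w$,
\[
\frac{(v+1)^{2(v-1)}}{u^{2u}}=\Big(\frac{v+1}{u}\Big)^{2u}(v+1)^{2(v-u)-2}\le e^{2(b+w+1)}(v+1)^{2(b+w)-2},
\]
so every pre-placed vertex costs $e\,D'$ per forest. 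This single estimate supplies:
\begin{enumerate}
\item the factor $e^{2b}$;
\item two of the three factors $D'$ per branch point (the third is the choice of branch point);
\item the $D'^2$ per shared vertex, with $e^{2w}$ absorbed into $(en)^{2w}$;
\item the absolute constant, rather than Stirling losses at cuts.
\end{enumerate}
No BUG positional structure is needed. The argument uses only $|V(\alpha)|=|V(\beta)|=D$ and that $\alpha\setminus\beta$ and $\beta\setminus\alpha$ are forests.
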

\begin{proof}
See Appendix~\ref{app:pf-lem-forest-count-completion}.
\end{proof}
Therefore, using \eqref{eq:choose-triangle} and Lemma~\ref{lem:forest-completion-count} for a fixed tuple $s,s_T,m,b,w$, the number of pairs $\alpha,\beta$ arising in Case 3.1 is at most
\begin{align*}
    &\binom{n}{s_T}\binom{s_T}{3} \tilde Cs_T^{s_T-5/2}\biggl\{e^4(e^2(D-s_T+2)^3)^b\bigg(\frac{(D-s_T+2)^2}{n}\bigg)^w\left(en\right)^{2(D-s_T)-s+s_T-(m-1)}\biggr\}\\
    &\le \tilde C_1e^4(e^2(D-s_T+2)^3)^b\bigg(\frac{(D-s_T+2)^2}{n}\bigg)^w\left(en\right)^{2D-s-m+1}, \qquad \tilde C_1>0.
\end{align*}
By Lemma~\ref{lem:second-moment-not-equal}, each of the $(\alpha,\beta)$ pairs under Case 3.1 has 
\begin{align*}
    \E_\QQ[Y^{\alpha+\beta}]&\leq \ell^{m_\triangle+|\alpha \triangle \beta|-|V(\alpha \triangle \beta)|}\lambda^{|\alpha \triangle \beta|} \rho^{|V(\alpha \triangle \beta)|} (\eta + 1)^{b-m+1} (\eta \rho + 1)^{|\alpha \cap \beta| - (b - m)-1}.
\end{align*}
We now note that $m_\triangle \le b$. Indeed, let $K$ be a connected component of $\alpha\triangle\beta$. Since $\alpha$ and $\beta$ are connected and $\alpha\cap\beta\neq\varnothing$, the union $\alpha\cup\beta$ is connected. Thus, $K$ must contain a vertex belonging to the core $\alpha\cap\beta$; otherwise $K$ would form a connected component of $\alpha\cup\beta$ disjoint from the core. Hence, every connected component of $\alpha\triangle\beta$ contains at least one vertex in $V(\alpha\triangle\beta)\cap V(\alpha\cap\beta)$. Since distinct connected components are vertex-disjoint, this gives $m_\triangle\le |V(\alpha\triangle\beta)\cap V(\alpha\cap\beta)|=b.$ Therefore,
\begin{align*}
\E_\QQ[Y^{\alpha+\beta}]&\leq \ell^{b+|\alpha \triangle \beta|-|V(\alpha \triangle \beta)|}\lambda^{|\alpha \triangle \beta|} \rho^{|V(\alpha \triangle \beta)|} (\eta + 1)^{b-m+1} (\eta \rho + 1)^{|\alpha \cap \beta| - (b - m)-1}.
\end{align*}
Recall the following set sizes:
\begin{align}\label{eq:setsize-pf-psm-ub}
    |\alpha \triangle \beta|=2D-2s \qquad \text{and} \qquad |V(\alpha \triangle \beta)|=2(D-s-m+1)+b-w.
\end{align}
Using \eqref{eq:setsize-pf-psm-ub}, the fixed-pair moment bound becomes
\begin{align}    \label{eq:case3-fixed-pair}
\E_\QQ[Y^{\alpha+\beta}]&\leq \ell^{b+2m-2-b+w}\lambda^{2(D-s)} \rho^{2(D-s-m+1)+b-w} (\eta + 1)^{b-m+1} (\eta \rho + 1)^{s - (b - m)-1}.
\end{align}
We now sum this bound over all Case~3.1 pairs. The counting bound for such pairs,
together with \eqref{eq:case3-fixed-pair}, gives the following upper bound for
$\sum_{\substack{\alpha,\beta \in \U_k \\ \mathrm{Case \ 3.1}}}\E_\QQ[Y^{\alpha+\beta}]$:
{\small
\begin{align}
\notag
&\sum_{s,m,b,w}\sum_{s_T\le s}
\tilde C_1  e^4
\bigl(e^2(D+2)^3\bigr)^b
\bigg(\frac{(D+2)^2}{n}\bigg)^w\\
\notag
&\qquad \times
(en)^{2D-s-m+1}
\ell^{2m-2+w}
\lambda^{2(D-s)}
\rho^{2(D-s-m+1)+b-w}
(\eta+1)^{b-m+1}
(\eta\rho+1)^{s-(b-m)-1}.
\end{align}}
After reorganizing the powers of $n,\rho,\lambda,\ell,\eta$, using that
$3\le s_T\le s\le D$ and hence $\sum_{s_T}1\le D$, and using
$D^2/(n\rho)=o(1)$ to absorb the $w$-dependent factor, we obtain
{\small
\begin{align}
\sum_{\substack{\alpha,\beta\in\cU_k\\ \mathrm{Case\ 3.1}}}
\E_\QQ[Y^{\alpha+\beta}]
&\le
(1+o(1))\frac{\eta+1}{\eta\rho+1}
e^6 n\rho^2D(en\lambda\rho)^{2D}\nonumber\\
&\quad\times
\sum_{s,m,b}
\left(\frac{(\eta\rho+1)}{en\lambda^2\rho^2}\right)^s
\left(\frac{e^2\ell^2(D+2)^3}{en\rho}\right)^m
\left(\frac{e^2(D+2)^3\rho(\eta+1)}{\eta\rho+1}\right)^{b-m}.
\end{align}}

Finally, using $\lambda\ge (1+\varepsilon)(\rho\sqrt{en})^{-1}$, the first ratio is at most $1-\chi$ for some $\chi>0$. Indeed,
\[
\frac{\eta\rho+1}{en\lambda^2\rho^2}
=
\frac{1}{en\lambda^2\rho^2}
+
\frac{\ell^2}{en\rho}
\le (1+\varepsilon)^{-2}+o(1),
\]
so this ratio is at most $1-\chi$ for some $\chi=\chi(\varepsilon)>0$ and all
large $n$. Thus,
 \begin{align} 
\notag  \sum_{\substack{\alpha,\beta\in\cU_k\\ \mathrm{Case\ 3.1}}}
\E_\QQ[Y^{\alpha+\beta}] &\leq(1+o(1)) \left(\frac{\eta+1}{\eta\rho+1}\right) e^6n\rho^2D(k+1)^6N\\
   &\label{eq:final-bound} \quad \times \sum_{s,m,b} (1-\chi)^s \left(\frac{e^2\ell^2(D+2)^3}{en\rho}\right)^m\left(\frac{e^2(D+2)^3\rho(\eta+1)}{\eta\rho+1}\right)^{b-m}.
   \end{align}
 The triple sum over $(s,m,b)$ involves three geometric ratios: $(1-\chi)$ in $s$ for a constant $\chi>0$, $O(D^3/(n\rho))$ in $m$, and $O(D^3\rho(\eta+1))$ in $b-m$. Under the assumptions $n\rho=\omega(D^7)$ and $\rho(\eta+1)=o(D^{-7})$, the first ratio is bounded away from $1$, while the other two are $o(1)$. Hence the corresponding geometric sums are uniformly bounded. Recall that $b$ denotes the number of branch points and that $s = |\alpha \cap \beta|$. We next use a structural overlap lemma for BUGs analogous to the balanced-tree overlap lemma in~\cite{sohn2025sharp}[Lemma 7.2]. We prove the required BUG version below.
   \begin{lemma}\label{lem:b-k-conditions}Every pair $\alpha,\beta \in \cU_k$ with $\alpha \cap \beta \neq \varnothing$ must either have $b\geq 2$ or $s\geq k+3$.
   \end{lemma}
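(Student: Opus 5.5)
The plan is to argue by contraposition: assume $\alpha,\beta\in\mathcal U_k$ with $\alpha\cap\beta\neq\varnothing$ and $b\le 1$, and show $s=|\alpha\cap\beta|\ge k+3$. Throughout I use the observation already made in Case 3.1: every connected component of $\alpha\triangle\beta$ contains a vertex of the core. The same argument applied inside $\alpha$ gives more. Since $\alpha$ is connected, every component of $\alpha\setminus(\alpha\cap\beta)$ meets $V_c$, and every component of the core meets $V(\alpha\setminus\beta)$ unless $\alpha\setminus\beta=\varnothing$. The same holds for $\beta$.

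\emph{Case $b=0$.} By the observation, $\alpha\triangle\beta=\varnothing$, so $\alpha=\beta$. Then $s=D=2k+3\ge k+3$.

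\emph{Case $b=1$, with branch vertex $x$.} First, $\alpha\setminus\beta\neq\varnothing$, since otherwise $\alpha\subseteq\beta$ and $|\alpha|=|\beta|$ would force $\alpha=\beta$ and hence $b=0$. The same argument gives $\beta\setminus\alpha\neq\varnothing$. Each core component must meet $V(\alpha\setminus\beta)\cap V_c=\{x\}$, so the core is connected and contains $x$. Likewise every component of $\alpha\setminus\beta$ passes through $x$. Hence $x$ is a cut vertex of $\alpha$ separating $V_c\setminus\{x\}$ from $V(\alpha\setminus\beta)\setminus\{x\}$, and the analogous statement holds in $\beta$. Write $r$ for the root of $\alpha$, $a_1,a_2$ for its children, and $T_1,T_2$ for the $k$-edge subtrees hanging at $a_1,a_2$.

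\emph{Subcase: the core contains the triangle $ra_1a_2$.} Triangle vertices are not cut vertices of the cycle, and $r$ has degree $2$ with both edges in the core. So $x\neq r$, and $x$ lies in some $T_j$; say $x\in V(T_1)$. Take any edge $e$ of $\alpha\setminus\beta$. Its component in $\alpha\setminus\beta$ reaches $x$ along a path avoiding the core. If $e$ were in $T_2$, this path would have to use a triangle edge, which lies in the core, a contradiction. Therefore all of $T_2$ lies in the core, together with the three triangle edges. This gives $s\ge k+3$.

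\emph{Subcase: the core is a tree avoiding at least one triangle edge.} This is the step I expect to be the main obstacle. Here both $\alpha$ and $\beta$ would have their cycles entirely on the non-core side of $x$. The core would then be a pendant subtree hanging at $x$ simultaneously in both $\alpha$ and $\beta$. The tentative plan is to derive a contradiction from the cut-vertex structure combined with balancedness: both pendant pieces sit inside rooted $k$-edge subtrees, and the two BUGs must agree on them.

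The worry is the following configuration. Two BUGs share only a single leaf edge $\{x,y\}$, with $y$ a leaf in both. Then $b=1$ and $s=1$, apparently violating the statement. So before finishing, I would check this configuration against the definitions of $b$ and $\mathcal U_k$ (for instance, whether leaves or the pendant vertex $y$ are meant to be counted as branch points). If such a configuration is genuinely admissible, I would record that this subcase is excluded in the setting where the lemma is applied, namely the cyclic-core Case 3.1. The forest-core pairs would then be handled through the forest-case bound of Lemma~\ref{lem:second-moment-not-equal} instead.
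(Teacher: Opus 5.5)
Your $b=0$ case and your cyclic-core $b=1$ case are correct and follow the paper's route. The paper also gets $b\ge 1$ from $\alpha\neq\beta$. It then notes that a lone branch point can meet at most one of the two hanging $k$-edge trees. Walking outward from the triangle to the first non-shared edge, whose near endpoint would be a second branch point, it shows the other hanging tree lies entirely in the core, so $s\ge k+3$. Your cut-vertex phrasing ($x$ separates the core from $\alpha\setminus\beta$, so no edge of $\alpha\setminus\beta$ can lie in the hanging tree avoiding $x$) is the same argument.

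Your suspicion about the forest-core subcase is right: the statement is \emph{false} there, not merely hard, so your tentative contradiction plan cannot succeed. Your example checks out under the paper's definitions. Suppose $\alpha,\beta\in\mathcal U_k$ share only the edge $\{x,y\}$, with $y$ a leaf child of $x$ inside a hanging tree of each BUG and all other vertices distinct. Then $V_c=\{x,y\}$ and $y\notin V(\alpha\triangle\beta)$, so $b=1$ while $s=1<k+3$. More generally, any common pendant subtree hanging at a single vertex (necessarily $s\le k$) gives $b=1$.

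The paper disposes of the forest case only by appealing to the balanced-tree overlap argument of \cite{sohn2025sharp}. That argument concerns trees rooted at a common fixed vertex, and the always-shared root is what excludes, or separately penalizes, an overlap consisting of a pendant piece attached at one vertex. BUGs in $\mathcal U_k$ have no common root, so that step does not transfer. What is actually true is the cyclic-core version you proved, together with $\alpha=\beta$.

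One correction to your fallback: the lemma is not used only in Case~3.1. Case~3.2 invokes it, in the form ``$b\ge 2$ or $s\ge k$'', precisely to discard forest-core terms with $m=b=1$. So ``handling forest cores via the forest-case bound of Lemma~\ref{lem:second-moment-not-equal}'' is exactly what the paper already does. Without the dichotomy, the paper's Case~3.2 enumeration leaves these terms at order $ND^{11}/(n\rho)$, which need not be $o(N)$ under $n\rho=\omega(D^7)$. A repair needs either a separate, sharper count of pairs whose core is a common pendant subtree at one vertex, or a stronger assumption on $n\rho$. Such a count is plausible because, for fixed $\alpha$, $\beta$ must contain a prescribed pendant subtree of $\alpha$, which should cost far less than the generic completion bound.
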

\begin{proof}
    See Appendix~\ref{app:pf:lem:b-k-conditions}.
\end{proof}

We now fix the constant in the choice $k=\Theta(\log n)$ so that
\begin{equation}\label{klarge}
    (1-\chi)^{k+1}\leq n^{-1}.
\end{equation}
Starting from \eqref{eq:final-bound}, and using $k=\Theta(D)$, write
\[
A:=O\left(\frac{D^3}{n\rho}\right),
\qquad
B:=O\left(\frac{D^3\rho(\eta+1)}{\eta\rho+1}\right).
\]
Then,
\begin{align*}
&\sum_{\substack{\alpha,\beta\in\mathcal U_k\\ \mathrm{Case\ 3.1}}}
\E_{\mathbb Q}[Y^{\alpha+\beta}]\le
O\left(n\rho^2ND^7\frac{\eta+1}{\eta\rho+1}\right)
\sum_{s,m,b}(1-\chi)^s A^m B^{b-m}.
\end{align*}
By Lemma~\ref{lem:b-k-conditions}, each term satisfies $b\ge2$ or $s\ge k+3$. For $m=1$,
\[
O\left(n\rho^2ND^7\frac{\eta+1}{\eta\rho+1}\right)A
=
O(ND^7)B,
\]
so the $m=1$ contribution is bounded by
\[
O(ND^7)
\sum_{\substack{s,b\ge1\\ b\ge2\ \mathrm{or}\ s\ge k+3}}
(1-\chi)^sB^b .
\]
For $m\ge2$, extracting two powers of $A$ gives
\[
O\left(\frac{ND^{13}}{n}\frac{\eta+1}{\eta\rho+1}\right)
\sum_{m\ge2}\sum_{b\ge m} A^{m-2}B^{b-m}.
\]
Therefore,
\begin{align*}
&\sum_{\substack{\alpha,\beta\in\mathcal U_k\\ \mathrm{Case\ 3.1}}}
\E_{\mathbb Q}[Y^{\alpha+\beta}]\le
O(ND^7)
\sum_{\substack{s,b\ge1\\ b\ge2\ \mathrm{or}\ s\ge k+3}}
(1-\chi)^sB^b
+
O\left(\frac{ND^{13}}{n}\frac{\eta+1}{\eta\rho+1}\right),
\end{align*}
where the remaining geometric sum in the $m\ge2$ term is bounded since $A=o(1)$ and $B=o(1)$. In the first sum, the alternative $b\ge2$ yields two powers of $B$, while $s\ge k+3$ yields a factor $n^{-1}$ by \eqref{klarge}; in the latter case, we still retain the mandatory factor $B$, since $b\ge1$. Hence,
\begin{align}
\notag &\sum_{\substack{\alpha,\beta\in\mathcal U_k\\ \mathrm{Case\ 3.1}}}
\E_{\mathbb Q}[Y^{\alpha+\beta}]\\
&\notag \le
O \left(ND^{7}\right)
\left[
O \left(D^6\rho^2(\eta+1)^2\right)
+
O \left(n^{-1}D^3\rho(\eta+1)\right)
\right]+
O\left(\frac{ND^{13}}{n}\cdot \frac{\eta+1}{\eta\rho+1}\right)\\
&=
O \left(ND^{13}\rho^2(\eta+1)^2\right)
+
O \left(\frac{ND^{10}\rho(\eta+1)}{n}\right)
+
O\left(\frac{ND^{13}}{n}\cdot \frac{\eta+1}{\eta\rho+1}\right).\label{eq.psm_ub_case_3.1}
\end{align}
The three terms in \eqref{eq.psm_ub_case_3.1} are all $o(N)$. Indeed, using $\rho(\eta+1)=o(D^{-7})$,
\[D^{13}\rho^2(\eta+1)^2=\frac{1}{D} \left(D^7\rho(\eta+1)\right)^2=o(1),
\]
and $n\rho=\omega(D^7)$,
\[\frac{D^{13}}{n}\cdot \frac{\eta+1}{\eta\rho+1}\le
\frac{D^{13}(\eta+1)}{n}=\frac{1}{D}\left(D^7\rho(\eta+1)\right)\left(\frac{D^{7}}{n\rho}\right)=o(1).
\]
The middle term satisfies
\[
\frac{D^{10}\rho(\eta+1)}{n}=\frac{D^3}{n}\left(D^7\rho(\eta+1)\right)=o(1).
\]
Hence, the Case~3.1 contribution is $o(N)$.

\subsubsection*{Case 3.2: The core $\alpha \cap \beta$ is a forest}
It remains to bound the contribution of pairs $\alpha,\beta \in \cU_k$ for which $\alpha \ne \beta $ and $\alpha \cap \beta\ne \varnothing$ is a forest. We claim that the total contribution of such pairs is $o(N)$. We bound this contribution by the core/branch-point enumeration, analogous to the form used for tree overlaps in \cite[Proof of Theorem~2.2 (b), Case 3]{sohn2025sharp}. An additional bookkeeping is that the statistic sums over BUGs as unrooted edge sets, while the rooted-tree enumeration has the root fixed as part of the object. Once the forest core and the remaining vertices are fixed, each of the two BUG roots has at most $D$ possible locations. Thus, reusing the rooted enumeration costs an additional factor at most $D^2$. Using the proof of \Cref{lem:forest-completion-count}, now with $s$ core edges and $m$ core components, the number of such pairs is at most,
\[
        D^2 e^2
        \bigl(e^2(D+2)^3\bigr)^b
        \left(\frac{(D+2)^2}{n}\right)^w
        (en)^{2D-(s+m)}.
\]
For a fixed pair $(\alpha,\beta)$, the forest case of Lemma~\ref{lem:second-moment-not-equal} gives
$$\E_\QQ \left[Y^{\alpha+\beta}\right] \le \ell^{m_\triangle}(\ell\lambda)^{|\alpha \triangle \beta|}
 \left(\frac{\rho}{\ell}\right)^{|V(\alpha \triangle \beta)|}
 ( \eta + 1)^{b-m}
 (\eta \rho + 1)^{|\alpha \cap \beta| - (b - m)}.$$
Using \( |\alpha\triangle\beta|=2D-2s\), 
\(|V(\alpha\triangle\beta)|=2(D-s-m)+b-w\), and $m_\triangle\le b$, and combining with the counting bound above, we obtain
\begin{align*}
&\sum_{\substack{\mathrm{Case\ 3.2}}}
\mathbb E_Q[Y^{\alpha+\beta}]\le
O(ND^8)
\sum_{s,m,b}
(1-\chi)^s
\left(\frac{C(D+2)^3}{en\rho}\right)^m
\left(\frac{C(D+2)^3\rho(\eta+1)}{\eta\rho+1}\right)^{b-m}.
\end{align*}
By Lemma~\ref{lem:b-k-conditions}, every pair in Case~3.2 satisfies \(b\ge2\) or \(s\ge k\). Splitting into \(m=1\) and \(m\ge2\), and using \((1-\chi)^k\le n^{-1}\), gives
\begin{align}
\notag\sum_{\substack{\alpha,\beta \in \cU_k\\ \mathrm{Case\ 3.2}}}
\mathbb E_Q[Y^{\alpha+\beta}]&\le
O\left(\frac{ND^8}{n\rho^2(\eta+1)}\right)
\left[
O\left(D^6\rho^2(\eta+1)^2\right)
+
O\left(n^{-1}D^3\rho(\eta+1)\right)
\right]\\
\notag&\quad+
O\left(\frac{ND^{14}}{n^2\rho^2}\right)
\sum_{m\ge2}\sum_{b\ge m}
O\left(D^3(n\rho)^{-1}\right)^{m-2}
O\left(D^3\rho(\eta+1)\right)^{b-m}\\
&=
O\left(\frac{D^{14}(\eta+1)}{n}N\right)
+
O\left(\frac{D^{11}}{n^2\rho}N\right)
+
O\left(\frac{D^{14}}{n^2\rho^2}N\right).\label{eq.psm_ub_case3.2}
\end{align}

The geometric sums are bounded since $D^3/(n\rho)=o(1)$ and $D^3\rho(\eta+1)=o(1)$, which follow from $n\rho=\omega(D^7)$ and $\rho(\eta+1)=o(D^{-7})$, respectively. The three terms in \eqref{eq.psm_ub_case3.2} are $o(N)$. Indeed,
\[\frac{D^{14}(\eta+1)}{n}=\frac{D^7}{n\rho}\bigl(D^7\rho(\eta+1)\bigr)=o(1),\]
using $n\rho=\omega(D^7)$ and $\rho(\eta+1)=o(D^{-7})$,
\[\frac{D^{11}}{n^2\rho}=\frac{D^4}{n}\frac{D^7}{n\rho}=o(1), \qquad \text{and} \qquad \frac{D^{14}}{n^2\rho^2}=\left(\frac{D^7}{n\rho}\right)^2=o(1).\]
Hence, the Case~3.2 contribution is $o(N)$.
Therefore, the assumptions needed for Case 3 to be $o(N)$ are
$n\rho=\omega(D^7)$ and $\rho(\eta+1)=o(D^{-7})$.
 Recall that the dominant contribution to $\E_\QQ[f(Y)^2]$ comes from Case 1.2 at $w=0$:
\[
\sum_{\substack{\alpha,\beta\in\U_k\\\text{Case 1.2},\, w=0}} \E_\QQ[Y^{\alpha+\beta}]
= (1+o(1))\,(16\pi^2 e^2 (k+1)^6)^{-1}\,\ell^2\,(en)^{2D}(\lambda\rho)^{2D}.
\]
All other contributions (Cases 1.1, 1.2 with $w\geq 1$, 2, 3) are $o(N)$. From~\eqref{eq:size-U_k-app},
\[
\E_\QQ[f(Y)]^2 = |\U_k|^2\,\ell^2\,\lambda^{2D}\rho^{2D}
= (1+o(1))\,(4\pi e\, k^3)^{-2}\,\ell^2\,(en)^{2D}(\lambda\rho)^{2D}.
\]
Since $(4\pi e\, k^3)^{-2} = (1+o(1))\,(16\pi^2 e^2\,(k+1)^6)^{-1}$, the leading terms match, giving
$\mathrm{Var}_\QQ(f(Y)) = o(N)$. The same holds under $\PP$ with $\ell$ replaced by $\ell'$.
Finally, since $\ell\neq\ell'$, 
\[
|\E_\PP[f(Y)] - \E_\QQ[f(Y)]|
= |\ell'-\ell||\U_k|\lambda^D\rho^D
\geq (1-o(1))\frac{|\ell'-\ell|}{4\pi e\,k^3}(en\lambda\rho)^D
= \Omega(\sqrt{N}). \qedhere
\]
\end{proof}
\noindent Combining the results so far, we are ready to prove part~(ii) of
Theorem~\ref{thm:strong_PSM}.

\begin{proof}[Proof of Theorem~\ref{thm:strong_PSM}\textnormal{(ii)}] Let $k=\Theta(\log n)$ be as in Proposition~\ref{prop:BUG-moments} and set
$D=2k+3$. Set $\lambda_0=(1+\varepsilon)(\rho\sqrt{en})^{-1}$. At signal strength $\lambda_0$, since $\eta=(\ell\lambda_0)^2$ and $\ell$ is fixed,
\[\rho(\eta+1)=\frac{\ell^2(1+\varepsilon)^2}{en\rho}+\rho=
o(D^{-7}),
\]
using $n\rho=\omega(D^7)$ and $\rho=o(D^{-7})$. Hence, Proposition~\ref{prop:BUG-moments} applies at signal strength $\lambda_0$, giving
\[
\bigl|\E_\PP[f(Y)]-\E_\QQ[f(Y)]\bigr|=\Omega(\sqrt N),
\qquad
\max\{\operatorname{Var}_\QQ(f(Y)),\operatorname{Var}_\PP(f(Y))\}=o(N),
\]
where all expectations and variances in this display are taken at signal strength $\lambda_0$. Therefore,
\[
\frac{\left|\E_\PP[f(Y)]-\E_\QQ[f(Y)]\right|}
{\sqrt{\max\{\operatorname{Var}_\QQ(f(Y)),\operatorname{Var}_\PP(f(Y))\}}}
=
\frac{\Omega(\sqrt N)}{\sqrt{o(N)}}=\omega(1),
\]
implying that the BUG polynomial $f$ strongly separates at signal strength $\lambda_0$. Now let $\lambda\ge\lambda_0$ and set $a:=\lambda_0/\lambda$. Given an observation
$Y$ at signal strength $\lambda$, let
\[
\widetilde Y=aY+\sqrt{1-a^2}\,\widetilde X,
\]
where $\widetilde X$ is an independent standard Gaussian symmetric matrix. Note that
$Y=X_\lambda+Z$. Here $X_\lambda$ denotes the conditional mean matrix under the law being considered, i.e., for instance under $\QQ$, $(X_\lambda)_{ij}=\ell\lambda\,\mathds 1\{\Theta_i=\Theta_j\in[\ell]\}.$ Thus,
\[
\widetilde Y
=
X_{\lambda_0}
+
\left(aZ+\sqrt{1-a^2}\,\widetilde X\right),
\]
and the bracketed term is again standard Gaussian noise independent of $\Theta$.
Thus, $\widetilde Y$ has the law of an observation at signal strength $\lambda_0$. Define
\[
g(Y):=\E_{\widetilde X}\left[f\left(aY+\sqrt{1-a^2}\,\widetilde X\right)\right].
\]
Then $\deg(g)\le D$. Moreover, the mean gap of $g$ at signal strength $\lambda$
equals the mean gap of $f$ at signal strength $\lambda_0$. Finally, since
$g(Y)=\E_{\widetilde X}[f(\widetilde Y)\mid Y]$ under
either planted law, conditional Jensen gives $\operatorname{Var}(g(Y))
\le
\operatorname{Var}(f(\widetilde Y)).$ Since $\widetilde Y$ is an observation at signal strength $\lambda_0$,
the variance of $g$ at signal strength $\lambda$ is at most the variance of $f$
at signal strength $\lambda_0$, under either planted law. Hence, $g$ strongly
separates $\PP$ and $\QQ$ at signal strength $\lambda$.
\end{proof}
\begin{remark}[Polynomial-time approximation] Color-coding \cite{alon1995color} has been able to give polynomial-time computable approximations to similar tree-based polynomials of degree $O(\log n)$ (see \cite{mao2024testing}). We expect that this approach can be extended to the present case. One may construct $\mathcal{U}_k$ by listing all rooted trees with $k$ edges and attaching two of these to a triangle to create a BUG.
Then, following~\cite{mao2024testing}, deleting one edge from a unicyclic subgraph of a BUG yields either a smaller such object and tree components or all tree components (the latter case matches~\cite{mao2024testing}). We leave a full algorithmic treatment to future work.
\end{remark}

\subsection{Weak Testing}
We next consider weak testing, where the mean gap of a statistic only needs to be of the same order as its standard deviation. This leads to a different scale from strong testing. In the multigraph convention used for PSM, diagonal entries correspond to loops, and loops count as cycles. Thus, the trace statistic $\operatorname{Tr}(Y)=\sum_{i=1}^n Y_{ii}$ is indexed by the smallest connected cyclic multigraph: a single loop. Since this statistic aggregates $n$ independent diagonal contributions, its signal-to-noise ratio is of constant order when $
\lambda\asymp(\rho\sqrt n)^{-1}.$
The theorem below gives the corresponding weak-testing result: below the scale $(\rho\sqrt n)^{-1}$, the degree-$D$ advantage is $1+o(1)$, while at the scale $(\rho\sqrt n)^{-1}$ a simple trace statistic already weakly separates the two planted laws.
\thmPSMweak*
\begin{proof}[Proof of Theorem~\ref{thm:weak_PSM} \textnormal{(i)}]
By~\Cref{prop:adv_bound_connected} and the construction of $u$, we have
\[
\Adv_{\le D}^2(\PP,\QQ)
\le \|u\|^2
=1+\sum_{\alpha\gamma\in\goodJ:\alpha\neq\varnothing}u_{\alpha\gamma}^2 .
\]
Grouping by the number $v$ of vertices, the number $C$ of connected components, and the component sizes $v_1,\dots,v_C$, the same enumeration as in the strong-testing proof gives
\begin{align}
\Adv_{\le D}^2(\PP,\QQ)\le 1+\sum_{v=1}^D ({ n}\tilde\rho^2\lambda^2)^{v}
\sum_{C=1}^v \frac{L^{2C}}{C!}
\sum_{\substack{v_1+\cdots+v_C=v\\ v_i\ge1}}
\prod_{i=1}^C \frac{v_i^{ v_i}}{v_i!}
\sum_{k=0}^D (L\lambda)^{2k}\binom{ v^2+k}{k},
\label{eq:weak-adv-joint-sum}
\end{align}

As in the strong-testing proof, we have that
\[
 \sum_{k=0}^D (L\lambda)^{2k}\binom{v^2+k}{k} \leq 2+{v^2}\exp(2L^2v/C_0).
\]
Choosing $C_0$ sufficiently large and enlarging the constants below, this factor is absorbed into the final $v$-sum. Substituting this into \eqref{eq:weak-adv-joint-sum}, and carrying out the remaining sums over $C$ and $v_1,\dots,v_C$ as in the strong-testing proof gives constants $K,\kappa>0$, depending only on $L$, such that
\begin{equation}\label{eq:weak-adv-final-sum}
\Adv_{\le D}^2(\PP,\QQ)
\le
1+\sum_{v=1}^D \big(Kn\tilde\rho^2\lambda^2\big)^v e^{\kappa\sqrt v}.
\end{equation}
Since $\rho=o(1)$, we have $\tilde\rho=\rho(1+o(1))$. By the assumption
$\lambda=o((\rho\sqrt n)^{-1})$, we have $
x_n:=Kn\tilde\rho^2\lambda^2=o(1).$
Thus, for all sufficiently large $n$, $x_n\le e^{-2(\kappa+1)}$. For all $v\ge1$,
\[
x_n^v e^{\kappa\sqrt v}
\le
\exp(-2(\kappa+1)v+\kappa\sqrt v)
\le
e^{-v},
\]
since $\sqrt v\le v$.
The dominating sequence $\{e^{- v}\}_{v\ge1}$ is summable, and for each fixed $v$,
$x_n^v e^{\kappa\sqrt v}\to0.$ Hence, after summing over $v$, dominated convergence gives a total contribution of $o(1)$. Applying this to \eqref{eq:weak-adv-final-sum} gives $\Adv_{\le D}^2(\PP,\QQ)\le1+o(1),$
and hence, \[\Adv_{\le D}(\PP,\QQ)=1+o(1). \qedhere\]
\end{proof}
\noindent We now prove the weak-testing upper bound, where $f$ is the degree-$1$ trace statistic.
\begin{proof}[Proof of Theorem~\ref{thm:weak_PSM} \textnormal{(ii)}]
  Let $f(Y):=\sum_{i=1}^n Y_{ii}$ be the trace statistic. We compute its mean under $\QQ$; the corresponding expression under $\PP$ is obtained by replacing $\ell$ with $\ell'$.
\begin{align*}
\mathbb{E}_{\mathbb{Q}}[f(Y)] &= \mathbb{E}_{\mathbb{Q}}\Big[\sum_{i=1}^n 
\Big( \ell \lambda \sum_{c=1}^\ell \mathds{1}[\Theta_i = c] + Z_{ii} \Big) \Big]=\ell \lambda n \rho.
\end{align*}
For the second moment, we have
\begin{align*}
    &\mathbb{E}_\mathbb{Q} [f^2(Y)] = \mathbb{E}_\mathbb{Q}
    \Big[\Big(\sum_{i=1}^n \ell\lambda \sum_{c=1}^{\ell}\mathds{1}[\Theta_i = c] + Z_{ii} \Big)^2\Big]\\
    &= \mathbb{E}_\mathbb{Q}
    \Big[\ell^2 \lambda^2 \sum_{i=1}^n \sum_{j=1}^n \Big(\sum_{c=1}^{\ell}\mathds{1}[\Theta_i = c]
    \sum_{c=1}^{\ell}\mathds{1}[\Theta_j = c] \Big)
    + \sum_{i=1}^n Z_{ii}^2\Big].
\end{align*}
Here the first sum corresponds to the terms with $i=j$, the second to the terms with $i\neq j$, while the final term comes from the noise matrix,
\begin{align*}
    &= \mathbb{E}_{\mathbb{Q}} 
    \Big[\ell^2\lambda^2 \Bigg( \sum_{i=1}^n \sum_{c=1}^{\ell}\mathds{1}[\Theta_i = c]
    +  \sum_{i=1}^n \sum_{\substack{j=1 \\ j \neq i}}^n \Big( \sum_{c=1}^{\ell}\mathds{1}[\Theta_i = c]
    \sum_{c=1}^{\ell}\mathds{1}[\Theta_j = c] \Big) \Bigg)
    + \sum_{i=1}^n Z_{ii}^2\Big]\\
    &= \ell^2\lambda^2  \Big(n \ell \frac{\rho}{\ell} + 
    n(n-1) \ell^{2}\frac{\rho^2}{\ell^2} \Big)
    + n.
\end{align*}
Therefore, 
\begin{align*}
    \operatorname{Var}_{\mathbb{Q}}[f(Y)] &=  \ell^{2}\lambda^2 n \rho + \ell^{2}\lambda^2 n(n-1) \rho^2 + n - \ell^{2} \lambda^2 n^2 \rho^2\\
    &= \ell^{2}\lambda^2n\rho(1-\rho) + n.
\end{align*}
The same computation under $\mathbb{P}$ gives $\operatorname{Var}_\mathbb{P}[f(Y)]=
(\ell')^2\lambda^2 n\rho(1-\rho)+n.$
Since $\ell$ and $\ell'$ are fixed constants, the same bounds below apply to both variances.
Without loss of generality, assume $\ell > \ell^{\prime}$. Then we require, for some $\zeta > 0$,
\[
\mathbb{E}_{\mathbb{Q}}[f(Y)] - \mathbb{E}_{\mathbb{P}}[f(Y)] =(\ell - \ell^{\prime}) \lambda n \rho  \geq  
\sqrt{\frac{\zeta}{2}}\sqrt{\ell^{2}\lambda^2n\rho(1-\rho) + n}.
\]
Equivalently,
\[
(\ell - \ell^{\prime})^2 \lambda^2 n^2 \rho^2    \geq \frac{\zeta}{2} (\ell^{2}\lambda^2n\rho(1-\rho) + n).
\]
Thus, it suffices that
\[
\lambda^2 \geq \frac{\zeta}{(\ell-\ell^{\prime})^2 n \rho^2}
\quad \text{ and } \quad
\frac{(\ell - \ell^{\prime})^2 n \rho}{\ell^{2}(1-\rho)} \geq \zeta.
\]
 For the first inequality, the hypothesis $\lambda=\Omega((\rho\sqrt{n})^{-1})$ implies that there exists some $\varepsilon>0$ such that $\lambda\ge\varepsilon(\rho\sqrt{n})^{-1}$. By choosing $\zeta = \varepsilon^2(\ell-\ell^{\prime})^2$, the first inequality follows immediately. The hypothesis $n\rho = \omega(1)$ implies that the second inequality is satisfied for all sufficiently large $n$.
\end{proof}

\section{Planted Dense Subgraph Model}\label{sec:pds-testing}
We now prove the strong and weak planted testing results for the planted dense
subgraph model. After the natural normalization in \eqref{eq:pds-lambda}, the resulting low-degree thresholds match those of the planted submatrix model.
\begin{definition}[$\ell$-Planted Dense Subgraph]\label{def:PDS}
Let \(n,\ell\in\mathbb N\), \(\rho\in(0,1)\), and \(0\le p_0\le p_1\le 1\).
Set $\delta:=p_1-p_0$
and assume \(p_0+\ell\delta\le 1\). We define the
\(\ell\)-planted dense subgraph model
\(\mathbb P_{\mathrm{PDS}}(n,\ell,\rho,p_0,p_1)\) as follows.
The latent labels \(\Theta=(\Theta_1,\dots,\Theta_n)\) are drawn independently
from \(\{0,1,\dots,\ell\}\), with
\[
\mathrm{Pr}(\Theta_i=c)=\frac{\rho}{\ell}
\quad\text{for each }c\in[\ell],
\qquad
\mathrm{Pr}(\Theta_i=0)=1-\rho.
\]
Given \(\Theta\), the edge variables \(\{Y_{ij}\}_{1\le i<j\le n}\) are
independent and satisfy
\[
Y_{ij}\sim
\mathrm{Ber} \left(
p_0+\ell\delta\,\mathds 1\{\Theta_i=\Theta_j\in[\ell]\}
\right).
\]
\end{definition}
Equivalently, the within-community edge probability in the $\ell$-model is
$q_\ell:=p_0+\ell\delta$. The notation $p_1$ is used only to parametrize
the normalized signal $\delta=p_1-p_0$. Thus, $p_1$ is not the
within-community edge probability for $\ell>1$. 
\subsection{Strong Testing}
Define the normalized signal strength
\begin{equation}\label{eq:pds-lambda}
\lambda := \frac{p_1-p_0}{\sqrt{p_0(1-p_0)}}.    
\end{equation}
The next theorem gives matching low-degree lower and upper bounds in terms of $\lambda$, $\rho$, and the polynomial degree $D$.

\begin{theorem}[Strong testing: PDS]\label{thm:strong_PDS}
Given parameters $n,\ell,\ell',\rho,p_0,p_1$, with
$\ell,\ell'$ fixed distinct positive integers, $\rho\in(0,1)$, and
$p_0,p_1\in(0,1)$, define
$\QQ:=\mathbb{P}_{\mathrm{PDS}}(n,\ell,\rho,p_0,p_1)$ and
$\PP:=\mathbb{P}_{\mathrm{PDS}}(n,\ell',\rho,p_0,p_1)$.
Set $\lambda$ as in~\eqref{eq:pds-lambda}, and assume
$p_0+\max\{\ell,\ell'\}(p_1-p_0)<1$. For any constant $\varepsilon>0$, there exists a constant $C_0\equiv C_0(\ell,\ell',\varepsilon)>0$ such that the following hold.
\begin{enumerate}
  \item[(i)] \emph{(Lower bound)}. If
  \[
  \lambda \le (1-\varepsilon)\big(\rho\sqrt{en}\big)^{-1},
  \qquad
  D \le \lambda^{-2}/C_0, \qquad \rho=o(1)
  \]
  then $\Adv_{\le D}(\PP,\QQ)=O(1)$.

  \item[(ii)] \emph{(Upper bound)}. If
  \[
  \lambda \ge (1+\varepsilon)\big(\rho\sqrt{en}\big)^{-1},
  \quad
  \rho=\omega(n^{-1}\log^7 n),
  \quad
  \rho=o(\log^{-7}n),
  \quad
  p_0=\omega(n^{-1}\log^{14}n),
  \]
 then there exists a polynomial of degree at most $C_0\log n$ that strongly separates $\mathbb{P}$ and $\mathbb{Q}$.
\end{enumerate}
\end{theorem}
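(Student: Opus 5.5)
For the lower bound (i), we will run the framework of Section~\ref{sec:framework} with a Bernoulli-adapted basis. On the observation side, let $\phi_\alpha(Y)=\prod_{(i,j)\in\alpha}\frac{Y_{ij}-p_0}{\sqrt{p_0(1-p_0)}}$, for simple graphs $\alpha\in\{0,1\}^{\binom{[n]}{2}}$ with $|\alpha|\le D$. Multilinearity on $\{0,1\}^N$ makes these a basis.

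For the extended space, we will realize $Y$ under $\QQ$ from independent uniforms $U_{ij}$ together with $\Theta$. Set $Y_{ij}=\mathds 1\{U_{ij}\le p_0\}+\mathds 1\{p_0<U_{ij}\le p_0+\ell\delta\}\mathds 1\{\Theta_i=\Theta_j\in[\ell]\}$. Writing $\chi_{ij}=\frac{\mathds 1\{U_{ij}\le p_0\}-p_0}{\sqrt{p_0(1-p_0)}}$, we have $\frac{Y_{ij}-p_0}{\sqrt{p_0(1-p_0)}}=\chi_{ij}+X_{ij}$ with $X_{ij}=\lambda\,\ell\,\mathds 1\{\Theta_i=\Theta_j\in[\ell]\}\cdot(\text{a Bernoulli}(\ell\delta)/(\ell\delta)\text{ factor})$. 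The cleaner route is to take $\psi_{\beta\gamma}=\chi^\beta\,\bigl(\frac{\mathds 1[\Theta\neq0]-\rho}{\sqrt{\rho(1-\rho)}}\bigr)^\gamma$, which is orthonormal because the $\chi_{ij}$ and the $\Theta_i$ are independent and centered with unit variance.

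Expanding $\phi_\alpha(Y)=\prod(\chi_{ij}+\tilde X_{ij})$ over subsets $\beta\subseteq\alpha$ should give $M_{\beta\gamma,\alpha}$ of the same shape as Lemma~\ref{lem:psm:cdM}, with no Hermite factors since $\alpha$ is simple. The $U_{ij}$-dependence of $\tilde X_{ij}$ enters only through mixed moments $\E[\chi_{ij}\tilde X_{ij}]$, and these will need to be checked to factor. I also expect $c_\alpha=\ell'^{\,C+|\alpha|-|V(\alpha)|}\lambda^{|\alpha|}\rho^{|V(\alpha)|}$, possibly up to harmless corrections.

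Given this, Lemma~\ref{lem:tree-uninformative} will again show that connected bad graphs are exactly trees. The certificate of Proposition~\ref{prop:u-PSM} carries over verbatim, since the cancellation Lemma~\ref{lem:psm-cancellation} only used the factor $((1-\rho)/\rho)^{|\gamma|/2}$ and the connectivity argument. The norm bound then follows from the PSM computation with multigraphs replaced by simple graphs, so Lemma~\ref{lemma:nr-multigraphs} still applies as an overcount. The main obstacle here is the coupling: the extended variables must produce exact orthonormality and component consistency of $M$. If the uniform coupling gives non-factorizing mixed terms, I will instead use $\chi_{ij}:=(Y_{ij}-\E[Y_{ij}\mid\Theta])/\sqrt{\mathrm{Var}(Y_{ij}\mid\Theta)}$ and accept a conditional-variance ratio $\le(1+O(\ell\delta/p_0))$ per edge, absorbing it into $L$.

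For the upper bound (ii), we will reuse the BUG statistic $f(Y)=\sum_{\alpha\in\mathcal U_k}\prod_{(i,j)\in\alpha}(Y_{ij}-p_0)$. The mean gap is $(\ell'-\ell)|\mathcal U_k|\delta^D\rho^D$. Normalizing by $\sqrt{p_0(1-p_0)}^D$ gives the same $N$ as in Proposition~\ref{prop:BUG-moments}. The variance analysis repeats Cases~1--3, with Lemmas~\ref{lem:second-moment-equal} and~\ref{lem:second-moment-not-equal} replaced by Bernoulli analogues: a shared edge contributes $\E[(Y-p_0)^2]\le p_0(1+\eta\rho')$ rather than $\eta\rho+1$. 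This is where the extra hypothesis $p_0=\omega(n^{-1}\log^{14}n)$ enters, keeping $\ell\delta$ comparable to $\sqrt{p_0}\lambda$ so that $\eta$ stays controlled.

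To handle large $\lambda$ we will use thinning, the analogue of the Gaussian interpolation: keep each edge independently with probability $a$ and relate to parameters $(ap_0,ap_1)$. The difficulty is that $\lambda$ does not scale exactly under thinning. We will therefore choose $a$ so the thinned $\lambda$ equals $(1+\varepsilon)(\rho\sqrt{en})^{-1}$ while $ap_0$ still satisfies the density hypothesis. The conditional expectation of $f$ at the thinned law is then a degree-$D$ polynomial of $Y$, and Jensen bounds its variance. I expect the variance bookkeeping in the Bernoulli Case~3 to be routine, so the thinning step is the delicate one.
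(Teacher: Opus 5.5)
There are two genuine gaps, one in each part.

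\textbf{Upper bound: the thinning.} Keeping each edge with probability $a$ maps $(p_0,\delta)\mapsto(ap_0,a\delta)$. This leaves essentially unchanged the quantity that your repeated Case~2--3 analysis needs to be small, namely $\rho\tilde\eta$ with $\tilde\eta=\ell\delta/(p_0(1-p_0))$. Each shared edge contributes $1+(1-2p_0)\tilde\eta\,\mathds 1\{\Theta_i=\Theta_j\in[\ell]\}$, and the bounds require $\rho(\tilde\eta+1)=o(\log^{-7}n)$; after deletion, $\tilde\eta=\ell\delta/(p_0(1-ap_0))$. Your remark about keeping ``$\ell\delta$ comparable to $\sqrt{p_0}\lambda$'' is just the identity $\delta=\lambda\sqrt{p_0(1-p_0)}$ and controls nothing.

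Moreover, reaching $\lambda_\ast:=(1+\varepsilon)(\rho\sqrt{en})^{-1}$ forces $ap_0\le\lambda_\ast^2p_0^2/\delta^2=(1+\varepsilon)^2p_0^2/(en\rho^2\delta^2)$. So $ap_0$ generally cannot stay above the density threshold. For example, $p_0=n^{-1/2}$, $\delta=n^{-1/4}$, $\rho=n^{-1/8}$ satisfy all hypotheses of (ii). Yet after your thinning $nap_0\to0$ and $\rho\tilde\eta\ge\ell n^{1/8}$. Already the diagonal terms, each at least $\ell(\rho/\ell)^D\bigl(1+(1-2ap_0)\tilde\eta\bigr)^D$, sum to more than $N$ by a factor $n^{\Omega(D)}$.

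The right analogue of the Gaussian interpolation keeps the baseline fixed: independently with probability $s$, replace $Y_{ij}$ by a fresh $\mathrm{Ber}(p_0)$. Conditionally on $\Theta$, the edge probabilities become $p_0+(1-s)\ell\delta\,\mathds 1\{\Theta_i=\Theta_j\in[\ell]\}$. So $\lambda\mapsto(1-s)\lambda$ exactly with $p_0$ unchanged. At $\lambda=\lambda_\ast$ one gets $\rho\tilde\eta=\rho\ell\lambda_\ast/\sqrt{p_0(1-p_0)}\le\sqrt2\,\ell(1+\varepsilon)/\sqrt{enp_0}=o(\log^{-7}n)$ for $p_0\le1/2$, while $\tilde\eta\le2$ for $p_0>1/2$. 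This is exactly where $p_0=\omega(n^{-1}\log^{14}n)$ enters.

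\textbf{Lower bound: the cancellation.} Lemma~\ref{lem:psm-cancellation} does not carry over verbatim. In your uniform coupling write $\frac{Y_e-p_0}{\sqrt{p_0(1-p_0)}}=\chi_e+\tilde X_e$, with $\tilde X_e=\mathds 1\{p_0<U_e\le p_0+\ell\delta\}A_e/\sqrt{p_0(1-p_0)}$ and $A_e=\mathds 1\{\Theta_i=\Theta_j\in[\ell]\}$. Since $\{U_e\le p_0\}$ and $\{p_0<U_e\le p_0+\ell\delta\}$ are disjoint, $\E[(\chi_e+\tilde X_e)\chi_e\mid\Theta]=1-\kappa A_e$ with $\kappa=\ell\delta/(1-p_0)$. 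Hence
\[
M_{\beta\gamma,\alpha}=\mathds 1\{\beta\subseteq\alpha\}(\ell\lambda)^{|\alpha\setminus\beta|}\,\E_{\QQ_\Theta}\Bigl[\prod_{e\in\alpha\setminus\beta}A_e\prod_{e\in\beta}(1-\kappa A_e)\prod_{i\in\gamma}h_i\Bigr],
\]
where $h_i$ is the standardized vertex factor. So the mixed terms do not factor. First, $M_{\alpha\gamma,\alpha}\neq\mathds 1\{\gamma=\varnothing\}$. Second, for $\varnothing\neq\beta\subsetneq\alpha$ the entries are neither supported on $\gamma\subseteq V(\alpha\setminus\beta)$ nor proportional to $((1-\rho)/\rho)^{|\gamma|/2}$. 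These are exactly the two facts the PSM proof uses.

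Your fallback has the same feature: the paper's conditionally standardized $\psi$ produces the factor $\bigl(q_\ell(1-q_\ell)/(p_0(1-p_0))\bigr)^{s(\beta;\Theta)/2}$. Such a factor cannot be ``absorbed into $L$'': $u^\top M=c^\top$ must hold exactly, and under (i) nothing prevents $\ell\delta/p_0\to\infty$.

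The missing idea, which is the content of Lemma~\ref{lem:pds-cancellation}, is to perform the $\gamma$-sum inside the $\Theta$-expectation:
\[
\sum_{\gamma\subseteq V(\beta)}\prod_{i\in\gamma}\Bigl(-\sqrt{\tfrac{\rho}{1-\rho}}\,h_i\Bigr)=(1-\rho)^{-|V(\beta)|}\,\mathds 1\{\Theta_i=0\ \forall i\in V(\beta)\}.
\]
On this event every $A_e$ with $e\in\beta$ vanishes, so the $\Theta$-dependent edge factors equal $1$. The $\beta=\alpha$ sum is then exactly $1$ (or $(p_0(1-p_0))^{|\alpha|/2}$ with the unnormalized basis). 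For $\varnothing\neq\beta\subsetneq\alpha$, the event contradicts $A_e=1$ on an edge of $\alpha\setminus\beta$ meeting $V(\beta)$, so the sum is $0$. With this identity your uniform coupling does go through and gives exactly the PSM norm bound; without it, the certificate is not verified.
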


\subsubsection{Lower Bound}\label{app:PDS:lower-bound}
The PDS lower bound is obtained by applying the general certificate framework explained in Section \ref{sec:framework} to the Bernoulli edge observation model. The main changes relative to PSM are the choice of observation basis and the verification of the corresponding moment and
cancellation identities. The moment formulas in Lemma~\ref{lem:PDS-cMd} again identify the good graphs as those whose connected components contain cycles. The remaining norm bound is then controlled by the same good-multigraph enumeration used in the PSM proof, after the normalization given in \eqref{eq:pds-lambda}.
\subsubsection*{Setting $\phi, \psi$ and establishing properties}
We choose the observation basis $\{\phi_\alpha\}_\alpha$ and the extended-space orthonormal collection $\{\psi_{\beta\gamma}\}_{\beta\gamma}$ needed to apply~\Cref{prop:adv_bound_connected}. Define
\begin{equation}\label{eq:phi_def_PDS}
\phi_\alpha(Y)=(Y-p_0)^\alpha, \qquad \alpha \in \cI:=\Bigl\{\alpha\in\{0,1\}^{\binom{[n]}{2}}:\ |\alpha|\le D\Bigr\}.
\end{equation}

Since the observations are Bernoulli, every polynomial function of degree at most $D$ has a unique multilinear representative, so this is a basis for $\mathbb R[Y]_{\le D}$ on the observation space.
For $\beta \in \{0,1\}^{\binom{[n]}{2}} $ and $\gamma\in\{0,1\}^n$, choose
{\small\begin{equation}\label{eq:psi-PDS}
\psi_{\beta \gamma}(Y,\Theta)=\prod_{\substack{1\leq i<j\leq n:\\ \Theta_i=\Theta_j\in [\ell]}}\left(\frac{Y_{ij}-q_\ell}{\sqrt{q_\ell(1-q_\ell)}}\right)^{\beta_{ij}}\prod_{\substack{1\leq i<j\leq n:\\ \neg(\Theta_i=\Theta_j\in[\ell])}}\left(\frac{Y_{ij}-p_0}{\sqrt{p_0(1-p_0)}}\right)^{\beta_{ij}}\prod_{i=1}^n \left(\frac{\mathds{1}{[\Theta_i\neq 0]}-\rho}{\sqrt{\rho(1-\rho)}}\right)^{\gamma_i}.
\end{equation}}
\begin{lemma}\label{lem:orthornormal-psi:PDS}
The collection $\{\psi_{\beta\gamma}\}_{\beta \in \{0,1\}^{\binom{[n]}{2}},\gamma\in \{0,1\}^n}$ is orthonormal.
\end{lemma}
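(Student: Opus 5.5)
We prove orthonormality in $L^2(\QQ)$ by conditioning on the latent labels $\Theta$ and then using independence of the edge variables. Write $\psi_{\beta\gamma}(Y,\Theta)=A_\beta(Y,\Theta)\,B_\gamma(\Theta)$, where
\[
A_\beta(Y,\Theta)=\prod_{i<j}\bigl(\bar Y_{ij}(\Theta)\bigr)^{\beta_{ij}},
\qquad
B_\gamma(\Theta)=\prod_{i=1}^n\left(\frac{\mathds 1[\Theta_i\neq0]-\rho}{\sqrt{\rho(1-\rho)}}\right)^{\gamma_i}.
\]
Here $\bar Y_{ij}(\Theta)$ denotes $Y_{ij}$ centered and scaled by its conditional mean and standard deviation under $\QQ$ given $\Theta$. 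The conditional mean is $q_\ell$ when $\Theta_i=\Theta_j\in[\ell]$ and $p_0$ otherwise, matching Definition~\ref{def:PDS}. We need $0<p_0$ and $q_\ell<1$ so the normalizations are finite; this is where we use the standing assumption $p_0+\max\{\ell,\ell'\}(p_1-p_0)<1$ and $p_0\in(0,1)$.

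The key step is a conditional computation. Given $\Theta$, the $Y_{ij}$ are independent Bernoulli variables. Each $\bar Y_{ij}(\Theta)$ therefore has conditional mean $0$ and conditional variance $1$, and distinct edges give conditionally independent factors. Since $\beta,\beta'$ are $\{0,1\}$-valued, expanding the product gives
\[
\E_\QQ\bigl[A_\beta A_{\beta'}\mid\Theta\bigr]=\prod_{ij\in\beta\cap\beta'}\E[\bar Y_{ij}^2\mid\Theta]\prod_{ij\in\beta\triangle\beta'}\E[\bar Y_{ij}\mid\Theta]=\mathds 1\{\beta=\beta'\}.
\]
This holds for every realization of $\Theta$. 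The case split between "same community" and "not same community" is determined by $\Theta$, so once we condition, the normalization is the correct one in each case.

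Then we use the tower property:
\[
\E_\QQ[\psi_{\beta\gamma}\psi_{\beta'\gamma'}]=\E_\QQ\bigl[B_\gamma B_{\gamma'}\,\E[A_\beta A_{\beta'}\mid\Theta]\bigr]=\mathds 1\{\beta=\beta'\}\,\E_\QQ[B_\gamma B_{\gamma'}].
\]
Under $\QQ$ the indicators $\mathds 1[\Theta_i\neq0]$ are i.i.d.\ Bernoulli$(\rho)$, with $\rho\in(0,1)$. So the standardized factors are independent, centered and of unit variance, and the same product argument used in Lemma~\ref{lem:psm-cancellation} gives $\E_\QQ[B_\gamma B_{\gamma'}]=\mathds 1\{\gamma=\gamma'\}$. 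This completes the argument.

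The only delicate point is that the edge normalization depends on $\Theta$. This is why conditioning must come first, rather than treating the $A$ and $B$ parts as independent. Once we condition on $\Theta$, every step is routine.
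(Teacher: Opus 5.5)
Your proof is correct and follows essentially the same route as the paper. You condition on $\Theta$ and use conditional independence plus the zero-mean, unit-variance standardization of each edge to get $\mathds 1\{\beta=\beta'\}$. Then the tower property and the i.i.d.\ $\mathrm{Ber}(\rho)$ vertex indicators give $\mathds 1\{\gamma=\gamma'\}$; your explicit note that $0<p_0\le q_\ell<1$ keeps the normalizations finite is a small addition the paper leaves implicit.
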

\begin{proof}
    Fix $\beta,\beta' \in \{0,1\}^{\binom{[n]}{2}}$ and $\gamma,\gamma' \in \{0,1\}^n$. We show that $\E_\QQ[\psi_{\beta\gamma}\psi_{\beta'\gamma'}]=\mathds{1}\{\beta=\beta',\gamma=\gamma'\}$. Conditionally on $\Theta$, the edge variables are independent, and 
    \[Y_{ij}|\Theta \sim \begin{cases}
        \mathrm{Ber}(q_\ell), & \Theta_i=\Theta_j \in [\ell],\\
        \mathrm{Ber}(p_0), & \neg(\Theta_i=\Theta_j\in[\ell]).
    \end{cases}\]
Therefore,
\begin{align*}
    \E_\QQ\Bigg[\prod_{\substack{1\leq i<j\leq n:\\ \Theta_i=\Theta_j\in [\ell]}}\left(\frac{Y_{ij}-q_\ell}{\sqrt{q_\ell(1-q_\ell)}}\right)^{\beta_{ij}+\beta'_{ij}}\!\!\!\!\!\!\!\prod_{\substack{1\leq i<j\leq n:\\ \neg(\Theta_i=\Theta_j\in[\ell])}}\left(\frac{Y_{ij}-p_0}{\sqrt{p_0(1-p_0)}}\right)^{\beta_{ij}+\beta'_{ij}} \Bigg| \Theta\Bigg]=\mathds{1}[\beta=\beta'].
\end{align*}
By the tower property, 
\begin{align*}
    \E_\QQ[\psi_{\beta\gamma}\psi_{\beta'\gamma'}]=\mathds{1}\,\{\beta=\beta'\} \E_{ \QQ_\Theta}\prod_{i=1}^n \left(\frac{\mathds{1}[\Theta_i\neq 0]-\rho}{\sqrt{\rho(1-\rho)}}\right)^{\gamma_i+\gamma'_i}=\mathds 1 \{\beta=\beta',\gamma=\gamma'\},
\end{align*}
where the last equality uses that $\mathds 1 [\Theta_i\neq 0]\overset{\mathrm{iid}}{\sim} \mathrm{Ber}(\rho)$.
    \end{proof}

\noindent The following lemma records the forms of \(c_\alpha\), \(d_\alpha\), and \(M_{\beta\gamma,\alpha}\). Note,
\(M_{\beta\gamma,\alpha}=0\) unless \(\beta\subseteq\alpha\).
\begin{lemma}\label{lem:PDS-cMd}
With \(\phi_\alpha\) and \(\psi_{\beta\gamma}\) as defined in \eqref{eq:phi_def_PDS} and \eqref{eq:psi-PDS}, respectively, let
\[
c_\alpha:=\E_{\PP}[\phi_\alpha(Y)],
\qquad
d_\alpha:=\E_{\QQ}[\phi_\alpha(Y)],
\qquad
M_{\beta\gamma,\alpha}:=\E_{\QQ} \left[\phi_\alpha(Y)\psi_{\beta\gamma}(Y,\Theta)\right].
\]
Then
\[
c_\alpha
=
(\ell')^{|\mathcal{C}(\alpha)|+|\alpha|-|V(\alpha)|}\,
\delta^{|\alpha|}\rho^{|V(\alpha)|}, \qquad 
d_\alpha
=
\ell^{|\mathcal{C}(\alpha)|+|\alpha|-|V(\alpha)|}\,
\delta^{|\alpha|}\rho^{|V(\alpha)|},
\]
and
\begin{align*}
M_{\beta\gamma,\alpha}
&=
\mathds 1\{\beta\subseteq\alpha\}
(\ell\delta)^{|\alpha|-|\beta|}
\bigl(p_0(1-p_0)\bigr)^{|\beta|/2}\\
& \times \E_{\QQ_\Theta} \left[
\left(\frac{q_\ell(1-q_\ell)}{p_0(1-p_0)}\right)^{s(\beta;\Theta)/2}
\mathds 1\{A_e=1 \ \forall e\in\alpha\setminus\beta\}
\prod_{i=1}^n
\left(\frac{\mathds 1\{\Theta_i\neq 0\}-\rho}{\sqrt{\rho(1-\rho)}}\right)^{\gamma_i}
\right],
\end{align*}
where for edge $e=(i,j)$,
\begin{equation}\label{eq:pds-ae-sbeta}
A_e:=\mathds 1\{\Theta_i=\Theta_j\in[\ell]\} 
\qquad \text{and} \qquad
s(\beta;\Theta):=\sum_{e\in\beta}A_e.
\end{equation}
\end{lemma}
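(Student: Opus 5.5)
The plan is to condition on the latent labels $\Theta$ and use conditional independence of the edges, exactly as in the proof of Lemma~\ref{lem:psm:cdM}. Under either law, given $\Theta$, the variables $Y_e-p_0$ for $e\in\alpha$ are independent. Their conditional means are $\ell\delta A_e$ under $\QQ$ and $\ell'\delta A'_e$ under $\PP$, where $A'_e$ is the analogous indicator for $\ell'$ communities. Hence
\[
c_\alpha=(\ell'\delta)^{|\alpha|}\,\PP_\Theta\bigl(A'_e=1\ \forall e\in\alpha\bigr).
\]
The event that every edge of $\alpha$ lies inside a single nonzero community means that each connected component of $\alpha$ is monochromatic with a nonzero color. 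By independence of the labels, each component $\alpha_i$ contributes $\ell'\cdot(\rho/\ell')^{|V(\alpha_i)|}$. Multiplying over components gives the factor $\ell'^{|\mathcal C(\alpha)|-|V(\alpha)|}\rho^{|V(\alpha)|}$. Combined with $\ell'^{|\alpha|}\delta^{|\alpha|}$, this is the stated formula for $c_\alpha$. The formula for $d_\alpha$ is identical with $\ell$ in place of $\ell'$.

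For $M_{\beta\gamma,\alpha}$, condition again on $\Theta$ and factor over edges. The $\gamma$-factor is $\Theta$-measurable and pulls out of the inner conditional expectation. For an edge $e\notin\alpha\cup\beta$ the factor is $1$. For $e\in\beta\setminus\alpha$, the factor is a conditionally centered normalized variable, so its conditional mean is $0$; this gives $M=0$ unless $\beta\subseteq\alpha$. For $e\in\alpha\setminus\beta$, the factor is $\E[Y_e-p_0\mid\Theta]=\ell\delta A_e$, so the product over these edges equals $(\ell\delta)^{|\alpha|-|\beta|}\mathds 1\{A_e=1\ \forall e\in\alpha\setminus\beta\}$.

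For $e\in\beta\subseteq\alpha$, we need $\E[(Y_e-p_0)\,\psi_e\mid\Theta]$, where $\psi_e$ is the edge's normalized factor. If $A_e=0$, this is $\mathrm{Var}(Y_e)/\sqrt{p_0(1-p_0)}=\sqrt{p_0(1-p_0)}$. If $A_e=1$, write $Y_e-p_0=(Y_e-q_\ell)+\ell\delta$. The constant part vanishes against the centered factor, leaving $\sqrt{q_\ell(1-q_\ell)}$. Therefore the product over $e\in\beta$ is
\[
\bigl(p_0(1-p_0)\bigr)^{|\beta|/2}\left(\frac{q_\ell(1-q_\ell)}{p_0(1-p_0)}\right)^{s(\beta;\Theta)/2}.
\]
Taking the outer expectation over $\QQ_\Theta$ then yields the displayed expression.

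There is no real obstacle; the proof is a direct computation. The only points needing care are the following:
\begin{itemize}
\item keeping the $\Theta$-dependent normalization in $\psi$ straight, which is why $s(\beta;\Theta)$ stays inside the expectation;
\item noting that for Bernoulli variables the multilinear basis suffices, since $\alpha$ and $\beta$ are simple graphs and $Y_e^2=Y_e$ never appears with $\alpha_e\ge2$;
\item verifying the component count in $c_\alpha$ and $d_\alpha$, including isolated-vertex conventions.
\end{itemize}
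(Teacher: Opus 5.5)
Your proposal is correct and follows essentially the same route as the paper's proof. Both condition on $\Theta$, factor over edges by conditional independence, and obtain $c_\alpha,d_\alpha$ by counting components that are monochromatic with a nonzero color. For $M_{\beta\gamma,\alpha}$, both use conditional centering to get $M_{\beta\gamma,\alpha}=0$ when $\beta\not\subseteq\alpha$, the conditional mean $\ell\delta A_e$ on $\alpha\setminus\beta$, and the conditional covariance $\sqrt{p_0(1-p_0)}$ or $\sqrt{q_\ell(1-q_\ell)}$ on $\beta$, before averaging over $\QQ_\Theta$.
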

    \begin{proof} We first compute $c_\alpha=\E_\PP[\phi_\alpha(Y)]$. Conditioning on $\Theta$, we have 
\begin{align*} \mathbb E_{\PP}[\phi_\alpha(Y)\mid \Theta] &=\mathbb E_{\PP} \left[\prod_{(i,j)\in E(\alpha)}(Y_{ij}-p_0)\ \Big|\ \Theta\right] =\prod_{(i,j)\in E(\alpha)}\mathbb E_{\PP}[Y_{ij}-p_0\mid \Theta]. \end{align*} Since $Y_{ij}\mid\Theta\sim\mathrm{Ber} \big(p_0+\ell'\delta\,\mathds 1\{\Theta_i=\Theta_j\in[\ell']\}\big)$ under $\PP$,  \begin{align*} \mathbb E_{\PP}[Y_{ij}-p_0\mid\Theta] &=\mathbb E_{\PP}[Y_{ij}\mid\Theta]-p_0 =\ell'\delta\,\mathds 1\{\Theta_i=\Theta_j\in[\ell']\}. \end{align*} 
Hence 
\begin{align*}
\E_{\PP}[\phi_\alpha(Y)\mid \Theta]
&=(\ell'\delta)^{|\alpha|}
\prod_{(i,j)\in E(\alpha)}
\mathds 1\{\Theta_i=\Theta_j\in[\ell']\}.
\end{align*}
Taking expectation with respect to $\Theta$ gives
\begin{align*}
c_\alpha=(\ell'\delta)^{|\alpha|} \E_{\PP_\Theta}\prod_{(i,j)\in E(\alpha)} \mathds 1\{\Theta_i=\Theta_j\in[\ell']\}&=(\ell'\delta)^{|\alpha|}\prod_{\tau \in \mathcal{C}(\alpha)}
\sum_{c=1}^{\ell'}\left(\frac{\rho}{\ell'}\right)^{|V(\tau)|}\\
&=(\ell'\delta)^{|\alpha|}\prod_{\tau \in \mathcal{C}(\alpha)}\frac{\rho^{|V(\tau)|}}{(\ell')^{|V(\tau)|-1}}\\
&=(\ell')^{|\mathcal{C}(\alpha)|+|\alpha|-|V(\alpha)|}\delta^{|\alpha|}\rho^{|V(\alpha)|}.
\end{align*}

\noindent Recall that $d_\alpha=\E_\QQ[\phi_\alpha(Y)]$. By the same computation used for $c_\alpha=\E_\PP[\phi_\alpha(Y)]$, and using that $\QQ$ differs from $\PP$ only by replacing $\ell'$ with $\ell$, we obtain
\[d_\alpha=\ell^{|\mathcal{C}(\alpha)|+|\alpha|-|V(\alpha)|}\delta^{|\alpha|}\rho^{|V(\alpha)|}.\]

\noindent We now compute $M_{\beta\gamma,\alpha}$.
Note the following \begin{align*}
    M_{\beta\gamma,\alpha}
:= \E_{\mathbb Q}\big[\phi_\alpha(Y)\psi_{\beta\gamma}(Y,\Theta)\big]
= \E_{\QQ_\Theta}\Big[\,\E_\QQ\big[\phi_\alpha(Y)\psi_{\beta\gamma}(Y,\Theta)\mid\Theta\big]\,\Big].
\end{align*}
To simplify notation, let us define the following recalling $q_\ell=p_0+\ell\delta$, 
\begin{align*}
    h_i(\Theta) := \frac{\mathds 1[\Theta_i\neq 0]-\rho}{\sqrt{\rho(1-\rho)}}, \quad i \in [n], \qquad
    g_e(Y,\Theta):=
\begin{cases}
\dfrac{Y_e-q_\ell}{\sqrt{q_\ell(1-q_\ell)}} & \text{if }A_e=1\\[0.8em]
\dfrac{Y_e-p_0}{\sqrt{p_0(1-p_0)}} & \text{if }A_e=0
\end{cases},
\end{align*}
where $A_e$ is as defined in \eqref{eq:pds-ae-sbeta}. Note that $A_e=1$ implies $Y_e\mid\Theta\sim\mathrm{Ber}(q_\ell)$
while $A_e=0$ implies $Y_e\mid\Theta\sim\mathrm{Ber}(p_0)$,
so $g_e$ is mean-zero with unit variance conditional on $\Theta$ in both cases.
If $\beta\not\subseteq\alpha$ then $M_{\beta\gamma,\alpha}=0$.
If there exists $e\in\beta\backslash\alpha$, then $\phi_\alpha(Y)$ is independent of~$Y_e$.
Moreover, conditional on $\Theta$,
$
\E_{\QQ}[g_e(Y,\Theta)\mid\Theta]=0
$
and by conditional independence
of edges given $\Theta$ this forces the full conditional expectation to vanish. Hence
$M_{\beta\gamma,\alpha}=0$ unless $\beta\subseteq\alpha$. Henceforth assume $\beta\subseteq\alpha$ and write $E_0:=\alpha\backslash\beta$.
\medskip
\noindent Given $\Theta$, edges are independent, so
\[
\E_{\QQ}\Big[\phi_\alpha(Y)\prod_{e\in\beta}g_e(Y,\Theta)\ \Big|\ \Theta\Big]
=
\prod_{e\in E_0}\E_{\QQ}[(Y_e-p_0)\mid\Theta]\cdot
\prod_{e\in \beta}\E_{\QQ}[(Y_e-p_0)g_e(Y,\Theta)\mid\Theta].
\]
Since $\E_{\QQ}[Y_e\mid\Theta]=p_0+\ell\delta\, A_e$, we get $\E_{\QQ}[(Y_e-p_0)\mid\Theta]=\ell\delta\, A_e$. Hence,
\[
\prod_{e\in E_0}\E_{\QQ}[(Y_e-p_0)\mid\Theta]
=(\ell\delta)^{|\alpha|-|\beta|}\,\mathds 1\{A_e=1,\ \forall e\in\alpha\setminus\beta\}.
\]
For $e\in\beta$, if $A_e=0$ then $Y_e\mid\Theta\sim\mathrm{Ber}(p_0)$ and
$g_e=(Y_e-p_0)/\sqrt{p_0(1-p_0)}$, giving
$\E_{\QQ}[(Y_e-p_0)g_e\mid\Theta]=\sqrt{p_0(1-p_0)}$; if $A_e=1$ then
$Y_e\mid\Theta\sim\mathrm{Ber}(q_\ell)$ and $g_e=(Y_e-q_\ell)/\sqrt{q_\ell(1-q_\ell)}$, giving
\[
\E_\QQ[(Y_e-p_0)g_e\mid\Theta]
= \frac{\E_\QQ[(Y_e - q_\ell + \ell\delta)(Y_e - q_\ell)\mid\Theta]}{\sqrt{q_\ell(1-q_\ell)}}
= \frac{q_\ell(1-q_\ell)}{\sqrt{q_\ell(1-q_\ell)}}
= \sqrt{q_\ell(1-q_\ell)}.
\]
Writing $s(\beta;\Theta):=\sum_{e\in\beta}A_e$, yields
\[
\prod_{e\in\beta}\E_{\QQ}[(Y_e-p_0)g_e(Y,\Theta)\mid\Theta]
=
\big(p_0(1-p_0)\big)^{|\beta|/2}
\left(\frac{q_\ell(1-q_\ell)}{p_0(1-p_0)}\right)^{s(\beta;\Theta)/2}.
\]
Averaging over $\Theta$ and including the vertex factor $\prod_{i\in\gamma}h_i(\Theta)$ gives
\begin{align*}
M_{\beta\gamma,\alpha}
&=
\mathds 1\{\beta\subseteq\alpha\} 
(\ell\delta)^{|\alpha|-|\beta|} 
\big(p_0(1-p_0)\big)^{|\beta|/2} \\
&\quad\cdot
\E_{\QQ_\Theta} \left[
\left(\frac{q_\ell(1-q_\ell)}{p_0(1-p_0)}\right)^{s(\beta;\Theta)/2}
\prod_{i=1}^n
\left(\frac{\mathds 1\{\Theta_i\neq 0\}-\rho}{\sqrt{\rho(1-\rho)}}\right)^{\gamma_i}
\mathds 1\{A_e=1\ \forall e\in\alpha\setminus\beta\}\right],
\end{align*}
where $s(\beta;\Theta):=\sum_{e\in\beta} A_e$ and $A_e:=\mathds 1\{\Theta_i=\Theta_j\in[\ell]\}\in\{0,1\}$.
\end{proof}

\subsubsection*{Excluding bad terms}
In the PDS setting the graph indices are simple graphs, rather than multigraphs. As in the PSM case, the good graphs in the sense of Definition~\ref{def:good_graphs} are exactly those whose connected components contain cycles. This follows directly from the expressions for $c_\alpha$ and $d_\alpha$ in Lemma~\ref{lem:PDS-cMd}.

\subsubsection*{Constructing $u$}
For PDS, the entries of $M_{\beta\gamma,\alpha}$ include an additional conditional-variance factor coming from the orthonormalization. Lemma~\ref{lem:pds-cancellation} shows that this factor is exactly compensated in the certificate constraint.
\begin{proposition}\label{prop:u-PDS}
    Define $u_{\varnothing\varnothing}=1$. For every nonempty $\alpha\in\goodI$ with connected components
$\alpha_1,\ldots,\alpha_C$, and every $\gamma\subseteq V(\alpha)$, let
    \begin{equation*}
        u_{\alpha\gamma}=\left(-\sqrt{\frac{\rho}{1-\rho}}\right)^{|\gamma|} \frac{\prod_{i=1}^C (c_{\alpha_i}-d_{\alpha_i})}{(p_0(1-p_0))^{|\alpha|/2}}, \qquad \forall \alpha\gamma\in \goodJ.
    \end{equation*}
    Set $u_{\alpha\gamma}=0$ for $\alpha\gamma\notin\goodJ$. Then $u$ is supported on $\goodJ$, is component-consistent, and satisfies
\[
\sum_{\beta\gamma\in\goodJ}
u_{\beta\gamma}M_{\beta\gamma,\alpha}
=
c_\alpha \qquad \text{for every connected} \ \alpha \in \goodI.
\]
\end{proposition}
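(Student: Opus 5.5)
Mirror the proof of Proposition~\ref{prop:u-PSM}, using a PDS analogue of Lemma~\ref{lem:psm-cancellation}. Support and component consistency are immediate from the definition. The factor $(-\sqrt{\rho/(1-\rho)})^{|\gamma|}$ is multiplicative over vertices. The factor $(p_0(1-p_0))^{-|\alpha|/2}$ is multiplicative over edges. The product $\prod_i(c_{\alpha_i}-d_{\alpha_i})$ is multiplicative by construction. Also, $u_{\alpha\gamma}=0$ off $\goodJ$ by fiat. The component consistency of $M$ and $c$ needed by Lemma~\ref{lem:reduction_good} follows from Lemma~\ref{lem:PDS-cMd}: the $\Theta_i$ are independent and all factors depend only on vertices and edges inside each component of $\alpha$.

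The core is a cancellation lemma. For connected nonempty $\alpha\in\goodI$ and $\beta\subseteq\alpha$, define
\[
S_\beta:=\sum_{\gamma\subseteq V(\beta)}\Big(-\sqrt{\tfrac{\rho}{1-\rho}}\Big)^{|\gamma|}M_{\beta\gamma,\alpha}.
\]
I would show that $S_\varnothing=d_\alpha$, that $S_\alpha=(p_0(1-p_0))^{|\alpha|/2}$, and that $S_\beta=0$ for $\varnothing\neq\beta\subsetneq\alpha$. Given these three facts, the constraint for connected good $\alpha$ reads
\[
d_\alpha+(c_\alpha-d_\alpha)(p_0(1-p_0))^{-|\alpha|/2}(p_0(1-p_0))^{|\alpha|/2}=c_\alpha.
\]
The $\beta=\varnothing$ case is just $\psi_{\varnothing\varnothing}=1$.

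The case $\beta=\alpha$ uses the vertex-sign expansion. First expand $\prod_{i\in\gamma}h_i$ and sum over $\gamma$ with the weights $(-\sqrt{\rho/(1-\rho)})^{|\gamma|}$. Since $1-\sqrt{\rho/(1-\rho)}\,h_i=(1-\mathds 1[\Theta_i\neq0])/(1-\rho)$, the $\gamma$-sum collapses to
\[
\prod_{i\in V(\beta)}\frac{\mathds 1\{\Theta_i=0\}}{1-\rho}.
\]
For $\beta=\alpha$ we have $\alpha\setminus\beta=\varnothing$. On the event that all vertices of $\alpha$ have label $0$, $s(\alpha;\Theta)=0$ because $A_e=0$ for every edge. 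The expectation therefore equals $(p_0(1-p_0))^{|\alpha|/2}\,\Pr(\Theta_i=0\ \forall i\in V(\alpha))/(1-\rho)^{|V(\alpha)|}=(p_0(1-p_0))^{|\alpha|/2}$. This is exactly where the conditional-variance factor $(q_\ell(1-q_\ell)/p_0(1-p_0))^{s/2}$ is neutralized.

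For $\varnothing\neq\beta\subsetneq\alpha$, the same collapse forces $\Theta_i=0$ on all of $V(\beta)$. By connectivity of $\alpha$, some edge $e\in\alpha\setminus\beta$ has an endpoint in $V(\beta)$, as argued in Lemma~\ref{lem:psm-cancellation}. The indicator $\mathds 1\{A_e=1\}$ requires that endpoint to carry a nonzero label, which is a contradiction, so $S_\beta=0$.

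The main obstacle I anticipate is rigor in exchanging the $\gamma$-sum with $\E_{\QQ_\Theta}$ when the variance-ratio factor depends on $\Theta$. One must also check that the sum over $\gamma\subseteq V(\beta)$ alone (rather than over all of $V(\alpha)$) suffices. The collapse identity handles both points, since it is a pointwise identity in $\Theta$. Vertices outside $V(\beta)$ carry no $h_i$ factor and are unaffected.
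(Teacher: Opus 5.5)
Your proposal is correct and follows the paper's proof essentially step for step. The paper proves the same three-case cancellation lemma (Lemma~\ref{lem:pds-cancellation}) with exactly your values $d_\alpha$, $(p_0(1-p_0))^{|\alpha|/2}$ and $0$, using the same pointwise collapse $\sum_{\gamma\subseteq V(\beta)}\prod_{i\in\gamma}\bigl(-\sqrt{\rho/(1-\rho)}\,h_i\bigr)=\prod_{i\in V(\beta)}\mathds 1\{\Theta_i=0\}/(1-\rho)$, the vanishing of $s(\alpha;\Theta)$ on the all-zero-label event, and the connectivity argument for proper subgraphs; your explicit check that $u$ is component-consistent is a small detail the paper leaves implicit.
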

We use the following lemma to prove Proposition~\ref{prop:u-PDS}.
\begin{lemma}\label{lem:pds-cancellation}
    Let $\alpha\in\widehat{\mathcal I}$ be connected and nonempty, and let $\beta\subseteq\alpha$ with $\beta\in\widehat{\mathcal I}$. Then,
    \[\sum_{\gamma \subseteq V(\beta)}\left(-\sqrt{\frac{\rho}{1-\rho}}\right)^{|\gamma|}M_{\beta\gamma,\alpha}=\begin{cases}
        d_\alpha, & \beta=\varnothing,\\
        \left(p_0(1-p_0)\right)^{|\alpha|/2}, & \beta=\alpha,\\
        0, & \varnothing\neq\beta\lneq\alpha.
    \end{cases}\]
\end{lemma}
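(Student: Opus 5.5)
I will mirror the proof of Lemma~\ref{lem:psm-cancellation}, using the explicit formula for $M_{\beta\gamma,\alpha}$ in Lemma~\ref{lem:PDS-cMd}. The three cases are handled separately. The only new feature relative to PSM is that the factor $\bigl(q_\ell(1-q_\ell)/(p_0(1-p_0))\bigr)^{s(\beta;\Theta)/2}$ sits inside the $\Theta$-expectation, so it cannot be pulled out as a constant. The key observation is that the alternating sum over $\gamma$ collapses to a product of vertex indicators, which either kills the term or forces the conditioning.

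\emph{Case $\beta=\varnothing$.} Here $V(\beta)=\varnothing$, so only $\gamma=\varnothing$ contributes. Since $\psi_{\varnothing\varnothing}\equiv1$, this gives $M_{\varnothing\varnothing,\alpha}=\E_\QQ[\phi_\alpha(Y)]=d_\alpha$.

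\emph{Case $\beta=\alpha$.} I will go back to the conditional computation rather than the final formula. Given $\Theta$, each edge $e\in\alpha$ contributes $\E_\QQ[(Y_e-p_0)g_e\mid\Theta]$. This equals $\sqrt{p_0(1-p_0)}$ if $A_e=0$ and $\sqrt{q_\ell(1-q_\ell)}$ if $A_e=1$, so the factor genuinely depends on $\Theta$. The sum over $\gamma\subseteq V(\alpha)$ with weights $(-\sqrt{\rho/(1-\rho)})^{|\gamma|}$ factorizes vertexwise:
\[
\prod_{i\in V(\alpha)}\bigl(1-\sqrt{\rho/(1-\rho)}\,h_i(\Theta)\bigr)=\prod_{i\in V(\alpha)}\frac{\mathds 1\{\Theta_i=0\}}{1-\rho},
\]
since $1-\frac{\mathds 1[\Theta_i\ne0]-\rho}{1-\rho}=\frac{\mathds 1[\Theta_i=0]}{1-\rho}$. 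The same identity underlies the PSM cancellation. On the event that all vertices of $\alpha$ have label $0$, we have $A_e=0$ for every $e\in\alpha$, so $s(\alpha;\Theta)=0$ and the conditional product is exactly $(p_0(1-p_0))^{|\alpha|/2}$. Taking the expectation of $\prod_i \mathds 1\{\Theta_i=0\}/(1-\rho)$ gives $1$, so the total is $(p_0(1-p_0))^{|\alpha|/2}$. This is the claimed value, and it explains the normalization in $u$.

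\emph{Case $\varnothing\ne\beta\subsetneq\alpha$.} I apply the same vertexwise collapse over $\gamma\subseteq V(\beta)$. The sum becomes
\[
(\ell\delta)^{|\alpha|-|\beta|}(p_0(1-p_0))^{|\beta|/2}\,\E_{\QQ_\Theta}\Bigl[(\cdots)^{s(\beta;\Theta)/2}\,\mathds 1\{A_e=1\ \forall e\in\alpha\setminus\beta\}\prod_{i\in V(\beta)}\frac{\mathds 1\{\Theta_i=0\}}{1-\rho}\Bigr].
\]
As in Lemma~\ref{lem:psm-cancellation}, connectivity of $\alpha$ guarantees a vertex $i\in V(\alpha\setminus\beta)\cap V(\beta)$. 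For that vertex, $A_e=1$ for an incident edge $e\in\alpha\setminus\beta$ requires $\Theta_i\in[\ell]$, while the product requires $\Theta_i=0$. These are incompatible, so the integrand vanishes identically and the sum is $0$.

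The main obstacle is the $\Theta$-dependent variance factor $s(\beta;\Theta)$. Unlike in PSM, $M_{\beta\gamma,\alpha}$ is not $\gamma$-independent up to a scalar, so the PSM argument (``factor out $M_{\beta\varnothing,\alpha}$ and sum $(-1)^{|\gamma|}$'') fails verbatim. Performing the $\gamma$-sum inside the expectation before evaluating it resolves this. Once this is done, in both nontrivial cases the indicator $\mathds 1\{\Theta_i=0\}$ on $V(\beta)$ either forces $s=0$ or annihilates the term.
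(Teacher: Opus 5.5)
Your proposal is correct and matches the paper's proof essentially step for step. In the two nontrivial cases the paper also does the $\gamma$-sum inside the $\Theta$-expectation, collapsing it to $(1-\rho)^{-|V(\beta)|}\mathds 1\{\Theta_i=0\ \forall i\in V(\beta)\}$; this forces $s(\alpha;\Theta)=0$ when $\beta=\alpha$, and it contradicts $\mathds 1\{A_e=1\ \forall e\in\alpha\setminus\beta\}$ at a shared vertex when $\varnothing\neq\beta\subsetneq\alpha$. The only difference is cosmetic: for $\beta=\varnothing$ you read off $M_{\varnothing\varnothing,\alpha}=d_\alpha$ from $\psi_{\varnothing\varnothing}\equiv1$, whereas the paper evaluates the explicit formula of Lemma~\ref{lem:PDS-cMd} using connectivity of $\alpha$, and both are valid.
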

\begin{proof}
    Recall the expression for $M_{\beta\gamma,\alpha}$, where $\beta\subseteq\alpha$, from Lemma~\ref{lem:PDS-cMd}. First suppose $\beta=\varnothing$. Then $V(\beta)=\varnothing$, so the only possible choice is $\gamma=\varnothing$. Hence
\[
M_{\varnothing\varnothing,\alpha}=(\ell\delta)^{|\alpha|}
\E_{\QQ_\Theta} \left[\mathds 1\{A_e=1,\ \forall e\in\alpha\}\right].
\]
Since $\alpha$ is connected, the event $\{A_e=1\ \forall e\in\alpha\}$ means that all vertices in $V(\alpha)$ have the same nonzero label in $[\ell]$. Therefore $
\E_{\QQ_\Theta} \left[\mathds 1\{A_e=1\ \forall e\in\alpha\}\right]=
\ell\left(\frac{\rho}{\ell}\right)^{|V(\alpha)|}.$
Thus,
\[
M_{\varnothing\varnothing,\alpha}
=
\ell^{1+|\alpha|-|V(\alpha)|}\delta^{|\alpha|}\rho^{|V(\alpha)|}=d_\alpha,
\]
where the last equality uses the fact that $\alpha$ is connected. Next, suppose $\beta=\alpha$. Then $\alpha\setminus\beta=\varnothing$, so the edge indicator is absent. Summing over $\gamma\subseteq V(\alpha)$ gives
\begin{align}
\notag
&\sum_{\gamma\subseteq V(\alpha)}
\left(-\sqrt{\frac{\rho}{1-\rho}}\right)^{|\gamma|}
M_{\alpha\gamma,\alpha} \\
\notag
&=
\left(p_0(1-p_0)\right)^{|\alpha|/2}
\E_{\QQ_\Theta} \left[
\left(\frac{q_\ell(1-q_\ell)}{p_0(1-p_0)}\right)^{s(\alpha;\Theta)/2}
\sum_{\gamma\subseteq V(\alpha)}
\prod_{i\in V(\alpha)}
\left(-\frac{\mathds 1[\Theta_i\neq0]-\rho}{1-\rho}\right)^{\gamma_i}
\right].
\end{align}
Using
\begin{align*}
\sum_{\gamma\subseteq V(\alpha)}
\prod_{i\in V(\alpha)}
\left(-\frac{\mathds 1[\Theta_i\neq0]-\rho}{1-\rho}\right)^{\gamma_i}&= \prod_{i\in V(\alpha)} \left(1-\frac{\mathds 1[\Theta_i\neq0]-\rho}{1-\rho}\right)\\
&=(1-\rho)^{-|V(\alpha)|} \mathds 1\{\Theta_i=0\ \forall i\in V(\alpha)\},
\end{align*}
and noting that \(s(\alpha;\Theta)=0\) on this event, we now have that
\[ \sum_{\gamma\subseteq V(\alpha)}
\left(-\sqrt{\frac{\rho}{1-\rho}}\right)^{|\gamma|}
M_{\alpha\gamma,\alpha} = \left(p_0(1-p_0)\right)^{|\alpha|/2}
(1-\rho)^{-|V(\alpha)|}
\mathrm{Pr}\{\Theta_i=0\ \forall i\in V(\alpha)\}.
\]
Since $\mathrm{Pr}\{\Theta_i=0,\ \forall i\in V(\alpha)\}=(1-\rho)^{|V(\alpha)|},$
we obtain
\[\sum_{\gamma\subseteq V(\alpha)}
\left(-\sqrt{\frac{\rho}{1-\rho}}\right)^{|\gamma|}M_{\alpha\gamma,\alpha}=\left(p_0(1-p_0)\right)^{|\alpha|/2}.
\]
Finally, suppose $\varnothing \neq \beta \subsetneq \alpha$. As above, summing over $\gamma \subseteq V(\beta)$ gives 
\begin{align*}
    &\sum_{\gamma \subseteq V(\beta)}\left(-\sqrt{\frac{\rho}{1-\rho}}\right)^{|\gamma|}M_{\beta\gamma,\alpha}\\
    &=(\ell\delta)^{|\alpha|-|\beta|}
\bigl(p_0(1-p_0)\bigr)^{|\beta|/2}
(1-\rho)^{-|V(\beta)|} \\
&\quad\times\E_{\QQ_\Theta} \left[
\left(\frac{q_\ell(1-q_\ell)}{p_0(1-p_0)}\right)^{s(\beta;\Theta)/2}
\mathds 1\{A_e=1\ \forall e\in\alpha\setminus\beta\}
\mathds 1\{\Theta_i=0\ \forall i\in V(\beta)\}\right].
\end{align*}
Since $\alpha$ is connected, $\beta\neq\varnothing$, and  $\beta\subsetneq\alpha$, some edge of $\alpha\setminus\beta$ is incident to a vertex in \(V(\beta)\). The indicator $\mathds 1\{A_e=1\ \forall e\in\alpha\setminus\beta\}$ requires that vertex
to have a nonzero label, while the last indicator requires every vertex in $V(\beta)$ to have label $0$. These requirements are incompatible, so the expectation is zero. This proves the third case.
\end{proof}
\begin{proof}[Proof of Proposition~\ref{prop:u-PDS}]
Fix a connected nonempty $\alpha\in\goodI$. Since
$M_{\beta\gamma,\alpha}=0$ unless $\beta\subseteq\alpha$, it suffices to sum over $\beta\subseteq\alpha$. Split the sum into the cases $\beta=\varnothing, \beta=\alpha$, and $\varnothing\neq\beta\subsetneq\alpha$, and recall their contributions from Lemma~\ref{lem:pds-cancellation}. Therefore, using the definition of $u$ and the fact that $\alpha$ is connected,
\begin{align*}
\sum_{\beta\gamma\in\goodJ}
u_{\beta\gamma}M_{\beta\gamma,\alpha}
&=
u_{\varnothing\varnothing}d_\alpha
+
\frac{c_\alpha-d_\alpha}{(p_0(1-p_0))^{|\alpha|/2}}\sum_{\gamma\subseteq V(\alpha)}
\left(-\sqrt{\frac{\rho}{1-\rho}}\right)^{|\gamma|}
M_{\alpha\gamma,\alpha}
+
0 \\
&=
d_\alpha
+
\frac{c_\alpha-d_\alpha}{(p_0(1-p_0))^{|\alpha|/2}}
(p_0(1-p_0))^{|\alpha|/2}=
c_\alpha.
\end{align*}
This proves the claim for connected $\alpha\in\goodI$.   
\end{proof}
\subsubsection*{Putting it all together}
Combining Proposition~\ref{prop:u-PDS} and Lemma~\ref{lem:pds-cancellation}, we are ready to prove part (i) of Theorem~\ref{thm:strong_PDS}.
\begin{proof}[Proof of Theorem~\ref{thm:strong_PDS} \textnormal{(i)}]
Fix a nonempty $\alpha\in\goodI$, and let
$\alpha_1,\dots,\alpha_C$ be its connected components.
For each component $\alpha_i$,
\[
c_{\alpha_i}-d_{\alpha_i}=\left(\ell'^{1+|\alpha_i|-|V(\alpha_i)|}-
\ell^{1+|\alpha_i|-|V(\alpha_i)|}\right)\delta^{|\alpha_i|}\rho^{|V(\alpha_i)|}.
\]
Hence, with $L:=2\max\{\ell,\ell'\}$ and $\lambda$ as in \eqref{eq:pds-lambda},
\[\frac{(c_{\alpha_i}-d_{\alpha_i})^2}{(p_0(1-p_0))^{|\alpha_i|}}\le L^{2(1+|\alpha_i|-|V(\alpha_i)|)} \lambda^{2|\alpha_i|}\rho^{2|V(\alpha_i)|}.\]
Since $u$ is defined multiplicatively over connected components, summing over $\gamma\subseteq V(\alpha)$ gives
\[\|u\|^2\le 1+
\sum_{\alpha\in\widehat{\mathcal I}:\alpha\neq\varnothing}
L^{2(|\mathcal{C}(\alpha)|+|\alpha|-|V(\alpha)|)}\lambda^{2|\alpha|}\tilde\rho^{\,2|V(\alpha)|},
\qquad
\tilde\rho=\frac{\rho}{\sqrt{1-\rho}}.
\]
This is upper-bounded by the norm bound from the PSM lower bound. Here the PDS sum is over simple graphs, so the multigraph enumeration used in the PSM proof gives a valid upper bound. Hence the same application of
\Cref{lemma:nr-multigraphs} gives $\|u\|^2=O(1)$ provided
\[
\lambda\le (1-\varepsilon/2)(\tilde\rho\sqrt{en})^{-1},
\qquad
D\le \lambda^{-2}/C_0.
\]
Since $\tilde\rho=\rho(1+o(1))$ under $\rho=o(1)$, the theorem assumption
$\lambda\le (1-\varepsilon)(\rho\sqrt{en})^{-1}$ implies the first displayed
condition for all sufficiently large $n$. Thus
$\Adv_{\le D}^2\le \|u\|^2=O(1)$, after increasing
$C_0$ if necessary. Hence $\Adv_{\le D}=O(1)$.
\end{proof}

\subsubsection{Upper Bound}\label{sec:PDS:upper-bound-strong}
The strong-testing upper bound uses the BUG statistic built from centered and
normalized Bernoulli edge variables. Define the set of BUGs $\mathcal U_k$ as in Section~\ref{sec:upperbound_PSM}, and denote the BUG polynomial for the planted dense subgraph as follows:
\begin{equation}\label{def:BUG_f_PDS}
    f(Y)=\sum_{\alpha \in \mathcal U_k}\prod_{(i,j)\in \alpha}\frac{Y_{ij}-p_0}{\sqrt{p_0(1-p_0)}}.
\end{equation}

We now compute the moment quantities associated with this $f$. First, note that,
\begin{equation}\label{eq.ExpForPDS}
    \E_\PP[f(Y)]=|\mathcal U_k|\ell' \rho^D \left(\frac{p_1-p_0}{\sqrt{p_0(1-p_0)}}\right)^D.
\end{equation}
Next, we bound the variance, beginning with the second moment,
\begin{align}\label{eq.defn_Palphabeta}
\E_\QQ[f(Y)^2]&=\sum_{\alpha,\beta\in\mathcal U_k} \E_\QQ\left[\left(\prod_{(i,j)\in \alpha}\frac{Y_{ij}-p_0}{\sqrt{p_0(1-p_0)}}\right)\left(\prod_{(i,j)\in \beta}\frac{Y_{ij}-p_0}{\sqrt{p_0(1-p_0)}}\right)\right]=:\sum_{\alpha,\beta \in \mathcal{U}_k} P_{\alpha\beta}.
\end{align}
For a pair $(\alpha,\beta) \in \mathcal{U}_k$, recall the definitions of $m_\triangle, m_\cap$ from just before Lemma~\ref{lem:second-moment-not-equal}.
Define
\begin{equation}\label{eq.defn_eta_lambda_PDS}
    \tilde\eta:=\frac{\ell\delta}{p_0(1-p_0)} \quad \mathrm{and}\quad \lambda:=\frac{\delta}{\sqrt{p_0(1-p_0)}}.
\end{equation}

\begin{lemma}\label{lem:pds-second-moment}Let $\delta\geq 0$ and let $\tilde\eta, \lambda$ be as in \eqref{eq.defn_eta_lambda_PDS}.
For $\alpha,\beta \in \mathcal{U}_k$, we have the following bounds on $P_{\alpha\beta}$ (defined in~\eqref{eq.defn_Palphabeta}). 
\begin{itemize}
    \item If $\alpha =\beta$ then $P_{\alpha\beta}\leq(\tilde\eta\rho+1)^{|\alpha|}$.
    \item If $\alpha \neq \beta$ and $\alpha \cap \beta$  contains a cycle, then \begin{equation*}
        P_{\alpha\beta}\leq \ell^{m_\triangle}(\ell\lambda)^{|\alpha \triangle \beta|}
 \left(\frac{\rho}{\ell}\right)^{|V(\alpha \triangle \beta)|}
 (\tilde\eta + 1)^{b-m_\cap+1}
 (\tilde\eta \rho + 1)^{|\alpha \cap \beta| - (b - m_\cap)-1}.
    \end{equation*}
    \item If $\alpha \neq \beta$ and $\alpha \cap \beta$ is a forest, then
    \begin{equation*}
        P_{\alpha\beta}\leq \ell^{m_\triangle}(\ell\lambda)^{|\alpha \triangle \beta|}
 \left(\frac{\rho}{\ell}\right)^{|V(\alpha \triangle \beta)|}
 (\tilde\eta + 1)^{b-m_\cap}
 (\tilde\eta \rho + 1)^{|\alpha \cap \beta| - (b - m_\cap)}.
    \end{equation*}
\end{itemize}
\end{lemma}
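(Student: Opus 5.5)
The approach is to condition on the labels $\Theta$ and use conditional independence of edges, exactly mirroring the PSM computation behind Lemmas~\ref{lem:second-moment-equal} and~\ref{lem:second-moment-not-equal}. Write $g_e:=(Y_e-p_0)/\sqrt{p_0(1-p_0)}$, so that $P_{\alpha\beta}=\E_{\QQ_\Theta}\prod_{e}\E[g_e^{\alpha_e+\beta_e}\mid\Theta]$. The exponents are $1$ for $e\in\alpha\triangle\beta$ and $2$ for $e\in\alpha\cap\beta$.

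\textbf{Step 1: single-edge moments.} Given $\Theta$,
\[
\E[g_e\mid\Theta]=\lambda\ell A_e .
\]
For the second moment,
\[
\E[g_e^2\mid\Theta]=\frac{\mathrm{Var}(Y_e\mid\Theta)+(\ell\delta A_e)^2}{p_0(1-p_0)} .
\]
If $A_e=0$ this equals $1$. If $A_e=1$ it equals
\[
\frac{q_\ell(1-q_\ell)+\ell^2\delta^2}{p_0(1-p_0)}=1+\frac{\ell\delta(1-2p_0)}{p_0(1-p_0)}+\frac{\ell^2\delta^2-\ell^2\delta^2}{p_0(1-p_0)}\le 1+\tilde\eta ,
\]
using $q_\ell(1-q_\ell)=p_0(1-p_0)+\ell\delta(1-2p_0)-\ell^2\delta^2$ and $\delta\ge0$. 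Hence
\[
\E[g_e^2\mid\Theta]\le 1+\tilde\eta A_e .
\]
This is the PDS analogue of the Gaussian bound $\E[Y_e^2\mid\Theta]=1+\eta A_e$. From here the argument becomes a purely combinatorial bound on $\E_\Theta\big[\prod_{\alpha\triangle\beta}(\ell\lambda A_e)\prod_{\alpha\cap\beta}(1+\tilde\eta A_e)\big]$, with $\tilde\eta$ in place of $\eta$.

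\textbf{Step 2: reuse the PSM label computation.} Since all factors are nonnegative, $P_{\alpha\beta}$ is bounded above by the same $\Theta$-expectation that appears in the proof of Lemma~\ref{lem:second-moment-not-equal}, with $\eta$ replaced by $\tilde\eta$ and signal $\ell\lambda$. I would therefore invoke that argument, sketched as follows.

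The factor $\prod_{e\in\alpha\triangle\beta}A_e$ forces each component of $\alpha\triangle\beta$ to be monochromatic in some nonzero color. This gives $\ell^{m_\triangle}(\rho/\ell)^{|V(\alpha\triangle\beta)|}$, after conditioning on the branch-point labels being absorbed.

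For the core, expand $\prod_{e\in\alpha\cap\beta}(1+\tilde\eta A_e)$ over edge subsets $F\subseteq\alpha\cap\beta$. Each choice of $F$ costs $\tilde\eta^{|F|}$ times the probability that $F$ is monochromatic given the branch labels. Grouping by core components, one can charge a full factor $(1+\tilde\eta)$ to at most $b-m_\cap$ edges. These are the edges that can be made to connect already-labelled branch points, since a forest on the core vertices with branch points pinned has $b-m_\cap$ "free" merges. Every other edge of $F$ requires a fresh nonzero vertex and so costs $\rho$, giving $(1+\tilde\eta\rho)$.

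When the core contains a cycle, one extra edge closes the cycle without a fresh vertex. This yields the shifted exponents $b-m_\cap+1$ and $|\alpha\cap\beta|-(b-m_\cap)-1$.

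The diagonal case $\alpha=\beta$ is the special case where there is no symmetric difference. A BUG has $|V|=|\alpha|$, and a spanning-tree ordering from any vertex gives each edge a new vertex. The cycle edge then costs at most its own factor $(1+\tilde\eta A_e)$ with the cycle vertices already nonzero. One checks this is bounded by
\[
\E\prod_e(1+\tilde\eta A_e)\le(1+\tilde\eta\rho)^{|\alpha|},
\]
exactly as in Lemma~\ref{lem:second-moment-equal}.

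\textbf{Main obstacle.} The delicate step is the core bookkeeping: justifying that at most $b-m_\cap$ (resp.\ $b-m_\cap+1$) core edges avoid a $\rho$ factor. The diagonal unicyclic case, where the cycle edge must still be shown to cost only $(1+\tilde\eta\rho)$ on average, is part of the same difficulty. I would handle it by peeling core edges in a BFS order from the branch points, conditioning on labels revealed so far. A newly revealed vertex is nonzero-and-matching with probability $\rho/\ell\le\rho$, while an edge between two revealed vertices contributes at most $1+\tilde\eta$. Counting the latter edges is then exactly the count already performed in the PSM appendix, so the Gaussian proof transfers verbatim once Step~1 is established.
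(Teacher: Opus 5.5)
Your proposal is correct and follows the paper's proof. The paper likewise conditions on $\Theta$, computes $\E[g_e\mid\Theta]=\ell\lambda A_e$ and $\E[g_e^2\mid\Theta]=1+(1-2p_0)\tilde\eta A_e\le 1+\tilde\eta A_e$, and then notes that the resulting $\Theta$-expectation has exactly the form bounded in the proofs of Lemmas~\ref{lem:second-moment-equal} and~\ref{lem:second-moment-not-equal}, with $\eta$ replaced by $\tilde\eta$. One caution on your side sketch of the case $\alpha=\beta$: spanning-tree peeling leaves the cycle-closing edge charged $1+\tilde\eta$, and BFS from branch points is unavailable there since there are none. The bound $(1+\tilde\eta\rho)^{|\alpha|}$ instead comes from the PSM device of orienting the triangle as a directed cycle, so every vertex has in-degree exactly one, and then bounding $\mathds 1\{\Theta_i=\Theta_j\in[\ell]\}\le\mathds 1\{\Theta_j\in[\ell]\}$.
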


The proof of Lemma~\ref{lem:pds-second-moment} is deferred to Section~\ref{sec:proof:lem:pds-second-moment}.  Note that Lemma~\ref{lem:pds-second-moment} is analogous to Lemmas~\ref{lem:second-moment-equal} and~\ref{lem:second-moment-not-equal} for the PSM.

\begin{proof}[Proof of Theorem~\ref{thm:strong_PDS} \textnormal{(ii)}]
By Lemma~\ref{lem:pds-second-moment}, the pairwise moment bounds for the PDS BUG statistic have the same overlap form as the PSM bounds in Lemmas~\ref{lem:second-moment-equal} and~\ref{lem:second-moment-not-equal}, with $\eta=(\ell\lambda)^2$
replaced by $\tilde\eta=\frac{\ell\delta}{p_0(1-p_0)}.$ After this replacement, the rest of the proof of Proposition~\ref{prop:BUG-moments} only uses the combinatorial overlap data of pairs $\alpha,\beta\in\mathcal U_k$, namely $s,m_\cap,m_\triangle,b,w$, together with the decomposition into the cases $\alpha\cap\beta=\varnothing$, $\alpha=\beta$, cyclic core, and forest core. These quantities are unchanged in the PDS setting, since the same family $\mathcal U_k$ is used. Hence the same enumeration gives
\[
\max\{\operatorname{Var}_{\QQ}(f),\operatorname{Var}_{\PP}(f)\}=o(N),
\qquad
N=(k+1)^{-6}(en\lambda\rho)^{2D},
\]
provided
\[
\lambda \geq (1+\varepsilon)(\rho\sqrt{en})^{-1},\qquad
\rho=\omega(n^{-1}\log^7 n),\qquad
\rho(\tilde\eta+1)=o(\log^{-7}n).
\]
Under $\PP$ the same argument is applied with $\ell$ replaced by $\ell’$, which only changes constants since $\ell,\ell’$ are fixed. Moreover, by~\eqref{eq.ExpForPDS},
\[
|\E_\PP f-\E_\QQ f|=|\ell'-\ell|\,|\mathcal U_k|\rho^D\lambda^D=\Omega(\sqrt N).
\]
Hence $f$ strongly separates $\PP$ and $\QQ$ under these conditions. It remains to show that the last condition follows from the stated assumptions $\rho=o(\log^{-7}n)$ and $p_0=\omega(n^{-1}\log^{14}n)$, building on the thinning argument of~\cite[Proof of Theorem~2.4(b)]{sohn2025sharp}.  Independently replace each edge by an independent $\mathrm{Ber}(p_0)$ variable with probability $s\in[0,1]$. Conditional on $\Theta$, this leaves the baseline probability $p_0$ unchanged and replaces $\delta$ by $\delta^{(s)}=(1-s)\delta$, hence replaces $\lambda$ by $(1-s)\lambda$. The constraint $p_0+\max\{\ell,\ell'\}\delta<1$ is preserved since $\delta^{(s)}\le\delta$. Thus, if the result is proved at the reduced signal $\delta^{(s)}$, the corresponding polynomial can be averaged over the independent resampling randomness to obtain a polynomial in the original observations of the same degree. Its mean gap is preserved, and its variance is at most the variance before averaging by conditional Jensen. Hence it suffices to prove the upper bound after reducing $\delta$ so that $\lambda=(1+\varepsilon)(\rho\sqrt{en})^{-1}.$

Recall $\tilde \eta$ from \eqref{eq.defn_eta_lambda_PDS}, which can be written as $\tilde \eta=\frac{\ell\lambda}{\sqrt{p_0(1-p_0)}}$. If $p_0\le 1/2$, then
\[\rho\tilde\eta \le \frac{\sqrt{2}\ell\rho\lambda}{\sqrt{p_0}}
=\frac{\sqrt{2}\ell(1+\varepsilon)}{\sqrt{enp_0}}=o(\log^{-7}n),
\]
using $p_0=\omega(n^{-1}\log^{14}n)$. If $p_0>1/2$, then
$p_0+\max\{\ell,\ell'\}\delta< 1$ implies $p_0+\ell\delta< 1$, and therefore $\tilde\eta=\frac{\ell\delta}{p_0(1-p_0)}\le \frac1{p_0}\le 2.$ Since $\rho=o(\log^{-7}n)$, we obtain $\rho(\tilde\eta+1)=o(\log^{-7}n)$. The same argument applies under $\PP$ with $\ell$ replaced by $\ell'$, since $\ell$ and $\ell'$ are fixed positive integers. \qedhere
\end{proof}

\subsection{Weak Testing}
Weak testing in PDS differs from the PSM case because the model has no diagonal loop statistics.  The theorem below identifies the weak-testing scale to matching order in the low-degree framework: below
$o((\rho\sqrt n)^{-1})$, the degree-$D$ advantage is $1+o(1)$, while at signal strength $\lambda=\Omega\left((\rho\sqrt n)^{-1}\right)$, the signed triangle statistic weakly separates under the stated assumptions.

\begin{theorem}[Weak testing: PDS]\label{thm:Weak_PDS}
Given parameters $n,\ell,\ell',\rho,p_0,p_1$, with 
$\ell,\ell'$ fixed distinct positive integers, $\rho\in(0,1)$, and
$p_0,p_1\in(0,1)$, define
$\QQ:=\mathbb{P}_{\mathrm{PDS}}(n,\ell,\rho,p_0,p_1)$ and
$\PP:=\mathbb{P}_{\mathrm{PDS}}(n,\ell',\rho,p_0,p_1)$.
Set $\lambda$ as in~\eqref{eq:pds-lambda}, and assume
$p_0+\max\{\ell,\ell'\}(p_1-p_0)<1$. There exists a constant $C_0\equiv C_0(\ell,\ell')>0$ such that the following hold.
\begin{enumerate}
  \item[(i)] \emph{(Lower bound)}.
 If
  $$\lambda = o \Big(({\rho}\sqrt{n})^{-1}\Big), \qquad D\leq \lambda^{-2}/C_0, \qquad \rho=o(1)$$ 
  then $\Adv_{\leq D}(\PP,\QQ)=1+o(1).$\\  
  \item[(ii)] \emph{(Upper bound)}.  If
  $$ \lambda=\Omega\Big(({\rho}\sqrt{n})^{-1}\Big), \qquad n\rho = \Omega(1),  \qquad n(p_1-p_0)\rho = \Omega\left(|1-2p_0|\right),
  $$
  then there exists a degree-$3$ polynomial that weakly separates $\PP$ and $\QQ$.
\end{enumerate}
\end{theorem}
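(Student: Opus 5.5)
For part (i), I would reuse the PDS lower-bound certificate. Proposition~\ref{prop:u-PDS} together with Lemma~\ref{lem:reduction_good} gives
\[
\Adv_{\le D}^2(\PP,\QQ)\le \|u\|^2 \le 1+\sum_{\alpha\in\goodI,\ \alpha\neq\varnothing} L^{2(|\mathcal C(\alpha)|+|\alpha|-|V(\alpha)|)}\lambda^{2|\alpha|}\tilde\rho^{\,2|V(\alpha)|},
\]
which is the same bound obtained in the proof of Theorem~\ref{thm:strong_PDS}(i). The sum runs over simple graphs, so it is dominated by the multigraph sum treated in the proof of Theorem~\ref{thm:weak_PSM}(i). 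I would then repeat that argument verbatim. Lemma~\ref{lemma:nr-multigraphs} and the bound $\sum_k (L\lambda)^{2k}\binom{v^2+k}{k}\le 2+v^2e^{2L^2v/C_0}$ reduce the right-hand side to $1+\sum_{v\ge1}(Kn\tilde\rho^2\lambda^2)^v e^{\kappa\sqrt v}$. Since $\lambda=o((\rho\sqrt n)^{-1})$ and $\tilde\rho=\rho(1+o(1))$, we have $x_n:=Kn\tilde\rho^2\lambda^2=o(1)$. Dominated convergence against $e^{-v}$ then shows the sum is $o(1)$, so $\Adv_{\le D}=1+o(1)$. Nothing new is needed beyond checking that the multigraph enumeration upper-bounds the simple-graph sum, which the strong PDS proof already uses.

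For part (ii), loops are unavailable, so the smallest cyclic simple graph is the triangle. I would use the signed triangle count
\[
f(Y)=\sum_{i<j<k}\bar Y_{ij}\bar Y_{jk}\bar Y_{ik},\qquad \bar Y_{ij}:=Y_{ij}-p_0.
\]
By Lemma~\ref{lem:PDS-cMd} with $|\mathcal C|=1$ and $|\alpha|=|V(\alpha)|=3$, the means are $\E_\PP f=\binom n3\ell'\delta^3\rho^3$ and $\E_\QQ f=\binom n3\ell\delta^3\rho^3$. The mean gap is therefore $\asymp |\ell-\ell'| n^3\delta^3\rho^3$.

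For the variance, I would expand $\E_\QQ[f^2]$ over pairs of triangles $(T,T')$ according to the number of shared edges, which is 0, 1, or 3. For $T=T'$, each factor satisfies $\E[\bar Y_e^2\mid\Theta]=p_0(1-p_0)+O(\delta(1+|1-2p_0|)A_e)$, giving a main term of order $n^3(p_0(1-p_0))^3$ plus planted corrections of order $n^3\rho\,\delta(\cdots)$. Pairs sharing one edge contribute about $n^4$ times $\E[\bar Y_e^2\mid\Theta]$ times $(\ell\delta)^4$ times a probability of order $\rho^3$ or $\rho^4$, depending on whether the shared edge is planted. Edge-disjoint pairs sharing a vertex contribute $n^5\delta^6\rho^5$ after subtracting the vertex-disjoint terms, which cancel against $(\E f)^2$.

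I would then compare each term with $(n^3\delta^3\rho^3)^2$. With $\lambda=\delta/\sqrt{p_0(1-p_0)}$, the diagonal main term is $O(1)$ times the squared gap exactly when $n\rho^2\lambda^2=\Omega(1)$, i.e.\ $\lambda=\Omega((\rho\sqrt n)^{-1})$. The shared-vertex term is of relative order $1/(n\rho)$, which is handled by $n\rho=\Omega(1)$. The one-shared-edge terms involve $\mathrm{Var}(\bar Y_e\mid A_e=1)=q_\ell(1-q_\ell)$ and the cross term $\delta(1-2p_0)$. I expect the hypothesis $n\delta\rho=\Omega(|1-2p_0|)$ is exactly what controls these and the diagonal corrections of relative size $|1-2p_0|/(n\delta\rho)$. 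The main obstacle is this bookkeeping of the one-shared-edge and diagonal planted corrections, since Bernoulli second moments do not factor as cleanly as Gaussian ones. I would handle it by first conditioning on $\Theta$ and then bounding each overlap type separately, reusing the structure of Lemma~\ref{lem:pds-second-moment} for $D=3$. The same computation under $\PP$, with $\ell$ replaced by $\ell'$, completes the proof of weak separation.
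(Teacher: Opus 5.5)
Your proposal is correct and follows the paper's proof essentially step for step. Part (i) reuses the PDS certificate norm bound together with the weak-PSM summation and dominated-convergence argument. Part (ii) uses the same signed triangle count, with the second moment split by overlap type (vertex-disjoint, one shared vertex, one shared edge, identical), and each hypothesis plays the role you identify.

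One small correction to your bookkeeping. In the one-shared-edge case, the four unshared edges force all four vertices into a single community. So the shared edge is automatically planted, and only the $\rho^4$ probability arises, never $\rho^3$. This gives the term $\mu_A\,\ell\,\delta^4\rho^4$ with $\mu_A=p_0(1-p_0)+\ell\delta(1-2p_0)$.
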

\begin{proof}[Proof of Theorem~\ref{thm:Weak_PDS} \textnormal{(i)}]
The PDS lower-bound calculation from Theorem~\ref{thm:strong_PDS}\textnormal{(i)} gives the same norm bound as in the PSM proof after the normalization of $\lambda$ as defined in \eqref{eq:pds-lambda}. The PDS simple-graph sum is upper-bounded by the multigraph enumeration used there. Hence, the proof of Theorem~\ref{thm:weak_PSM}\textnormal{(i)} applies with the same argument giving $\Adv_{\le D}(\PP,\QQ)=1+o(1)$
whenever $\lambda=o((\rho\sqrt n)^{-1})$ and $D\le \lambda^{-2}/C_0$.
\end{proof}
\begin{proof}[Proof of Theorem~\ref{thm:Weak_PDS} \textnormal{(ii)}] The proof proceeds by introducing the statistic $\widehat{R}$, i.e.\ the signed triangle count, as in~\cite{bubeck2016testing,rush2023easier}, defined as follows: 
\[f(Y):= \widehat{R}
    = \sum_{1 \le i < j < k \le n} \widehat{R}_{ijk}\]
where for each pair \(1 \le i < j \le n\), we set
\[
R_{ij} = Y_{ij} - p_0 \quad \text{and} \quad \widehat{R}_{ijk} = R_{ij} R_{jk} R_{ik}.
\]
\noindent It suffices to carry out the variance bound under $\QQ$, since the same
calculation under $\PP$ is obtained by replacing $\ell$ with $\ell'$. In what follows, we omit the boundary indices \(1\) and \(n\) from the notation whenever the intended range is clear. Then, 
\begin{equation}\label{eq:R_signed}
\mathbb{E}_{\mathbb Q}[\widehat{R}] =  \sum_{i < j < k} \mathbb{E}_{\mathbb Q}[ \widehat{R}_{ijk}].
\end{equation}

\noindent For $S\subseteq[n]$, write $\Theta_S=(\Theta_k)_{k\in S}$ for the restriction of the latent labels to $S$; in particular, $\Theta_{S\cup T}$ denotes the labels on the union of two index sets. When $S$ has few elements we use the shorthand $\Theta_{ij}$ for $\Theta_{\{i,j\}}$, and similarly for larger sets. Conditional probabilities or expectations given $\Theta_S$ are understood as functions of the corresponding label realization.

Since the conditional law of $R_{ij}$, equivalently of $Y_{ij}$, given $\Theta$ depends only on the two labels $\Theta_i$ and $\Theta_j$, any conditional expectation involving this single edge depends only on $\Theta_{ij}$. Thus, for any $i<j$,
\begin{equation}\label{eq:Rij}
 \mathbb{E}_{\mathbb Q}[R_{ij} \mid \Theta] = \mathbb{E}_{\mathbb Q}[R_{ij} \mid \Theta_{ij}]= \mathbb{E}_{\mathbb Q}[Y_{ij} \mid \Theta_{ij}] -p_0=\ell (p_1-p_0) \mathds{1}\{\Theta_i = \Theta_j\in [\ell]\}. 
\end{equation}

\noindent To evaluate \eqref{eq:R_signed}, fix a triple $i < j < k$,
\begin{align*}
   \mathbb{E}_{\mathbb Q}[\widehat{R}_{ijk}] &= \sum_{\Theta_{ijk}}\mathbb{E}_{\mathbb Q}[\widehat{R}_{ijk} \mid \Theta_{ijk}]\mathbb P(\Theta_{ijk})\\
   &=\sum_{\Theta_{ijk}}\mathbb{E}_{\mathbb Q}[R_{ij} \mid \Theta_{ij} ]\mathbb{E}_{\mathbb Q}[R_{ik} \mid \Theta_{ik}]\mathbb{E}_{\mathbb Q}[R_{jk} \mid \Theta_{jk}]\mathbb P(\Theta_{ijk})\\
    &=\ell^3\delta^3 \sum_{\Theta_{ijk}} \mathds{1}\{\Theta_i = \Theta_j = \Theta_k \in [\ell]\}\mathbb P(\Theta_{ijk}) = \ell\delta^3\rho^3 \qquad \text{by \eqref{eq:Rij}}.
\end{align*}
Hence,
\begin{equation}\label{eq:EQ-EP}
\mathbb{E}_{\mathbb Q}[\widehat{R}] = \binom{n}{3}\ell\delta^3\rho^3.
\end{equation}

\noindent We now compute the second moment $\mathbb{E}_{\mathbb Q}[\widehat{R}^2]$. 
Decompose $\mathbb{E}_{\mathbb Q}[\hat R_{ijk} \hat R_{abc}]$ according to 
\[
    t := |\{i,j,k\} \cap \{a,b,c\}|.
\]
For each value of $t$, we count the number of contributing ordered pairs of triples \((i,j,k), (a,b,c)\), and evaluate the corresponding expectation.

\paragraph*{Case $t=0$ (no intersection)} 
Number of terms: $\binom{n}{3}\binom{n-3}{3}$.
\begin{align}
    \notag\mathbb{E}_{\mathbb Q}[\widehat{R}_{ijk}\widehat{R}_{abc}] 
    &= \sum_{\Theta_{ijkabc}}\mathbb{E}_{\mathbb Q}[R_{ij} \mid \Theta_{ij}] \cdots\mathbb{E}_{\mathbb Q}[R_{bc} \mid \Theta_{bc}]\mathbb P(\Theta_{ijkabc})\\
    \notag &= \ell^6\delta^6 \sum_{\Theta_{ijkabc}}
    \notag \mathds{1}\{\Theta_i = \Theta_j = \Theta_k \in [\ell]\}
    \notag\mathds{1}\{\Theta_a = \Theta_b = \Theta_c \in [\ell]\}
    \notag \mathbb P(\Theta_{ijkabc})\\
    \notag &= \ell^{2}\delta^6\rho^6.
\end{align}

\paragraph*{Case $t = 1$ (one common vertex)}
Number of terms: $3 \binom{n}{3}\binom{n-3}{2}$.
\begin{align*}
   \mathbb{E}_{\mathbb Q}[\widehat{R}_{ijk}\widehat{R}_{ibc}] 
    &= \sum_{\Theta_{ijkbc}}\mathbb{E}_{\mathbb Q}[R_{ij} \mid \Theta_{ij}]  \cdots
    \mathbb E_\mathbb Q [R_{bc} \mid \Theta_{bc}]\mathbb P(\Theta_{ijkbc})\\
    &= \ell^6\delta^6 \sum_{\Theta_{ijkbc}}
    \mathds{1}\{\Theta_i = \Theta_j = \Theta_k = \Theta_b
    = \Theta_c \in [\ell]\}\mathbb P(\Theta_{ijkbc})= \ell^{2} \delta^6\rho^5.
\end{align*}

\noindent Before considering the case $t=2$, we characterize the conditional second moment of an edge $(i,j)$
based on the configuration of the pair \((\Theta_i, \Theta_j)\). We fix the following complementary events:\\

\noindent \emph{Scenario A.} Event \(\mathcal{A}_{ij} := (\Theta_i=\Theta_j \in [\ell]).\)
\begin{align*}
\mathbb{E}_{\mathbb Q}[R_{ij}^2 \mid \Theta_{ij} \wedge \mathcal{A}_{ij}]
= (1-2p_0)(p_0+\ell\delta)+p_0^2
&= p_0(1-p_0)+\ell\delta(1-2p_0):= \mu_{_A}.
\end{align*}

\noindent \emph{Scenario B.} Event \(\mathcal{B}_{ij} := \mathcal{A}_{ij}^c\).
\begin{align*}
\mathbb{E}_{\mathbb Q}[R_{ij}^2 \mid \Theta_{ij} \wedge \mathcal{B}_{ij}]
= p_0-2p_0^2+p_0^2&= p_0(1-p_0):= \mu_{_B}.
\end{align*}

\paragraph*{Case $t = 2$ (one common edge)}
Number of terms: $3\binom{n}{3}\binom{n-3}{1}$.
\begin{align*}
   \mathbb{E}_{\mathbb Q}[\widehat{R}_{ijk}\widehat{R}_{ijc}] &= \sum_{\Theta_{ijkc}} \mathbb
    E_\mathbb Q[R_{ij}^2 \mid \Theta_{ij}]
    E_\mathbb Q[R_{ik} \mid \Theta_{ik}] \cdots
    \mathbb E_\mathbb Q [R_{jc} \mid \Theta_{jc}] \mathbb P(\Theta_{ijkc})\\
    &= \sum_{\Theta_{ijkc}}\mathbb{E}_{\mathbb Q}[R_{ij}^2 \mid \Theta_{ij}] \ell^4\delta^4
    \mathds{1}\{\Theta_i = \Theta_j = \Theta_k = \Theta_c \in [\ell]\}\mathbb P(\Theta_{ijkc})
    \intertext{here, $ \mathds{1}\{\Theta_i = \Theta_j = \Theta_k = \Theta_c \in [\ell]\}$
    forces Scenario A for $(i,j)$, hence,}
    &= \sum_{\Theta_{ijkc}} \mu_{_A} \ell^4\delta^4
    \mathds{1}\{\Theta_i = \Theta_j = \Theta_k = \Theta_c \in [\ell]\}\mathbb P(\Theta_{ijkc})= \mu_{_A}\ell \delta^4 \rho^4.
\end{align*}
\paragraph*{Case $t = 3$ (identical triangles)}
 Number of terms: $\binom{n}{3}$.
\begin{align}
   \mathbb{E}_{\mathbb Q}[R_{ij}^2 R_{ik}^2 R_{jk}^2] 
    &= \sum_{\Theta_{ijk}}\mathbb{E}_{\mathbb Q}[R_{ij}^2 \mid \Theta_{ijk}]
   \mathbb{E}_{\mathbb Q}[R_{ik}^2 \mid \Theta]
   \mathbb{E}_{\mathbb Q}[R_{jk}^2 \mid \Theta_{ijk}]\mathbb P(\Theta_{ijk}). \label{eq:RRR}
\end{align}
We now evaluate the right-hand side by partitioning into three subcases.\\

\noindent \emph{Subcase (i):} Event $\Gamma_1 := (\Theta_i = \Theta_j = \Theta_k \in [\ell])$. Then, \eqref{eq:RRR} becomes
\begin{align*}
    &\sum_{\Theta_{ijk}}\mathbb{E}_{\mathbb Q}[R_{ij}^2 \mid \Theta_{ijk} \wedge \Gamma_1]
   \mathbb{E}_{\mathbb Q}[R_{ik}^2 \mid \Theta_{ijk} \wedge \Gamma_1]
   \mathbb{E}_{\mathbb Q}[R_{jk}^2 \mid \Theta_{ijk} \wedge \Gamma_1]\mathbb P(\Theta_{ijk} \wedge \Gamma_1)= \mu_{_A}^3 \frac{\rho^3}{\ell^2}.
\end{align*}

\noindent \emph{Subcase (ii):} Event $\Gamma_2$ where \begin{align*}
     \Gamma_2 &:= \Big((\Theta_i = \Theta_j \in [\ell]) \wedge
(\Theta_i \neq \Theta_k)\Big) \vee \Big((\Theta_i = \Theta_k \in [\ell]) \wedge
(\Theta_i \neq \Theta_j)\Big)\\
& \quad \vee \Big((\Theta_j = \Theta_k \in [\ell]) \wedge
(\Theta_i \neq \Theta_j)\Big).
\end{align*} 
Owing to the symmetry of the expression with respect to $i, j, k,$ we consider only one case and the full contribution is then obtained by multiplying by 3. Hence, without loss of generality, assume the event $\Gamma_2^{\prime} := \Big((\Theta_i = \Theta_j \in [\ell]) \wedge
(\Theta_i \neq \Theta_k)\Big)$. In this case, \eqref{eq:RRR} reduces to
\begin{align*}
    3&\sum_{\Theta_{ijk}}\mathbb{E}_{\mathbb Q}[R_{ij}^2 \mid \Theta_{ijk} \wedge \Gamma_2^{\prime}]
   \mathbb{E}_{\mathbb Q}[R_{ik}^2 \mid \Theta_{ijk} \wedge \Gamma_2^{\prime}]
   \mathbb{E}_{\mathbb Q}[R_{jk}^2 \mid \Theta_{ijk} \wedge \Gamma_2^{\prime}]\mathbb P(\Theta_{ijk} \wedge \Gamma_2^{\prime})\\
    &= 3\mu_{_A}\mu_{_B}^2\sum_{\Theta_{ijk}}\mathbb P(\Theta_{ijk} \wedge \Gamma_2^{\prime})= 3\mu_{_A}\mu_{_B}^2 \frac{\rho^2}{\ell}\Big(1-\frac{\rho}{\ell}\Big).
\end{align*}
\noindent \emph{Subcase (iii):} Event $\Gamma_3 := (\Gamma_1 \vee \Gamma_2)^c$.
Here we obtain
\begin{align*}
    &\sum_{\Theta_{ijk}}\mathbb{E}_{\mathbb Q}[R_{ij}^2 \mid \Theta_{ijk} \wedge \Gamma_3]
   \mathbb{E}_{\mathbb Q}[R_{ik}^2 \mid \Theta_{ijk} \wedge \Gamma_3]
   \mathbb{E}_{\mathbb Q}[R_{jk}^2 \mid \Theta_{ijk} \wedge \Gamma_3]\mathbb P(\Theta_{ijk} \wedge \Gamma_3)\\
    &= \mu_{_B}^3 \sum_{\Theta_{ijk}} \mathbb P(\Theta_{ijk} \wedge \Gamma_3)= \mu_{_B}^3  \mathbb P(\Gamma_3)= \mu_{_B}^3\Big(1 - \frac{\rho^3}{\ell^2} 
    - 3 \frac{\rho^2}{\ell}\Big(1-\frac{\rho}{\ell}\Big)\Big).
\end{align*}
\noindent Consequently, combining the three subcases,
\[
   \mathbb{E}_{\mathbb Q}[R_{ij}^2 R_{ik}^2 R_{jk}^2] = \mu_{_A}^3 \frac{\rho^3}{\ell^2} 
    + 3\mu_{_A}\mu_{_B}^2 \frac{\rho^2}{\ell}\Big(1-\frac{\rho}{\ell}\Big)
    + \mu_{_B}^3\Big(1 - \frac{\rho^3}{\ell^2} 
    - 3 \frac{\rho^2}{\ell}\Big(1-\frac{\rho}{\ell}\Big)\Big).
\]
This concludes the $t=3$ case.

\noindent Now, by summing over the four values of $t$,
\begin{align*}
\mathbb{E}_{\mathbb Q}[\widehat{R}^2]
&= \binom{n}{3}\Bigg[\binom{n-3}{3}\ell^{2}\delta^6\rho^6+ 3\binom{n-3}{2}\ell^{2} \delta^6 \rho^5+ 3\binom{n-3}{1} \mu_{_A}\ell\delta^4 \rho^4 \\
    &+ \Bigg(\mu_{_A}^3 \frac{\rho^3}{\ell^2}
    + 3\mu_{_A}\mu_{_B}^2 \frac{\rho^2}{\ell}\Big(1-\frac{\rho}{\ell}\Big)+ \mu_{_B}^3\Big(1 -\frac{\rho^3}{\ell^2} 
    - 3\frac{\rho^2}{\ell}\Big(1-\frac{\rho}{\ell}\Big)\Bigg)\Bigg].
\end{align*}

\noindent By \eqref{eq:EQ-EP}, 
\begin{align*}
{\rm Var}_\mathbb{Q}[\widehat{R}] &=\frac{1}{2} \binom{n}{3}\Big( \ell^2 \delta ^6 \rho ^5 \Big[3 n (-n\rho +n+5 \rho -7)-20 \rho +36\Big]\\ 
&+ 6\ell^2 (n-3) \left(1- 2 p_0\right) \delta ^5 \rho ^4+ 6 \ell (n-3) \left(1-p_0\right) p_0 \delta ^4 \rho ^4+ 2\ell \left(1- 2 p_0\right)^3 \delta ^3 \rho ^3\\
   &+ 6 \left(1-p_0\right) p_0 \left(1-2 p_0\right){}^2 \delta ^2 \rho ^3+ 6 \left(1-p_0\right)^2 p_0^2 \left(1- 2 p_0\right) \delta  \rho ^2+ 2\left(1-p_0\right)^3 p_0^3\Big).
\end{align*}
Moreover, 
\[
(\mathbb{E}_{\mathbb Q}[\widehat{R}] -\mathbb{E}_\mathbb P[\widehat{R}])^2 = 
\binom{n}{3}^2(\ell-\ell^\prime)^{2}\delta^6\rho^6.
\]

\noindent To finish the proof, by the definition of weak separation, it suffices to show that ${\rm Var}_\mathbb{Q}[\widehat{R}] = O\Big((\E_\mathbb Q[\widehat{R}] - \E_\mathbb P[\widehat{R}])^2\Big)$; equivalently, removing lower order terms in the above, and since $\ell$ and $\ell'$ are fixed positive integers, the problem reduces to proving
\begin{eqnarray*}
   &&\delta ^6 \rho ^5 n^2+ n \left(1- 2 p_0\right) \delta ^5 \rho ^4+ n \left(1-p_0\right) p_0 \delta ^4 \rho ^4+ \left(1- 2 p_0\right){}^3 \delta ^3 \rho ^3\\
   &&+ \left(1-p_0\right) p_0 \left(1-2 p_0\right){}^2 \delta ^2 \rho ^3+\left(1-p_0\right){}^2 p_0^2 \left(1- 2 p_0\right) \delta  \rho ^2+\left(1-p_0\right){}^3 p_0^3\\
   &&= O(n^3 \delta ^6 \rho^6).
\end{eqnarray*}

\noindent The seven terms of the sum are all in $O(n^3 \delta ^6 \rho^6)$ under
\[
n\delta^2 \rho^2 = \Omega(p_0(1-p_0)), \quad 
n\rho = \Omega(1), \quad
n\delta \rho = \Omega(|1-2p_0|). 
\]
The first condition is exactly the assumption
$\lambda=\Omega((\rho\sqrt n)^{-1})$, since $\lambda=\delta/\sqrt{p_0(1-p_0)}$.
\end{proof}

\section*{Acknowledgements}
AS, DGE, and FS were partially supported by the Wallenberg AI,
Autonomous Systems and Software Program (WASP), funded by the Knut and Alice
Wallenberg Foundation. DGE was additionally supported by the Secretaría de Ciencia, Humanidades, Tecnología e Innovación (SECIHTI) and the Royal Swedish Academy of Sciences (Kungl. Vetenskapsakademien). Part of this material is based upon work supported by the National Science Foundation under Grant No. DMS-1928930, while FS was in residence at the Simons Laufer Mathematical Sciences Institute (MSRI) during the Spring 2025 semester. ASW was partially supported by a Sloan Research Fellowship and NSF CAREER Award CCF-2338091. 

\newpage
\bibliographystyle{alpha}
\bibliography{ref}

\newpage
\appendix
\section{Proof of Technical Lemmas}
 \subsection{Lower Bounds}
\subsubsection{Proof of Lemma \ref{lemma:nr-multigraphs}}\label{app:pf-counting-lemma}
This lemma counts the number of multigraphs satisfying a specified property, which is then used to bound the advantage. We prove the stronger statement that the number of such components is \[\frac{ n_{(v)}}{C!}  \binom{M+k-1}{k}
\sum_{\substack{v_1+\cdots+v_C=v \\ v_i \ge 1}}
\prod_{i=1}^C
\frac{v_i^{v_i-2}m_i}{v_i!}\]
   where $n_{(v)}=n(n-1)\ldots(n-v+1)$, $m_i=\frac{v_i(v_i+1)}{2}$ and $M:=\sum_{i=1}^C m_i$. To see that this implies the lemma, note that $m_i=v_i(v_i+1)/2 \leq v_i^2$, and that $\sum_i m_i \leq \sum_i v_i^2\leq v^2$.

\begin{proof}
Recall that $\alpha \in \goodI$ implies that there are no tree components. Let the connected components have vertex sizes $v_1,\dots,v_C$ and edge counts $d_1,\dots,d_C$, so that $v_1+\cdots+v_C=v$ and $d_1+\cdots+d_C=d$. Since each component is connected and not a tree, we have $d_i\ge v_i$ for every $i$. Writing $d_i=v_i+k_i$ with $k_i\ge 0$, it follows that $k_1+\cdots+k_C=\sum_{i=1}^C(d_i-v_i)=d-v=:k$.

We first choose the $v$ vertices, giving a factor $\binom{n}{v}$. For fixed $v_1,\dots,v_C$, the number of ways to partition these vertices into $C$ labeled blocks of sizes $v_1,\dots,v_C$ is $\frac{v!}{\prod_{i=1}^C v_i!}$. Since the components are unordered, we divide by $C!$.

Now fix one component on a prescribed labeled vertex set of size $v_i$, with $d_i=v_i+k_i$ edges. Such a connected multigraph contains a spanning tree, and there are $v_i^{\,v_i-2}$ choices for this tree by Cayley's formula.

Once the tree is fixed, the component has $d_i-(v_i-1)=k_i+1$ edges beyond the spanning tree. We first choose one such compulsory non-tree edge, in at most $m_i=\binom{v_i+1}{2}=\frac{v_i(v_i+1)}{2}$ ways. The remaining $k_i$ edges may then be chosen as a multiset from the same $m_i$ possible edge types, giving at most $\binom{m_i+k_i-1}{k_i}$ choices.

Therefore, for fixed $(v_i)$ and $(k_i)$, the number of possibilities is at most

\[
\frac{v!}{C!} 
\prod_{i=1}^C
\left(
\frac{v_i^{\,v_i-2}m_i}{v_i!}\binom{m_i+k_i-1}{k_i}
\right).
\]

Summing over all compositions $v_1+\cdots+v_C=v$ with $v_i\ge 1$, and all $k_1+\cdots+k_C=k$ with $k_i\ge 0$, gives

\[
\binom{n}{v} \frac{v!}{C!} 
\sum_{\substack{v_1+\cdots+v_C=v \\ v_i \ge 1}}
\prod_{i=1}^C
\frac{v_i^{\,v_i-2}m_i}{v_i!}
\sum_{\substack{k_1+\cdots+k_C=k \\ k_i \ge 0}}
\prod_{i=1}^C
\binom{m_i+k_i-1}{k_i}.
\]

Applying Lemma~\ref{lem:combinatorial_bound} to the sum over $k_1,\dots,k_C$, with
\[
M=\sum_{i=1}^C m_i=\sum_{i=1}^C\frac{v_i(v_i+1)}2,
\]
we obtain the upper bound
\[
\binom{n}{v}\frac{v!}{C!}
\sum_{\substack{v_1+\cdots+v_C=v \\ v_i \ge 1}}
\prod_{i=1}^C
\frac{v_i^{\,v_i-2}m_i}{v_i!}
\binom{M+k-1}{k}.
\]
\end{proof}
\begin{remark}
One could instead choose all $k_i+1$ non-tree edges in the $i$-th component at once,
which gives the alternative upper bound with the factor $\binom{M+k+C-1}{k+C}$. We use the slightly different decomposition above, choosing one compulsory non-tree edge
per component first, because it isolates the remaining $k$ excess edges and leads to the
factor $\binom{M+k-1}{k}$,
which streamlines the subsequent summation over $k$.
\end{remark}

\subsubsection{Proof of the combinatorial identity used in Lemma~\ref{lemma:nr-multigraphs}}\label{app:pf-lemma-combinatorial}

This lemma is used in the proof of the lower bound for strong testing in the PSM model.

\begin{lemma}\label{lem:combinatorial_bound}
Let $C\geq 1$ be an integer and let $m_1,\dots,m_C\in\mathbb{N}$ with $m_i\geq 1$ for all $i\in[C]$, and set $M:=\sum_{i=1}^C m_i.$
Then, for every integer $k\geq 0$,
\[
\sum_{\substack{k_1,\dots,k_C\in\mathbb{N}\\ k_1+\cdots+k_C=k}}
\prod_{i=1}^C
\binom{m_i+k_i-1}{k_i}
=
\binom{M+k-1}{k}.
\]
\end{lemma}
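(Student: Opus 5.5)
The plan is to prove the identity with generating functions, that is, as a Vandermonde-type convolution for multiset coefficients. The key observation is that for $m\ge 1$ the coefficient $\binom{m+k-1}{k}$ is the number of multisets of size $k$ drawn from an $m$-element set. Equivalently, it is the coefficient of $x^k$ in the formal power series
\[
(1-x)^{-m}=\sum_{k\ge 0}\binom{m+k-1}{k}x^k ,
\]
which is the negative binomial series. I would first record this expansion, for instance by writing $(1-x)^{-m}=\prod_{j=1}^m\sum_{a_j\ge0}x^{a_j}$ and counting the solutions of $a_1+\cdots+a_m=k$ by stars and bars. This is also where the hypothesis $m_i\ge1$ is used: it keeps the coefficient meaningful, and for $m_i=0$ the convention would break for $k_i=0$.

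Next I would multiply the $C$ series together. This gives
\[
\prod_{i=1}^C(1-x)^{-m_i}=(1-x)^{-M}.
\]
Comparing the coefficients of $x^k$ on both sides, the left-hand side yields exactly the convolution sum over $k_1+\cdots+k_C=k$ of $\prod_i\binom{m_i+k_i-1}{k_i}$, and the right-hand side gives $\binom{M+k-1}{k}$. Since $M\ge C\ge1$, the right-hand coefficient is in the valid range. All manipulations are identities of formal power series, so no convergence issues arise.

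As an alternative, fully combinatorial check, I would note that a size-$k$ multiset from the disjoint union of sets of sizes $m_1,\dots,m_C$ decomposes uniquely into size-$k_i$ multisets from each block. This bijection gives the same identity. I do not expect any real obstacle. The only point needing care is the edge case $k=0$ (both sides equal $1$), together with confirming that the coefficient formula holds for all $k_i\ge0$ when $m_i\ge1$. Finally, in the application to Lemma~\ref{lemma:nr-multigraphs} only the upper bound is needed, together with the monotonicity $\binom{M+k-1}{k}\le\binom{v^2+k}{k}$, which holds because $M\le v^2$.
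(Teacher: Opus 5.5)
Your proof is correct. It rests on the same identity as the paper's proof, packaged differently.

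The paper rewrites each factor by upper negation as $(-1)^{k_i}\binom{-m_i}{k_i}$. It then quotes the generalized Chu--Vandermonde convolution $\sum_{k_1+\cdots+k_C=k}\prod_{i}\binom{-m_i}{k_i}=\binom{-M}{k}$ and negates back. Your coefficient extraction from $\prod_i(1-x)^{-m_i}=(1-x)^{-M}$ is exactly the standard proof of that convolution at negative integer upper arguments. The signs $(-1)^{k_i}$ are absorbed by the substitution $x\mapsto -x$. So you prove directly what the paper cites, and you never handle binomial coefficients with negative upper index. The paper's version is shorter given the cited identity; yours is self-contained.

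Your multiset bijection is a genuinely different, purely combinatorial proof. It needs no series and no sign bookkeeping.

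One minor point about your remark on $m_i=0$. Under the generalized convention implicit in the paper's upper-negation step, $\binom{-1}{0}=1$ and $\binom{k_i-1}{k_i}=0$ for $k_i\ge 1$. So the identity in fact also holds when some $m_i=0$. The hypothesis $m_i\ge 1$ matters only under the combinatorial convention $\binom{n}{k}=0$ for $n<0$.
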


\begin{proof}
We use the upper negation identity
$\binom{n}{k}=(-1)^k\binom{k-n-1}{k}$, for $n\in\mathbb Z,\ k\in\mathbb N_0.$
Thus, for each $i$,
\[
\binom{m_i+k_i-1}{k_i}=(-1)^{k_i}\binom{-m_i}{k_i}.
\]
Hence,
\[\sum_{\substack{k_1,\dots,k_C\in\mathbb{N}\\ k_1+\cdots+k_C=k}}
\prod_{i=1}^C
\binom{m_i+k_i-1}{k_i}=(-1)^k\sum_{\substack{k_1+\ldots+k_C=k\\ k_i \ge 0}}\prod_{i=1}^C \binom{-m_i}{k_i}:=S,\]
since $\sum_{i=1}^C k_i=k$. By the generalized Chu-Vandermonde identity,
\[\sum_{\substack{k_1+\ldots+k_C=k\\ k_i \ge 0}}\prod_{i=1}^C \binom{-m_i}{k_i}=\binom{-M}{k}.\]
Applying upper negation once more, 
$\binom{-M}{k}=(-1)^k\binom{M+k-1}{k},$
and therefore 
\[S=(-1)^{2k}\binom{M+k-1}{k}=\binom{M+k-1}{k}.
\]
\end{proof}

\subsection{Upper bounds}\label{app:pf:technical-lemmas-second-moment-psm}
We provide the proofs of the auxiliary lemmas used in the upper-bound arguments for PSM
and PDS.
\subsubsection*{Proofs of Auxiliary Lemmas for the Planted Submatrix Upper Bound}
\subsubsection{Proof of Lemma~\ref{lem:second-moment-equal}}
 This lemma forms part of the second moment calculations for the BUG polynomial, defined on line~\eqref{def:BUG_f}, for the planted submatrix model.
\begin{proof}
Observe that \begin{equation*}\label{eq.linear_in_Z_trick}
\mathbb{E}_\QQ[Y_{ij}^2] = \mathbb{E}_\QQ[ \big( \ell \lambda\sum_{c=1}^\ell \mathds{1}[\Theta_i = \Theta_j = c] + Z_{ij}\big)^2 ] =
\mathbb{E}_{\QQ_\Theta}[ \ell^2 \lambda^2 \sum_{c=1}^\ell \mathds{1}[\Theta_i = \Theta_j = c] + 1 ],
\end{equation*}
where the second equality followed since at most one of the indicators is non-zero and the terms which are linear in $Z_{ij}$ are zero.
Recall that $\eta:=(\ell\lambda)^2$. Since the $Z_{ij}$'s are independent we get that for any graph $\alpha$,
\begin{equation}
\label{eq:equal-eta-comparison}
\begin{aligned}
\E_\QQ[Y^{2\alpha}]
&= \E_{\QQ_\Theta} \left[\prod_{(i,j) \in \alpha} (\eta \sum_{c=1}^\ell \mathds{1}[\Theta_i = \Theta_j = c] + 1) \right]. 
\end{aligned}
\end{equation}
\noindent 
We now restrict to $\alpha \in \cU_k$.
To bound this expectation, we construct an auxiliary directed graph $\tilde{\alpha}$ from $\alpha$ by directing edges in the triangle to form a directed cycle, and directing the other edges away from the cycle. Note that for any two directed edges $ij$ and $i'j'$ we have $j\neq j'$. (For vertices~$j$ in the directed cycle this is easy to see. For other vertices $j$, i.e. those in the trees hanging from the cycle, if there were two incoming edges $ij$ and $i'j$ to vertex $j$, this would create a cycle in that hanging tree in $\alpha$, a contradiction).
For each $c\in[\ell]$,
$\mathds{1}\{\Theta_i=c\}\mathds{1}\{\Theta_j=c\}
\le
\mathds{1}\{\Theta_j=c\}.$
Summing over $c$ gives
$
\sum_{c=1}^{\ell}\mathds{1}\{\Theta_i=\Theta_j=c\}
\le
\sum_{c=1}^{\ell}\mathds{1}\{\Theta_j=c\}
$. Hence, 
\begin{align}
\E_\QQ[Y^{2\alpha}]
= \notag \E_{\QQ_\Theta} \left[\prod_{ij \in \tilde{\alpha}} (\eta  \sum_{c=1}^\ell \mathds{1}[\Theta_i = \Theta_j = c] + 1) \right] &\leq \notag \E_{\QQ_\Theta} \left[\prod_{ij \in \tilde{\alpha}} (\eta   \sum_{c=1}^\ell \mathds{1}[\Theta_j = c] + 1) \right]\\
\label{eq:upper_bound_common_eq} &= \prod_{ij \in \tilde{\alpha}} \E_{\QQ_\Theta} \left[ \eta \sum_{c=1}^\ell \mathds{1}[\Theta_j = c] + 1 \right]  \\
\notag &= (\eta\rho+1)^{|\alpha|},
\end{align}
where line~\eqref{eq:upper_bound_common_eq} followed since each $\Theta_j$ appears at most once in the line above it and thus the terms in the product are independent. 
\end{proof}
\subsubsection{Proof of Lemma~\ref{lem:second-moment-not-equal}}
 As with the previous lemma, this lemma forms part of the second moment calculations for the BUG polynomial.
\begin{proof}
This proof is similar to the proof of Lemma~\ref{lem:second-moment-equal}. Note that for pair $(\alpha, \beta)$ we have
{\small \begin{align}
    \notag &\E_\QQ[Y^{\alpha+\beta}]\\
    \notag &=\E_\QQ \big[ \prod_{(i,j) \in \alpha \triangle \beta}( Z_{ij}+\ell\lambda \sum_{c=1}^\ell \mathds{1}[\Theta_i=c]\mathds{1}[\Theta_j=c]) \prod_{(i,j)\in \alpha \cap \beta} (Z_{ij}+\ell\lambda \sum_{c=1}^\ell\mathds{1}[\Theta_i=c]\mathds{1}[\Theta_j=c])^2 \big]\\
    \notag &=\E_{\QQ_\Theta} \big[ \prod_{(i,j) \in \alpha \triangle \beta} \ell\lambda \sum_{c=1}^\ell \mathds{1}[\Theta_i=c]\mathds{1}[\Theta_j=c] \prod_{(i,j)\in \alpha \cap \beta} (1+(\ell\lambda)^2 \sum_{c=1}^\ell\mathds{1}[\Theta_i=c]\mathds{1}[\Theta_j=c])   \big].
    \end{align}}
Let $\QQ_\Theta$ denote the distribution of the community assignments $\Theta$ under $\QQ$. Then,
 {\small   \begin{align}
    \E_{\QQ}[Y^{\alpha+\beta}] 
    &=(\ell\lambda)^{|\alpha\triangle\beta|}
    \E_{ \QQ_\Theta} \Big[ 
    \prod_{(i,j) \in \alpha \triangle \beta} \mathds{1}[\Theta_i=\Theta_j \in [\ell]] 
  \prod_{(i,j)\in \alpha \cap \beta}
\left(1+\eta\sum_{c=1}^{\ell}\mathds{1}\{\Theta_i=\Theta_j=c\}\right).\label{eq:upper_bound_common_neq}
\end{align}}
\noindent Now, define the event
\[\mathcal E_0:=\bigcap_{(i,j)\in \alpha\triangle\beta}\{\Theta_i=\Theta_j\in[\ell]\},\]
which gives $\prod_{(i,j) \in \alpha \triangle \beta}\mathds{1}\{\Theta_i=\Theta_j\in[\ell]\}
=\mathds{1}[\mathcal E_0].$
So, we obtain
\begin{align*}
\E_\QQ[Y^{\alpha+\beta}]
&=(\ell\lambda)^{|\alpha\triangle\beta|}
\E_{\QQ_\Theta} \Big[ 
\mathds{1}[\mathcal E_0] 
\prod_{(i,j)\in \alpha \cap \beta} \Big(1+\eta\,\mathds{1}\{\Theta_i=\Theta_j\in[\ell]\}\Big) \Big].
\end{align*}
Finally, for any integrable random variable $X$ and event $ \mathcal E_0$ with $\Pr( \mathcal E_0)>0$,
\[
\E[\mathds 1[\mathcal E_0]\,X]=\Pr(\mathcal E_0)\,\E[X\mid \mathcal E_0].
\]
Apply this with
$
X:=\prod_{(i,j)\in \alpha \cap \beta} \big(1+\eta\,\mathds{1}\{\Theta_i=\Theta_j\in[\ell]\}\big),
$
to conclude
\begin{align}
\E_\QQ[Y^{\alpha+\beta}]
&=(\ell\lambda)^{|\alpha\triangle\beta|} 
\Pr(\mathcal E_0) 
\E_{\QQ_\Theta}\bigg[
\prod_{(i,j)\in \alpha \cap \beta} \Big(1+\eta\,\mathds{1}\{\Theta_i=\Theta_j\in[\ell]\}\Big)
\ \Big|\ \mathcal E_0\bigg]. \label{eq.in_terms_of_exp}
\end{align}
To calculate $\Pr(\mathcal E_0)$, note that the event $\mathcal{E}_0$ holds if and only if each connected compoent of $\alpha\triangle\beta$ is monochromatic with color in $[\ell]$.
\[\mathds{1}[\mathcal E_0] = \prod_{(i,j) \in \alpha \triangle \beta}\mathds{1}\{\Theta_i=\Theta_j\in[\ell]\}
= \prod_{\tau \in \mathcal{C}(\alpha \triangle \beta)} \sum_{c=1}^\ell  \mathds{1}[\Theta_i =c ,   \forall i \in V(\tau)].\]
Note also that for $\tau$ connected, $\sum_{c=1}^\ell \Pr\big( \Theta_i =c ,   \forall i \in V(\tau)\big) = \ell\left(\frac{\rho}{\ell}\right)^{|V(\tau)|} $ and hence,
\begin{equation}\label{eq.upperbound_PE0} \Pr(\mathcal E_0) 
=\ell^{m_{\triangle}}\left(\frac{\rho}{\ell}\right)^{|V(\alpha \triangle \beta)|}. \end{equation} 
Now, we bound 
\[\E_{\QQ_\Theta}\Bigg[
\prod_{(i,j)\in \alpha \cap \beta} \Big(1+\eta\,\mathds{1}\{\Theta_i=\Theta_j\in[\ell]\}\Big)
\ \Big|\ \mathcal E_0\Bigg],\]
where we have two cases again, i.e. $\alpha \cap \beta$ is a forest or unicyclic. Let
\[
B:=V(\alpha\triangle\beta)\cap V(\alpha\cap\beta),
\qquad b:=|B|.
\]
Under $\mathcal E_0$, every vertex in $B$ has a nonzero label, i.e.,
$\mathds{1}[\Theta_v \in [\ell]]=1 \ \forall v \in {B}.$ Write $\delta = \alpha \cap \beta$ and construct an auxiliary directed graph $\tilde{\delta}$ as follows. If there is a unicyclic component in $\delta$, direct the edges in the cycle so that this is a directed cycle in $\tilde{\delta}$ and for other edges direct the edge away from the cycle. For each tree component $\tau$, pick a root $r_\tau\in B$ and direct edges away from the root. (Note that we always pick the root vertex in $B$ since~$\alpha \neq \beta$.)
\paragraph*{Case 1: $\delta= \alpha \cap \beta$ is a forest}
Consider a directed edge $ij \in \tilde{\delta}$ (note by construction $i$ is closer to the root). If $j \in B$, then we upper bound $\mathds{1}\{\Theta_i=\Theta_j\in[\ell]\}$ by $1$. If $j  \notin B$, we upper bound $\mathds{1}\{\Theta_i=\Theta_j\in[\ell]\}$ by $\mathds{1}\{\Theta_j \in [\ell]\}$.
Consider an arbitrary tree component $\tau$ in $\delta$, and denote by $\tilde{\tau}$ its inherited directed version and by $r_\tau\in B$ the root vertex. 
Note that for any function which is a function $h$ only of the `tail' of each directed edge the product over $ij \in \tilde{\tau}$ can be rewritten as follows
\begin{align}\label{eq.only_tails_for_trees}
\prod_{ij\in \tilde{\tau} } h(j) = \prod_{j \in V(\tau)\backslash r_{\tau}} h(j) = 
\prod_{j\in (V(\tau) \cap B) \backslash r_\tau  } h(j) \prod_{j\in V(\tau) \backslash B} h(j).
\end{align}
Hence we have,
\begin{align*}
&\E_{\QQ_\Theta}\Bigg[
\prod_{ij\in \tilde{\tau} } \Big(1+\eta\,\mathds{1}\{\Theta_i=\Theta_j\in[\ell]\} \Big) \Big| \mathcal E_0 \Bigg]\\
&\le \E_{\QQ_\Theta}\Bigg[\prod_{j\in (V(\tau) \cap B) \backslash r_\tau  } \Big(1+\eta \Big) \prod_{j\in V(\tau) \backslash B} \Big(1+\eta\,\mathds{1}\{\Theta_j\in[\ell]\} \Big)\Bigg| \mathcal E_0\Bigg]\\
&= \Big(1+\eta \Big)^{|V(\tau)\cap B|-1}   \E_{\QQ_\Theta}\Bigg[ \prod_{j\in V(\tau) \backslash B} \Big(1+\eta\,\mathds{1}\{\Theta_j\in[\ell]\} \Big)\Bigg]\\
&= \Big(1+\eta \Big)^{|V(\tau)\cap B|-1}   \Big(1+\eta\rho \Big)^{|V(\tau) \backslash B|},
\end{align*}
where in the second line we can remove the conditioning since the product consists only of random variables~$\Theta_j$ for $j\notin B$. 
Thus,
\begin{align*}
 \E_{\QQ_\Theta}\Bigg[
\prod_{ij\in \tilde{\delta} } \Big(1+\eta\,\mathds{1}\{\Theta_i=\Theta_j\in[\ell]\} \Big) \Big| \mathcal E_0 \Bigg]
&\le \prod_{\tau \in \mathcal{C}(\delta)} \Big(1+\eta \Big)^{|V(\tau)\cap B|-1}   \Big(1+\eta\rho \Big)^{|V(\tau) \backslash B|}\\
&= \Big(1+\eta \Big)^{b-m_\cap}   \Big(1+\eta\rho \Big)^{|V(\alpha\cap \beta)| - b}\\
&= \Big(1+\eta \Big)^{b-m_\cap}   \Big(1+\eta\rho \Big)^{|\alpha\cap \beta| + m_\cap - b}.
\end{align*}
This, together with~\eqref{eq.in_terms_of_exp} and~\eqref{eq.upperbound_PE0} concludes the proof for this case.
\paragraph*{Case 2: $\delta=\alpha \cap \beta$ is unicyclic}
Let $\pi$ denote the unicyclic component of $\delta$. As in
\eqref{eq.only_tails_for_trees}, the contribution can be written in terms of tails of directed edges. In the present unicyclic case, however, each vertex
appears as a tail in the directed-edge product, and therefore
\begin{align}\label{eq.only_unicyclic}\notag
\prod_{ij\in \tilde{\pi} } h(j) = \prod_{j \in V(\pi)} h(j) = 
\prod_{j\in (V(\pi) \cap B) } h(j) \prod_{j\in V(\pi) \backslash B} h(j).
\end{align}
Thus we have
\begin{align*}
& \E_{\QQ_\Theta}\bigg[
\prod_{ij\in \tilde{\pi} } \Big(1+\eta\,\mathds{1}\{\Theta_i=\Theta_j\in[\ell]\} \Big) \, \Big| \, \mathcal E_0 \bigg]\\
&\le \E_{\QQ_\Theta}\bigg[\prod_{j\in (V(\pi) \cap B)   } \Big(1+\eta \Big) \prod_{j\in V(\pi) \backslash B} \Big(1+\eta\,\mathds{1}\{\Theta_j\in[\ell]\} \Big) \, \Big| \, \mathcal E_0\bigg]\\
&\leq \Big(1+\eta \Big)^{|V(\pi)\cap B|}   \Big(1+\eta\rho \Big)^{|V(\pi) \backslash B|}.
\end{align*}
Hence,
\begin{align*}
 \E_{\QQ_\Theta}\bigg[
\prod_{(i,j)\in \tilde{\delta}} \Big(1+\eta\,\mathds{1}\{\Theta_i=\Theta_j\in[\ell]\}\Big)
\ \Big|\ \mathcal E_0\bigg]\leq (1+\eta)^{b-m_\cap+1}(1+\eta \rho)^{|\alpha\cap \beta|-(b-m_\cap)-1}.
\end{align*}
As in the case of $\alpha\cap\beta$ being a forest, this with~\eqref{eq.in_terms_of_exp} and~\eqref{eq.upperbound_PE0} yields the desired bound.\end{proof}
\subsubsection{Proof of Lemma~\ref{lem:forest-completion-count}}\label{app:pf-lem-forest-count-completion}
The proof follows the forest-overlap enumeration in
\cite[Proof of Theorem~2.2(b), Case~3]{sohn2025sharp}, with the modification
that the triangle-containing component of the core has already been fixed in
our application. We include the details to make clear how the remaining forest
part is counted.
\begin{proof}
Let the remaining forest core be denoted by $F$. By assumption, $F$ has $s-s_T$ edges and $m-1$ connected components, and hence $|V(F)|=(s-s_T)+(m-1)$. By the Cayley forest bound, the number of ways to choose its vertices and realize the forest on them is at most 
\[\binom{n}{s-s_T+m-1}(s-s_T+m-1)^{(s-s_T+m-2)}\leq (en)^{(s-s_T+m-1)}.\]
Next choose the $w$ additional common vertices outside the core, and then choose the remaining vertices of $\alpha$ and $\beta$. Set
\[
u := D-s_T-(s-s_T)-(m-1)-w=D-s-m+1-w.
\]
Now choose the branch points among the already fixed core vertices. This
contributes at most $\binom{D-s_T+2}{b}$ choices. After these vertices and branch
points have been fixed, the graph $\alpha\setminus\beta$ is a forest. It has at
most $v:=D-s_T-(s-s_T)-(m-1)+b=D-s-m+1+b$ vertices. Hence, by Cayley's bound for forests, the number of possible choices
for $\alpha \setminus \beta$ is at most $(v+1)^{v-1}$, and the same holds for
$\beta \setminus \alpha$. Therefore, for fixed $(s,s_T,m,b,w)$, the number of
completions is at most
\begin{align*}
    &(en)^{s-s_T+m-1}\binom{n}{w}\binom{n}{u}^2
    \binom{D-s_T+2}{b}(v+1)^{2(v-1)}\\
    &\leq
    (en)^{s-s_T+m-1} n^w (D-s_T+2)^b
    \left(\frac{en}{u}\right)^{2u}(v+1)^{2(v-1)}.
\end{align*}
Using $(1+1/u)^{2u}\le e^2$ and $v-u=b+w$, we have\begin{align*}
    \frac{(v+1)^{2(v-1)}}{u^{2u}}\leq \left(\frac{v+1}{u}\right)^{2u}(v+1)^{2(v-u)-2}
    \leq e^{2(b+w+1)}(v+1)^{2b+2w-2}.
\end{align*}
Plugging this into the count for \((\alpha,\beta)\), and using
\(v+1\le D-s_T+2\), gives
\begin{align*}
&e^2(en)^{s-s_T+m-1+2u} n^w
(D-s_T+2)^b e^{2(b+w+1)}(v+1)^{2b+2w-2} \\
&\leq
e^4
\left(e^2(D-s_T+2)^3\right)^b
\left(\frac{(D-s_T+2)^2}{n}\right)^w
(en)^{s-s_T+m-1+2u+2w}. \end{align*}\end{proof}
\vspace{-2em}
\subsubsection{Proof of Lemma~\ref{lem:b-k-conditions}}\label{app:pf:lem:b-k-conditions}
This lemma is used to control the sum over BUG overlaps. It excludes the
case where both the overlap size $s$ and the number of branch points $b$ are small.
\begin{proof}
Assume $\alpha,\beta\in\cU_k$ and $\alpha\cap\beta\neq \varnothing$. We first consider the case where $\alpha\cap\beta$ is a forest. In this case, the overlap is purely tree-like, and the balanced-tree argument from~\cite[Proof of Theorem~2.2(b)]{sohn2025sharp} applies to the shared forest inside the BUGs. It follows that either $b\ge 2$ or $s\ge k+3$. It remains to handle the genuinely unicyclic case, where $\alpha\cap\beta$ contains a cycle.  If $\alpha\neq\beta$, then $\alpha\triangle\beta\neq\varnothing$. Since both BUGs
are connected and share the cycle, every component of $\alpha\triangle\beta$
must attach to the common core. Hence $b\ge1$. Since $\alpha$ and~$\beta$ are unicyclic, their
intersection can contain a cycle only by containing the (unique) cycle, i.e.\ the triangle of each BUG.
Write this common triangle as $\{a,u,v\}$, and let
$T_u(\alpha),T_v(\alpha)$ (resp.\ $T_u(\beta),T_v(\beta)$)
denote the two hanging $k$-edge trees attached at $u$ and $v$. Let $
B:=V(\alpha\triangle\beta)\cap V(\alpha\cap\beta)$
be the set of branch points. We have shown that $b=|B|\ge1$. If $b\ge2$, then we are done, so it remains to consider the case $b=1$. Since the two hanging trees are attached at the distinct triangle vertices $u$ and $v$, a single branch point can obstruct at most one of the two $k$-edge
hanging trees. Hence at least one of these two hanging trees contains no branch point. By symmetry, assume $V(T_u(\alpha))\cap B=\varnothing$. We claim that $T_u(\alpha)\subseteq \alpha\cap\beta$.
Indeed, suppose not. Traverse the tree $T_u(\alpha)$ starting from $u$ and moving away from the triangle,
and let $e$ be the first edge encountered that lies in $\alpha\triangle\beta$. Let $x$ be the endpoint of
$e$ closer to $u$. By choice of $e$, all edges on the path from $u$ to $x$ are shared, hence $x\in V(\alpha\cap\beta)$,
while $x$ is also incident to $e\in E(\alpha\triangle\beta)$, hence $x\in V(\alpha\triangle\beta)$. Therefore
$x\in B$, contradicting $V(T_u(\alpha))\cap B=\varnothing$. This proves $T_u(\alpha)\subseteq \alpha\cap\beta$. Consequently, $\alpha\cap\beta$ contains the three triangle edges plus all $k$ edges of $T_u(\alpha)$, and so
$
s = |\alpha\cap\beta|\ \ge\ 3+k \,.
$
Thus, whenever $\alpha\cap\beta\neq \varnothing$, either $b\ge 2$ or $s\ge k+3$.
   \end{proof}
\subsubsection*{Proofs of the Auxiliary Lemma for the Planted Dense Subgraph Upper Bound}
\subsubsection{Proof of Lemma~\ref{lem:pds-second-moment}}\label{sec:proof:lem:pds-second-moment}
This lemma is used in the second moment calculation for the BUG polynomial $f$ defined in Equation~\eqref{def:BUG_f_PDS} for PDS, analogous to Lemmas~\ref{lem:second-moment-equal} and~\ref{lem:second-moment-not-equal}.
\begin{proof}
    With $P_{\alpha\beta}$ defined as in~\eqref{eq.defn_Palphabeta}, we first split the product over edges into those edges in $\alpha \cap \beta$ and those in $\alpha \triangle \beta$, and then expand out conditioned on $\Theta$, the community assignments of the vertices. We write $\QQ_\Theta$ for the distribution of $\Theta$ under $\QQ$.
    \begin{align*}
        P_{\alpha \beta}&=\E_\QQ \prod_{(i,j)\in\alpha \cap \beta}\left(\frac{Y_{ij}-p_0}{\sqrt{p_0(1-p_0)}}\right)^2 \prod_{(i,j)\in \alpha \triangle \beta}\left(\frac{Y_{ij}-p_0}{\sqrt{p_0(1-p_0)}}\right)\\
        &=\mathbb{E}_{ \QQ_\Theta}\Bigg[
\prod_{(i,j)\in\alpha\cap\beta}
\mathbb{E}_\QQ\left[\left(\frac{Y_{ij}-p_0}{\sqrt{p_0(1-p_0)}}\right)^2\Bigg| \Theta \right] 
\prod_{(i,j)\in\alpha\triangle\beta}
\mathbb{E}_\QQ\left[\left(\frac{Y_{ij}-p_0}{\sqrt{p_0(1-p_0)}}\right)\Bigg|\Theta
\right] 
\Bigg]
\\
&=\mathbb{E}_{ \QQ_\Theta}\Bigg[
\prod_{(i,j)\in\alpha\cap\beta}\Big(1+(1-2p_0)\tilde\eta\,\mathds{1}\{\Theta_i=\Theta_j\in[\ell]\}\Big)
\prod_{(i,j)\in\alpha\triangle\beta}\Big(\ell\lambda\,\mathds{1}\{\Theta_i=\Theta_j\in[\ell]\}\Big)
\Bigg].
\end{align*}
Factoring out $(\ell\lambda)^{|\alpha\triangle\beta|}$ from the second product gives
\begin{align*}
P_{\alpha\beta}
&=(\ell\lambda)^{|\alpha\triangle\beta|}
\mathbb{E}_{ \QQ_\Theta}\!\Bigg[
\prod_{(i,j)\in\alpha\triangle\beta} \!\!\! \mathds{1}\{\Theta_i=\Theta_j\in[\ell]\} \!\!
\prod_{(i,j)\in\alpha\cap\beta}\!\!\!\Big(1+(1-2p_0)\tilde\eta\,\mathds{1}\{\Theta_i=\Theta_j\in[\ell]\}\Big)\!
\Bigg]\\
&\le
(\ell\lambda)^{|\alpha\triangle\beta|}
\mathbb{E}_{ \QQ_\Theta}\Bigg[
\prod_{(i,j)\in\alpha\triangle\beta}\mathds{1}\{\Theta_i=\Theta_j\in[\ell]\} 
\prod_{(i,j)\in\alpha\cap\beta}\Big(1+\tilde\eta\,\mathds{1}\{\Theta_i=\Theta_j\in[\ell]\}\Big)
\Bigg].
\end{align*}
 and we note that we have obtained exactly the form in~\eqref{eq:upper_bound_common_eq} and~\eqref{eq:upper_bound_common_neq} in the proofs of Lemmas~\ref{lem:second-moment-equal} and~\ref{lem:second-moment-not-equal} respectively. (This is what motivated our particular definitions of~$\lambda$ and $\tilde\eta$ on line~\eqref{eq.defn_eta_lambda_PDS}). Hence, after this step, the proof proceeds exactly as in the proofs of Lemmas~\ref{lem:second-moment-equal} and~\ref{lem:second-moment-not-equal}, and the result follows.
\end{proof}

\end{document}